\DeclareSIUnit[] \pixel{\text{px}}
\DeclareMathOperator{\rank}{rank}
\newcommand{\srot}{\mathtt{R}} % rotation in simplified notation
\newcommand{\stransl}{\mathbf{t}}  % translation vector in simplified notation
\newcommand{\D}{\mathbf{D}} % 3D tangent direction in simplified notation
\newcommand{\dir}{\mathbf{d}} % 2D tangent direction in simplified notation
\newcommand{\hide}[1]{}
\newtheorem{theorem}{Theorem}[section]
\newtheorem{corollary}[theorem]{Corollary}
\theoremstyle{definition}
\newtheorem{definition}{Definition}
\newcommand{\bl}{{\mathbf l}}
\newcommand{\bpi}{{\boldsymbol \pi}}
\newcommand{\CC}{\mathbb{C}}
\newcommand{\PP}{\mathbb{P}}
\newcommand{\LL}{\mathtt{L}}
\newcommand{\relu}[1]{\langle #1 \rangle}
\begin{document}
% Uncomment and include ieee-config.bib to control how refs appear
% \bstctlcite{MyBSTcontrol}
\setstcolor{red}
\title{Trifocal Relative Pose from Lines at Points}
\author{%
Ricardo~Fabbri,
Timothy~Duff,
Hongyi~Fan,
Margaret~H.~Regan,
David~da~C.~de~Pinho,\\
Elias~Tsigaridas,
Charles~W.~Wampler,
Jonathan~D.~Hauenstein,
Peter~J.~Giblin,\\
Benjamin~Kimia,
Anton~Leykin and
Tomas~Pajdla
\IEEEcompsocitemizethanks{
\IEEEcompsocthanksitem R.~Fabbri is with the Department of Computational
Modeling, Polytechnic Institute, Rio de Janeiro State University, Nova Friburgo,
Brazil. He is supported by UERJ Proci\^{e}ncia award, NSF IIS-1910530 and FAPERJ Jovem Cientista do
Nosso Estado E-26/201.557/2014.\protect\\
E-mail: rfabbri@gmail.com
\IEEEcompsocthanksitem A.~Leykin is at Georgia Tech and are supported by NSF DMS-1151297.
\IEEEcompsocthanksitem T.~Duff is at University of Washington.
He acknowledges support from an NSF Mathematical Sciences Postdoctoral Research Fellowship (DMS-2103310), as well as partial support from NSF DMS-1151297.
\IEEEcompsocthanksitem B.~Kimia and H.~Fan are with the School of Engineering, Brown University.
They are supported by the NSF grant IIS-1910530.
\IEEEcompsocthanksitem D.~de Pinho is with UENF, Brazil
\IEEEcompsocthanksitem P.~Giblin is with the University of Liverpool.
\IEEEcompsocthanksitem M.~Regan is with Duke University and is
supported by NSF CCF-1812746 and the Schmitt Leadership Fellowship in Science
and Engineering.
\IEEEcompsocthanksitem E.~Tsigaridas is with INRIA Paris.
\IEEEcompsocthanksitem C.~Wampler is with the University of Notre Dame.
\IEEEcompsocthanksitem J.~Hauenstein is with the University of Notre Dame and is
supported by NSF CCF-1812746 and ONR N00014-16-1-2722.
\IEEEcompsocthanksitem T.~Pajdla is with the Czech Institute of Informatics,
Robotics and Cybernetics, Czech Technical University in Prague, and is supported
by the EU Regional Development Fund IMPACT CZ.02.1.01/0.0/0.0/15 003/0000468 and
EU H2020 project ARtwin 856994.
\IEEEcompsocthanksitem This work initiated while the authors were
in residence at Brown University's Institute for Computational and Experimental
Research in Mathematics -- ICERM, in Providence, RI, during the Fall 2018 and
Spring 2019 semesters, with early versions at ar\textsc{x}iv 23Mar2019 04:26UTC
and CVPR 2020.  (NSF DMS-1439786 and the Simons Foundation grant
507536).%
}%
\thanks{Manuscript received Month Day, Year; revised Month Day, Year.}}

% note the % following the last \IEEEmembership and also \thanks - 
% these prevent an unwanted space from occurring between the last author name
% and the end of the author line. i.e., if you had this:
% 
% \author{....lastname \thanks{...} \thanks{...} }
%                     ^------------^------------^----Do not want these spaces!
%
% a space would be appended to the last name and could cause every name on that
% line to be shifted left slightly. This is one of those "LaTeX things". For
% instance, "\textbf{A} \textbf{B}" will typeset as "A B" not "AB". To get
% "AB" then you have to do: "\textbf{A}\textbf{B}"
% \thanks is no different in this regard, so shield the last } of each \thanks
% that ends a line with a % and do not let a space in before the next \thanks.
% Spaces after \IEEEmembership other than the last one are OK (and needed) as
% you are supposed to have spaces between the names. For what it is worth,
% this is a minor point as most people would not even notice if the said evil
% space somehow managed to creep in.

% The paper headers
\markboth{Pattern Analysis and Machine Intelligence. \ \ \ \ \ \ \ \ \ \ Accepted version Copyright \copyright\ IEEE.}%
{Accepted}
% The only time the second header will appear is for the odd numbered pages
% after the title page when using the twoside option.
% 
% *** Note that you probably will NOT want to include the author's ***
% *** name in the headers of peer review papers.                   ***
% You can use \ifCLASSOPTIONpeerreview for conditional compilation here if
% you desire.

% The publisher's ID mark at the bottom of the page is less important with
% Computer Society journal papers as those publications place the marks
% outside of the main text columns and, therefore, unlike regular IEEE
% journals, the available text space is not reduced by their presence.
% If you want to put a publisher's ID mark on the page you can do it like
% this:
%\IEEEpubid{0000--0000/00\$00.00~\copyright~2015 IEEE}
% or like this to get the Computer Society new two part style.
%\IEEEpubid{\makebox[\columnwidth]{\hfill 0000--0000/00/\$00.00~\copyright~2015 IEEE}%
%\hspace{\columnsep}\makebox[\columnwidth]{Published by the IEEE Computer Society\hfill}}
% Remember, if you use this you must call \IEEEpubidadjcol in the second
% column for its text to clear the IEEEpubid mark (Computer Society jorunal
% papers don't need this extra clearance.)

% use for special paper notices
%\IEEEspecialpapernotice{(Invited Paper)}

% for Computer Society papers, we must declare the abstract and index terms
% PRIOR to the title within the \IEEEtitleabstractindextext IEEEtran
% command as these need to go into the title area created by \maketitle.
% As a general rule, do not put math, special symbols or citations
% in the abstract or keywords.
\IEEEtitleabstractindextext{%
\begin{abstract}
We present a method for solving two minimal problems for relative camera pose
estimation from three views, which are based on three view correspondences of
({\em i}) three points and one line and the novel case of ({\em ii}) three points and two lines
through two of the points. These problems are too difficult to be efficiently
solved by the state of the art Gr\"obner basis methods. Our method is based on a
new efficient homotopy continuation (HC) solver framework MINUS, which dramatically speeds up
previous \hc\ solving by specializing \hc\ methods to generic cases of our problems.
We characterize their number of solutions and show with simulated experiments
that our solvers are numerically robust and stable under image noise, a key
contribution given the borderline intractable degree of nonlinearity of
trinocular constraints. We show in real experiments that ({\em i}) \sift\ feature
location and orientation provide good enough point-and-line correspondences for
three-view reconstruction and ({\em ii}) that we can solve difficult cases with
too few or too noisy tentative matches, where the state of the art structure
from motion initialization fails.
\end{abstract}}
% Note that keywords are not normally used for peerreview papers.
%\begin{IEEEkeywords}
%Pose Estimation, Camera Resectioning, Differential Geometry
%\end{IEEEkeywords}}

% PAMI -------------------------------------------------------------------------
% For peer review papers, you can put extra information on the cover
% page as needed:
% \ifCLASSOPTIONpeerreview
% \begin{center} \bfseries EDICS Category: 3-BBND \end{center}
% \fi
%
% For peerreview papers, this IEEEtran command inserts a page break and
% creates the second title. It will be ignored for other modes.
% \IEEEpeerreviewmaketitle
% !PAMI-------------------------------------------------------------------------

\maketitle

\IEEEdisplaynontitleabstractindextext
% For peer review papers, you can put extra information on the cover
% page as needed:
% \ifCLASSOPTIONpeerreview
% \begin{center} \bfseries EDICS Category: 3-BBND \end{center}
% \fi
%
% For peerreview papers, this IEEEtran command inserts a page break and
% creates the second title. It will be ignored for other modes.
\IEEEpeerreviewmaketitle

%\tableofcontents
%\mynewpage
%\listoffigures
%\onehalfspacing

%
%%%%%%%%%%%%%%%%%%%%%%%%%%%%%%%%%%%%%%%%%%%%%%%%%%%%%%%%%%%%%%%%%%%%%%%%%%%%%%%%%%%%%%%%%%%%%%%%
%%%%%%%%%%%%%%%%%%%%%%%%%%%%% INTRODUCTION %%%%%%%%%%%%%%%%%%%%%%%%%%%%%%%%%%%%%%%%%%%%%%%%%%%%%
%%%%%%%%%%%%%%%%%%%%%%%%%%%%%%%%%%%%%%%%%%%%%%%%%%%%%%%%%%%%%%%%%%%%%%%%%%%%%%%%%%%%%%%%%%%%%%%%
\IEEEraisesectionheading{\section{Introduction}\label{sec:intro}}
% Computer Society journal (but not conference!) papers do something unusual
% with the very first section heading (almost always called "Introduction").
% They place it ABOVE the main text! IEEEtran.cls does not automatically do
% this for you, but you can achieve this effect with the provided
% \IEEEraisesectionheading{} command. Note the need to keep any \label that
% is to refer to the section immediately after \section in the above as
% \IEEEraisesectionheading puts \section within a raised box.

% The very first letter is a 2 line initial drop letter followed
% by the rest of the first word in caps (small caps for compsoc).
% 
% form to use if the first word consists of a single letter:
% \IEEEPARstart{A}{demo} file is ....
% 
% form to use if you need the single drop letter followed by
% normal text (unknown if ever used by the IEEE):
% \IEEEPARstart{A}{}demo file is ....
% 
% Some journals put the first two words in caps:
% \IEEEPARstart{T}{his demo} file is ....
\IEEEPARstart{S}{cientific} research on \textsc{3d} reconstruction from multiple views has made an impact~\cite{AppleKeynote:2018} by mostly relying
%\IEEEPARstart{3D}{reconstruction} has made an impact~\cite{AppleKeynote:2018} by mostly relying
on points in Structure from Motion
(\sfm)~\cite{Argarwal:Snavely:etal:ICCV09,schoenberger2016sfm,Furukawa:Ponce:PAMI2010,Nister04visualodometry}.
Still, even production-quality \sfm\ technology fails~\cite{AppleKeynote:2018}
when the images contain ({\em i}) large homogeneous areas with few or no
features; ({\em ii}) repeated textures, like brick walls, giving rise to a large
number of ambiguously correlated features; ({\em iii}) blurred areas, arising
from moving cameras or objects; ({\em iv}) large scale changes where the overlap
is not sufficiently significant; or ({\em v}) multiple and independently moving
objects each lacking a sufficient number of features.%
\begin{figure}[t]
\includegraphics[width=0.47\linewidth]{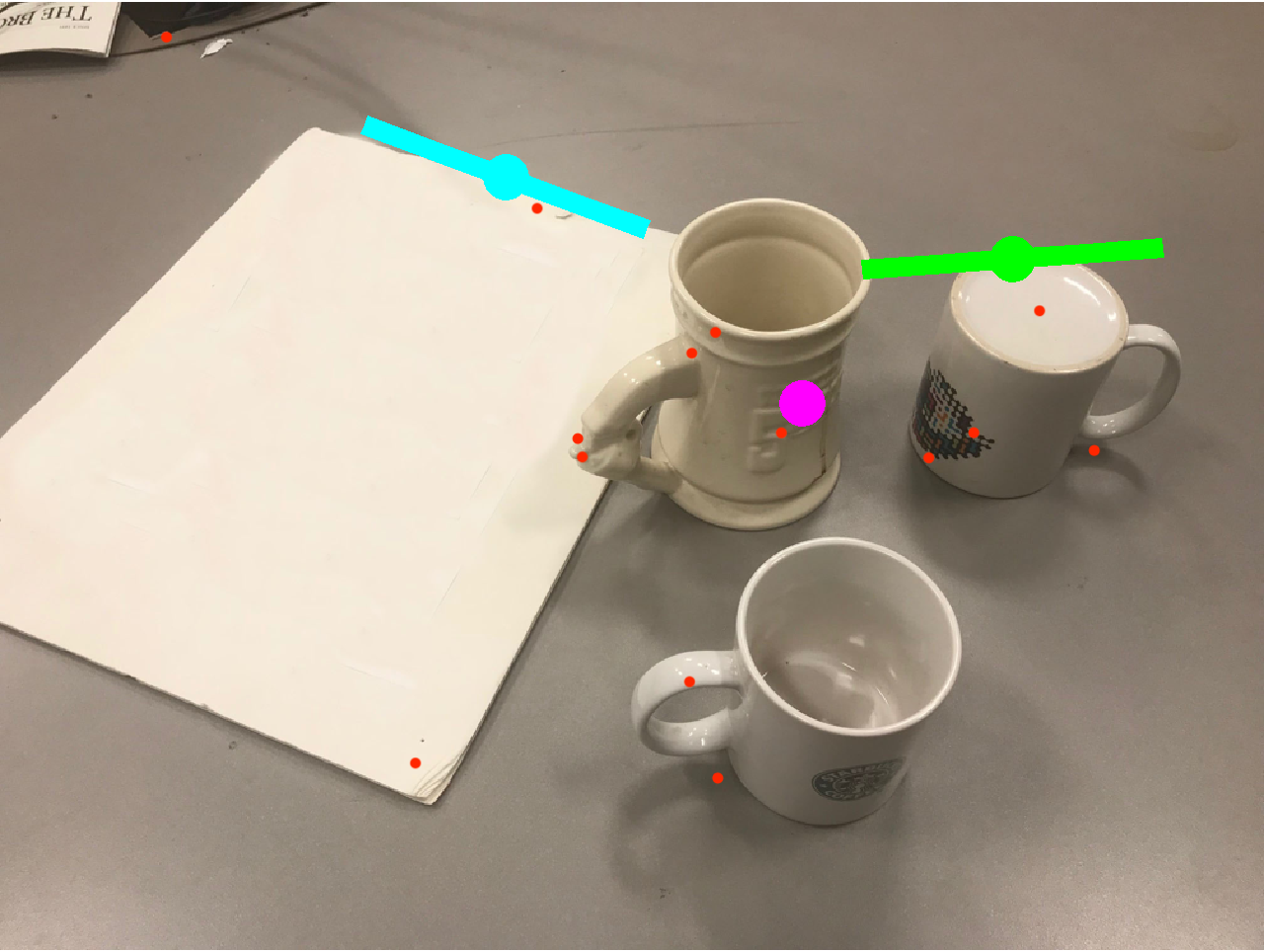}\hspace{0.45mm}%
\includegraphics[width=0.47\linewidth]{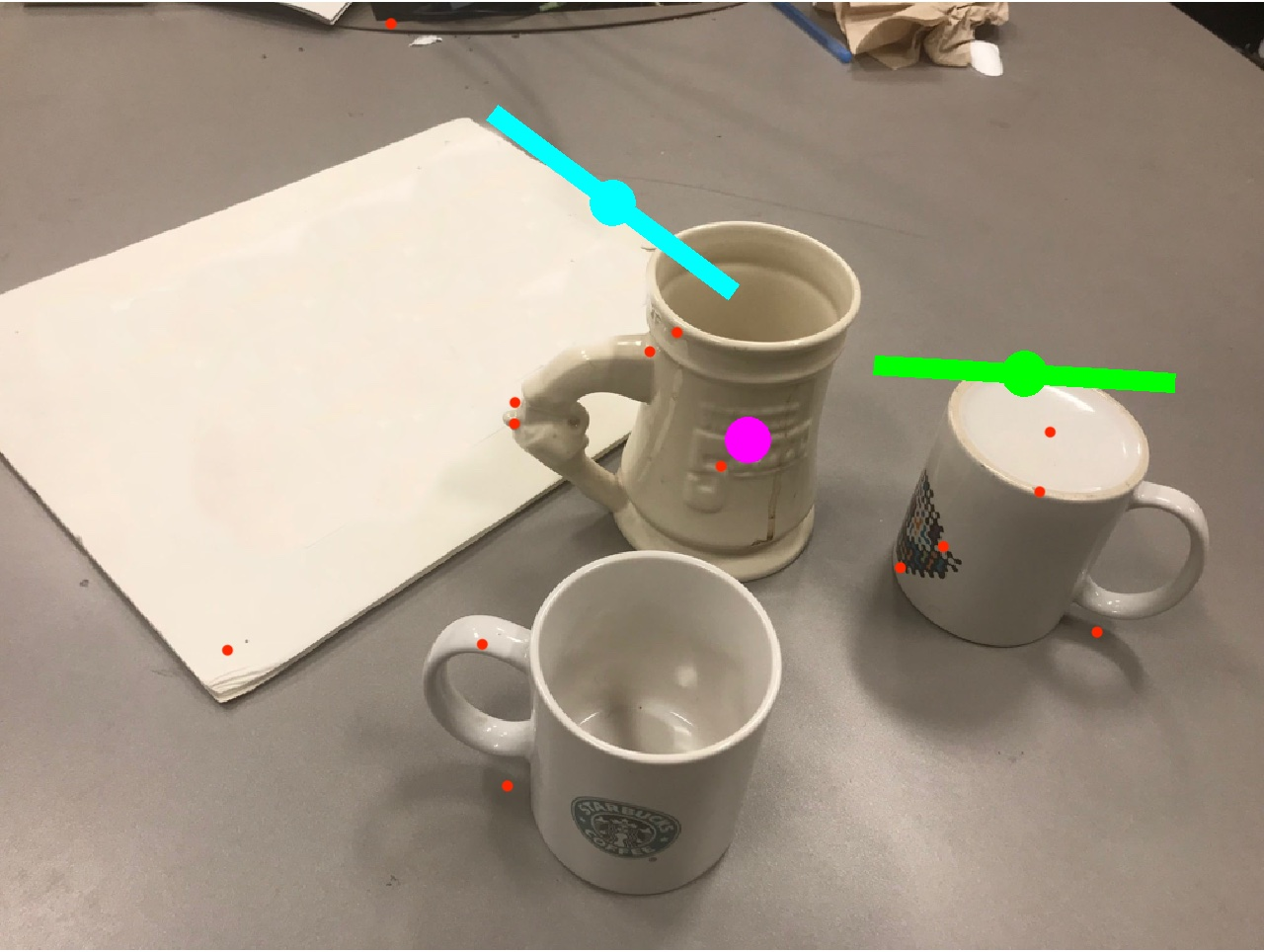}
  \includegraphics[width=0.47\linewidth]{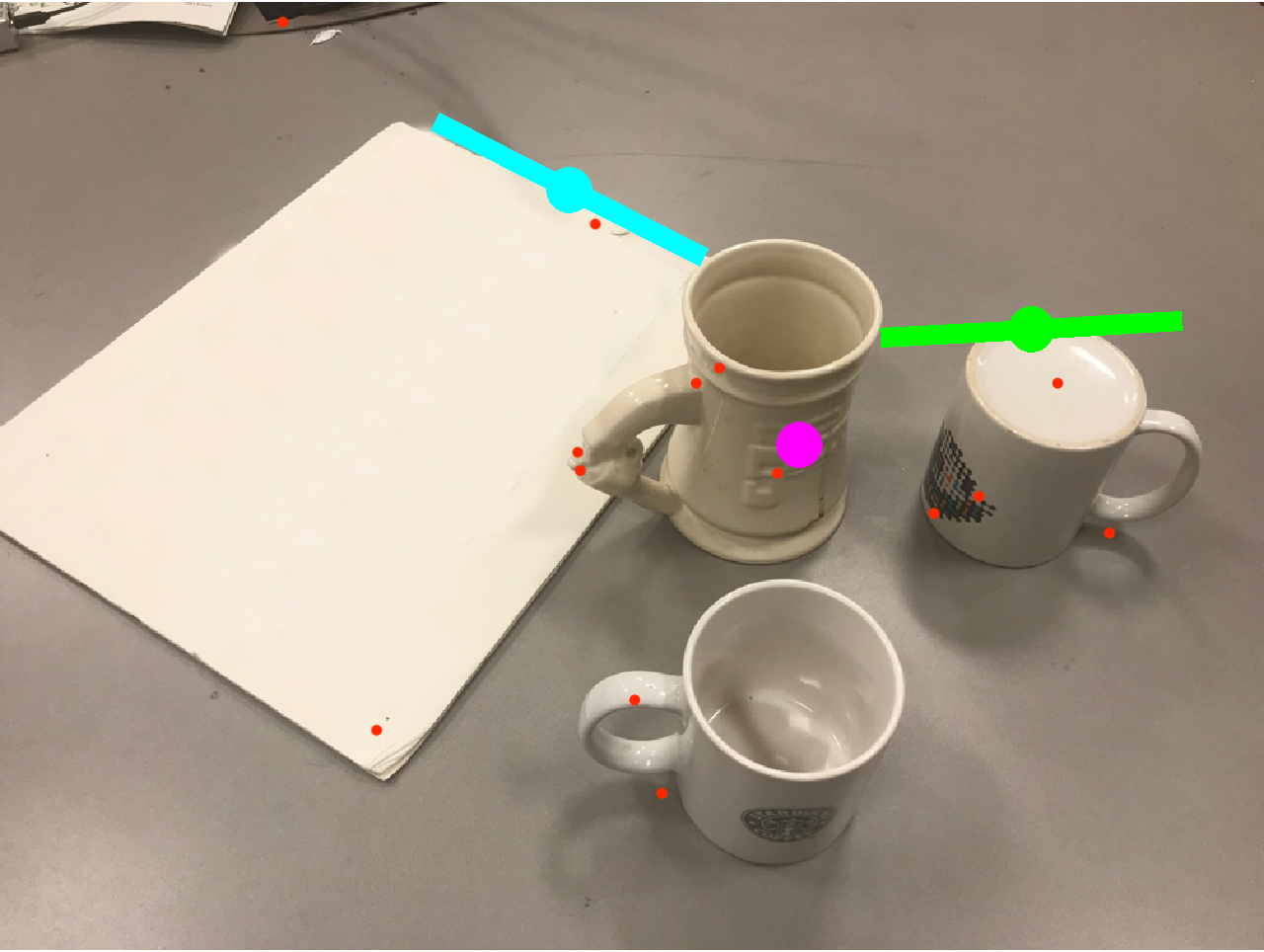}
  \includegraphics[width=0.47\linewidth]{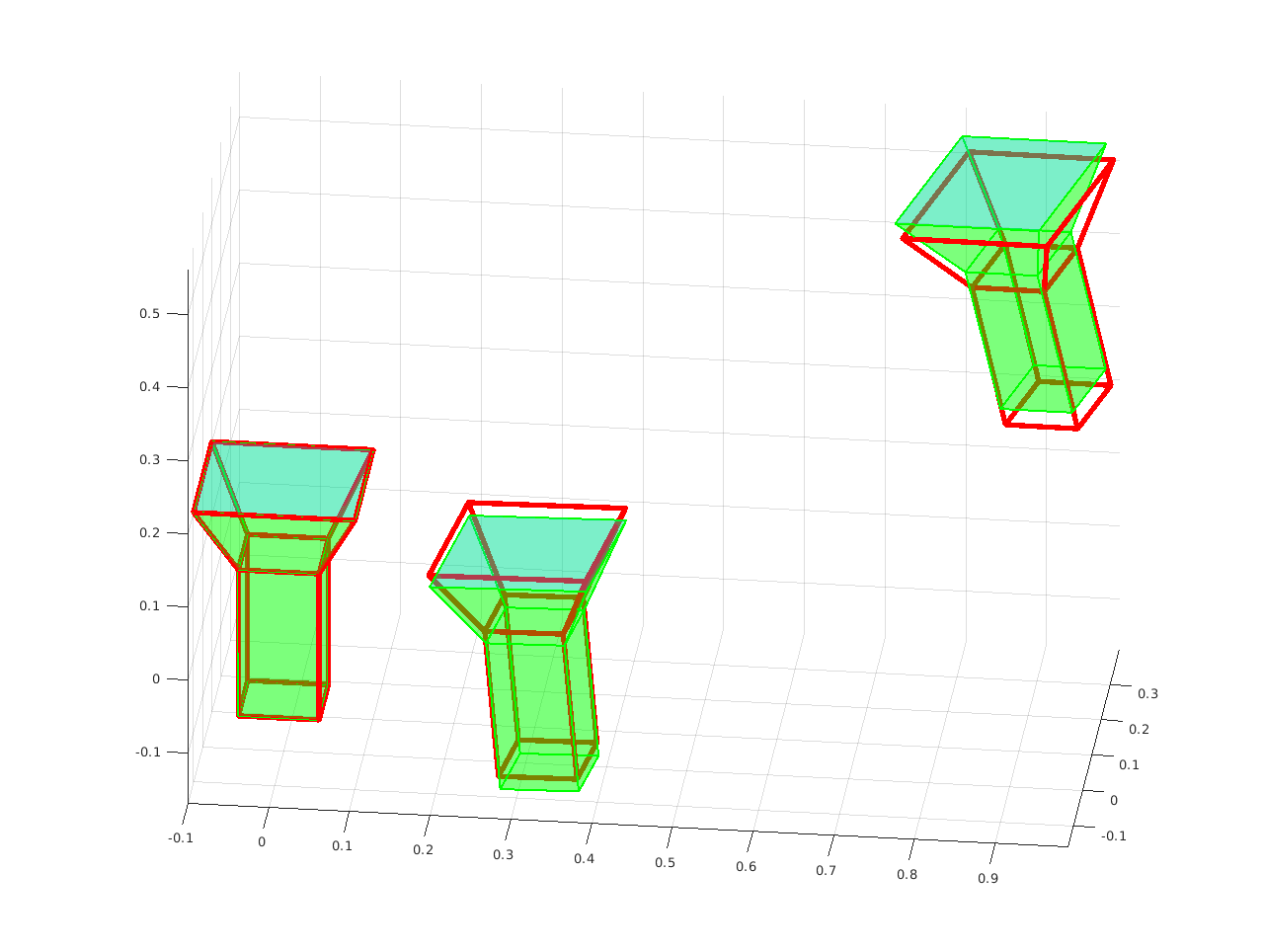}
  \caption{A deficiency of the traditional two-view approach to bootstraping
    \sfm: not enough features detected (red dots) and thus a \textsc{sota}
    \sfm\ pipeline \textsc{colmap} fails to reconstruct the
    relative camera pose. In contrast, the proposed trinocular method requires only
three features corresponding in three views: two point-tangents (points with \sift\
orientation, green and cyan) and one point without
orientation (purple, also \sift). Red cameras
are computed by our approach and green is ground truth.}
  \label{fig:teaser}
\end{figure}%

The failure of bifocal pose estimation using \ransac\ on hypothesized
correspondences, \textit{e.g.,} using 5 points~\cite{Nister:PAMI04}, is
highlighted in a dataset of images of mugs, Fig.~\ref{fig:teaser} (similar to
the dataset in~\cite{nurutdinova2015towards} but without a calibration board),
for which the failure rate using the standard \sfm\ pipeline \colmap~[63] is 75\%.
The failure of just directly applying the 5-point algorithm in this example is
even higher.  A similar situation exists for images containing repeated patterns
where there are plenty of features, but determining correspondences is
challenging. Most traditional multiview pipelines estimate the relative pose of
the two best views and then register the remaining views using a \textsc{p3p}
algorithm~\cite{Argarwal:Snavely:etal:ICCV09}, for robustness. The focus of
this paper is to address the failure of traditional bifocal algorithms
in such cases, while tackling strategic problems that have long-term potential
for
breakthrough for a myriad of other minimal problems we jointly discovered and
tackled~\cite{Duff:Kohn:Leykin:Pajdla:PLMP:2019,Duff:Korotynskiy:Pajdla:Regan:Arxiv2021,Fabbri:NSF:Highlight2020,Fabbri:Kimia:NSF:Grant},
and in the case of curve features for \sfm\ which critically depend on trifocal
geometry~\cite{Fabbri:Kimia:IJCV2016,Fabbri:Giblin:Kimia:PAMI19,Fabbri:Giblin:Kimia:ECCV12,Fabbri:PHD:2010,Schmid:Zisserman:2000}.

The failure of bifocal algorithms motivates the use of ({\em i}) more complex
features, \textit{i.e.,} having additional attributes and ({\em ii}) more
diverse features (partial visibilitiy also helps in robustness, see~\cite{Vasconcelos:etal:PAMI2017}). We propose that {\em orientation} (in the sense of
inclination) is a key attribute to disambiguate correspondences and we show that
\sift\ orientation in particular is a stable feature across views for trifocal
pose estimation. Orientation can also arise from curve
tangents~\cite{Fabbri:Giblin:Kimia:ECCV12,Fabbri:PHD:2010,Barath:Kukelova:ICCV2019},
and the {\em orientation} of a straight line in multiple views also constrains
pose. Observe, however, that orientation cannot be constrained in two views
alone: \sift\ orientation or line orientations in two views are uncorrelated, but
together can identify their \textsc{3d} counterparts and thus can constrain orientation
in a third view. This motivates {\em trinocular pose estimation} based on point
features endowed with orientation or including straight line
features~\cite{Fabbri:Kimia:IJCV2016,Fabbri:Giblin:Kimia:PAMI19,Fabbri:Giblin:Kimia:ECCV12}. 
% The solutions of these concrete cases also have potential for future use as local versions of pose from general curves~\cite{Fabbri:Giblin:Kimia:ECCV12,Fabbri:PHD:2010}, Figure~\ref{fig:trifocal:pose}, to tackle further challenges~\cite{Nurutdinova:Fitzgibbon:ICCV2015}.  
% \begin{figure}
%   \centering
%   \includegraphics[width=0.8\linewidth]{figs/trifocal-pose-from-curves-tangents.pdf}
%   \caption{
%     Determining trifocal pose from curves in the future can be locally encoded in three
%     coresponding points with tangent information (\textit{e.g.,} edgels)~\cite{Fabbri:PHD:2010}, but we show this
%     is overconstrained. Dropping one tangent gives the
%     \emph{Chicago} minimal problem, or considering only the three points and one
%     free line gives \emph{Cleveland}, both solved here.
%   }
%   \label{fig:trifocal:pose}
% \end{figure}

Camera estimation from trifocal tensors is long believed to augment two-view pose
estimation~\cite{Faugeras:Luong:Book,Trager:Hebert:Ponce:2019}. 
Although no significant improvements over bifocal pairwise
estimation have been documented~\cite{Julia:Monasse:SIVT2017}, recent work
reiterate the advantages of
well-crafted trifocal algorithms for relevant near-degenerate configurations
such as approximately collinear camera centers~\cite{Trager:Hebert:Ponce:2019,Ponce-IJCV-2016}. The calibrated trinocular relative
pose estimation from four points, \textsc{3v4p}, is notably difficult to solve~\cite{Nister:Schaffalitzky:IJCV2006,Quan:Triggs:Mourrain:JMIV06,Quan:Triggs:Mourrain:Ammeller:2003bUnpublished,Fabbri:PHD:2010}, and is not a minimal problem -- it is over-constrained. The first working trifocal solver~\cite{Nister:Schaffalitzky:IJCV2006} effectively
parametrizes the relative pose between two cameras as a curve of
degree ten representing possible epipoles. A third view is then used to select the epipole that minimizes reprojection errors. In this sense, trinocular pose estimation has not
truly been tackled as a minimal problem. 

Trifocal pose estimation requires the determination of 11 degrees of freedom:
six unknowns for each pair of rotation $\srot$ and translation $\stransl$, less
one for metric ambiguity. Three types of constraints arise in matching
triplets of point features endowed with orientation. First, the epipolar
constraint provides an equation for each pair of correspondences in two views.
Second, in a triplet of correspondences, each pair of correspondences are
required to match scale, providing another constraint; a total
of three equations per triplet. It is easy to see, informally, that three points
are insufficient to determine trifocal pose, while four points are too many.
Third, each triplet of oriented feature points provides one orientation
constraint. Thus, with three points, only two points need to be endowed with
orientation, giving a total of 11 actual constraints for the 11 unknowns. We refer to
this novel problem of three triplets of corresponding points, with two of the points
having oriented features as “{\em Chicago}”, which evolved out
of the work by Fabbri, Giblin and Kimia on absolute pose estimation from two
points endowed with tangents~\cite{Fabbri:Giblin:Kimia:PAMI19, Fabbri:Giblin:Kimia:ECCV12}. In
the second scenario, {\em i.e.}, using straight lines as features, with three
points, only one free (unattached to a point) straight line feature is required.
We refer to the problem of three triplets of corresponding points and one
triplet of corresponding free lines as “{\em Cleveland}.” This paper addresses
trifocal pose estimation for the above two scenarios, shows that both are
minimal problems, and develops efficient solvers for the resulting polynomial
systems.

Specifically, each problem comprises eleven trifocal constraints that
\emph{in principle} give systems of eleven polynomials in eleven unknowns.  These systems
are not trivial to solve and require techniques from numerical algebraic
geometry~\cite{BertiniBook,duff2018solving,NAG4M2} \textit{(i)} to probe whether
the system is over or under constrained or otherwise minimal; \textit{(ii)} to
understand the range of the number of real solutions and estimate a \emph{tight}
upper bound; and \textit{(iii)} to develop efficient and practically relevant
methods for finding solutions which are real and represent camera
configurations. This paper shows that the Chicago problem is minimal and has up
to 312 solutions (the area code of Chicago) of which typically 3-4 end up
becoming relevant to camera configurations. Similarly, we show that the
Cleveland problem is minimal and has up to 216 solutions (the area code of
Cleveland). The minimality
of combinations of points and lines for the general
case~\cite{duff2019plmp} is a parallel development to the more concrete treatment presented here.

The numerical solution of polynomial systems with several hundred
solutions is challenging. We devised a custom-optimized {\em Homotopy
Continuation (HC)} framework MINUS which iteratively tracks solutions with a guarantee
of global convergence~\cite{duff2018solving}. Our framework specializes the
general \hc\ approach to minimal problems typical of multiple view geometry,
thereby dramatically speeding up the implementation. Specifically, our Chicago
and Cleveland solvers are not only the first solvers for such high degree
problems, but are orders of magnitude faster than solvers for such scale of
problems: 660ms on average on an Intel core i7-7920HQ processor with four
threads. They share the same generic core procedure with plenty of room to be
further optimized for specific applications.  Most significantly, since finding each solution is a
completely independent integration path from the others, the solvers are suitable
for implementation on a \textsc{gpu}, as a batch for \ransac, which would then reduce the
run time by the number of tracks, \textit{i.e.,} by two orders of magnitude. We
hope that our developments can be a template for solving other computer vision
problems involving systems of polynomials with a large number of solutions, and
in fact the provided \textsc{c}++ framework is fully templated to include new
minimal problems seamlessly.

Experiments are initially reported on complex synthetic data to demonstrate that
the system is robust and stable under spatial and orientation noise and under a
significant level of outliers. Experiments on real data first demonstrates that
\sift\ orientation is a remarkably stable cue over a wide variation in view. We
then show that our approach is successful in all cases where the traditional
\sfm\ pipeline succeeds, but of course at higher computational cost. What is
critically important is that the proposed approach succeeds in many other cases
where the \sfm\ pipeline fails, \textit{e.g.,} on the
\textsc{epfl}~\cite{Strecha:etal:CVPR08} and Amsterdam Teahouse
datasets~\cite{Usumezbas:Fabbri:Kimia:ECCV16}, Fig.~\ref{fig:realData}
and~\ref{fig:fail}. Those cases where the bifocal scheme fails -- flagged by the
number of inliers, for example -- can consider the application of a currently
more expensive but more capable trifocal scheme to allow for reconstructions
that would otherwise be unsolved. 
%
%%%%%%%%%%%%%%%%%%%%%%%%%%%%%%%%%%%%%%%%%%%%%%%%%%%%%%%%
\subsection{Literature Review}
\noindent \textbf{Trifocal Geometry.}
Calibrated trifocal geometry estimation is a hard problem~\cite{Nister:Schaffalitzky:IJCV2006,Quan:Triggs:Mourrain:JMIV06,Quan:Triggs:Mourrain:Ammeller:2003bUnpublished,Rodehorst:ICACS2015}.
There are no publicly available solvers we are aware of. The state of the
art solver~\cite{Nister:Schaffalitzky:IJCV2006}, based on four corresponding
points (\textsc{3v4p}), has not yet found many practical applications~\cite{Kuang:Astrom:etal:ICPR2014}.
A solver for a relaxed version of this problem has been recently made available by
our coauthors based on techniques originated in the present paper~\cite{Hruby:Duff:Leykin:Pajdla:CVPR2022}.

%Relevant previous work on estimation of trifocal geometry can be divided into
%calibrated, uncalibrated, or partially calibrated, with the uncalibrated
%(projective) case being more common. The correspondence data can
%be all point correspondences, all line correspondences, or, less commonly,
%mixed.  These correspondences typically appear in all views.  More complex
%features such as conics, higher order algebraic curves, or points with incident
%lines, or curvature measurements can also be employed, but is yet
%under-explored.

For the projective case, 6 points are needed~\cite{Heyden:ICCV1995}, and
Larsson \textit{et al.}\ solved the longstanding trifocal minimal problem using 9
lines~\cite{larsson2017efficient}. The case of mixed
points and lines is less common~\cite{Oskarsson:etal:IVC2004}, but has seen a
growing interest in related
problems~\cite{Salaun:etal:ECCV2016,Quadir:Neubert:2017,Vakitov:etal:ECCV2018}. 
The calibrated cases beyond \textsc{3v4p} are largely unsolved, spurring 
sophisticated theoretical
work~\cite{Aholt:Oeding:MathComp2014,Alzati:Tortora:JMIV2010,Kileel:SIAMJAAG2017,Leonardos:Tron:Daniilidis:CVPR2015,Martyushev:JMIV:2017,Mathews:Arxiv:AG2016,Oeding:Quadfocal:Arxiv2015}.
Kileel~\cite{Kileel:SIAMJAAG2017}
studied minimal problems in this setting, such as the
Cleveland problem solved in the present paper, and reported 
studies using \hc. He stated 
that the \emph{full} set of ideal generators, \ie, a
set of polynomial equations provably necessary and sufficient to describe 
calibrated trifocal geometry, was unknown.

Seminal works used curves and edges in three views to transfer differential
geometry for matching~\cite{Ayache:Lustman:1987,Robert:Faugeras:1991}, and
for pose and trifocal tensor
estimation~\cite{Giblin:Motion:Book,Schmid:Zisserman:2000}, beyond 
straight lines for uncalibrated~\cite{Hartley:Zisserman:multiple:view,Bartoli:Sturm:CVIU2005}
and calibrated~\cite{Salaun:etal:3DV2017,Salaun:etal:ECCV2016} \sfm.
Point-tangents -- not to be confused with point-rays~\cite{Camposeco:Pollefeys:etal:ECCV2016} -- can be framed as
\emph{quivers} (1-quivers), or feature points with attributed
directions (\textit{e.g.,} corners), initially proposed in the context of uncalibrated
trifocal geometry but de-emphasizing the connection to tangents to
curves~\cite{Johansson:Oskarson:Astrom:2002,Zhao-PAMI-2019}. 
Point-tangent fields can be framed as vector fields, so related
technology applies to surface-induced correspondence data~\cite{Fabbri:PHD:2010}.
In the calibrated setting, point-tangents were first used for absolute pose estimation
by Fabbri \etal~\cite{Fabbri:Giblin:Kimia:ECCV12,Fabbri:Giblin:Kimia:PAMI19},
from only two points, later relaxed
for unknown focal length~\cite{Kuang:Astrom:ICCV2013}. The trifocal problem
with three point-tangents as a local version of trifocal pose for global curves was
first formulated by Fabbri~\cite{Fabbri:PHD:2010}, presented here as a
minimal version codenamed Chicago.\\[1em]
\noindent \textbf{Homotopy Continuation.} 
The basic theory of polynomial \hc~\cite{BertiniBook,MorganBook,SWbook} was developed in 1976,
and guarantees algorithms that are \emph{globally} convergent with probability one
from given start solutions.  A number of general-purpose \hc\ softwares have
considerably evolved over the past decade~\cite{BertiniBook,HOM4PS3,NAG4M2,PHCpack}. The computer vision
community has used \hc\ most notably in the nineties for \textsc{3d}
vision of curves and surfaces for tasks such as computing \textsc{3d} line drawings from surface
intersections, finding the stable singularities of a \textsc{3d} line drawing under
projections, computing occluding contours, stable poses, hidden line removal by
continuation from singularitities, aspect graphs, self-calibration, and pose
estimation~\cite{Bruckstein:jvcir94,Faugeras:Luong:Maybank:ECCV1992,Holt:Netravali:IMA:1994,Holt:Netravali:JVCIP:1992,Holt:etal:SPIECurvesandSurfaces:1990,Kriegman:Ponce:CurvesSurfaces1991,Kriegmann:Ponce:SPIECurvesAndSurfaces:1992,Luong:thesis:1992,Maybank:Faugeras:IJCV1992,Petitjean:JMIV1999,Petitjean:Ponce:Kriegman:IJCV92,Pollefeys:VanGool:PAMI1999},
as well as for \textsc{mrf}s~\cite{Bruckstein:jvcir94,Nanda:etal:ISCAS94},
and in more recent work~\cite{Ecker:jepson:CVPR2010,HRadaptive,Salzmann:CVPR2013}.
An implementation of the early continuation solver of Kriegman and
Ponce~\cite{Kriegman:Ponce:CurvesSurfaces1991} by Pollefeys is still widely
available for low degree systems~\cite{Pollefeys:VXLContinuation}.

As an early example, \hc\ was used to find an early
bound of 600 solutions to trifocal pose with 6 lines~\cite{Holt:Netravali:IMA:1994}.
In the vision community \hc\
is mostly used as an offline tool to carry out studies of a problem before crafting
a symbolic solver. Kasten~\etal~\cite{Kasten:Galun:Basri:Arxiv2019} recently compared
a general purpose \hc\ solver~\cite{PHCpack} against their symbolic solver. However, their problem
is one order of magnitude lower degree than the ones presented here, and the
\hc\ technique chosen for our solver~\cite{duff2018solving} is more specific than their use of polyhedral homotopy, in the sense that fewer paths are tracked~(\cf.~the start system hierarchy in~\cite{SWbook}).

\section{Two Trifocal Minimal Problems}\label{sec:trifocal:problems}
We formulate a new minimal problem for points and incident lines in three
views, codenamed \emph{Chicago}. We present its fundamental equations in explicit
parametric form that shed light
on the geometric properties relevant to vision, as well
as a more specific set of equations with 14 unknowns used in our
best-performing solver \minus. 
While we focus on the Chicago problem, our formulations, analysis and solver
framework generalize to important similar problems, and has lead to companion
work by our coauthors~\cite{duff2019plmp}.
To illustrate this, we present a second trifocal problem for
points and a free line, codenamed \emph{Cleveland}.
The formulation, characterization and practical solver approach for Cleveland, in
direct analogy to Chicago, are also a contribution of this paper.
Specific details on Cleveland are left for the appendix, since our focus is on Chicago and the analysis
is analogous. %Moreover, 
%contrasting these problems elucidates the use of incident lines \emph{vs.}\ a free
%line for relative pose.

\subsection{Formulation and Notation}
\noindent We follow notational style from Hartley and Zissermann~\cite{Hartley:Zisserman:multiple:view}
with explicit projective scales. A more elaborate
notation~\cite{Giblin:Motion:Book,Fabbri:Giblin:Kimia:ECCV12} can be used to
express the equations in terms of tangents to curves and derivatives of relevant quantities such as depth.
Fig.~\ref{fig:trifocal:pajdla:notation} illustrates the notation for a single
feature consisting of a point and an incident line in three views.
Symbols may be given two subscripts $p,v=1,2,3$ to index multiple
feature points and views, respectively; indices $p$ may be omitted for brevity.
\begin{figure}
  \centering
  \includegraphics[width=\linewidth]{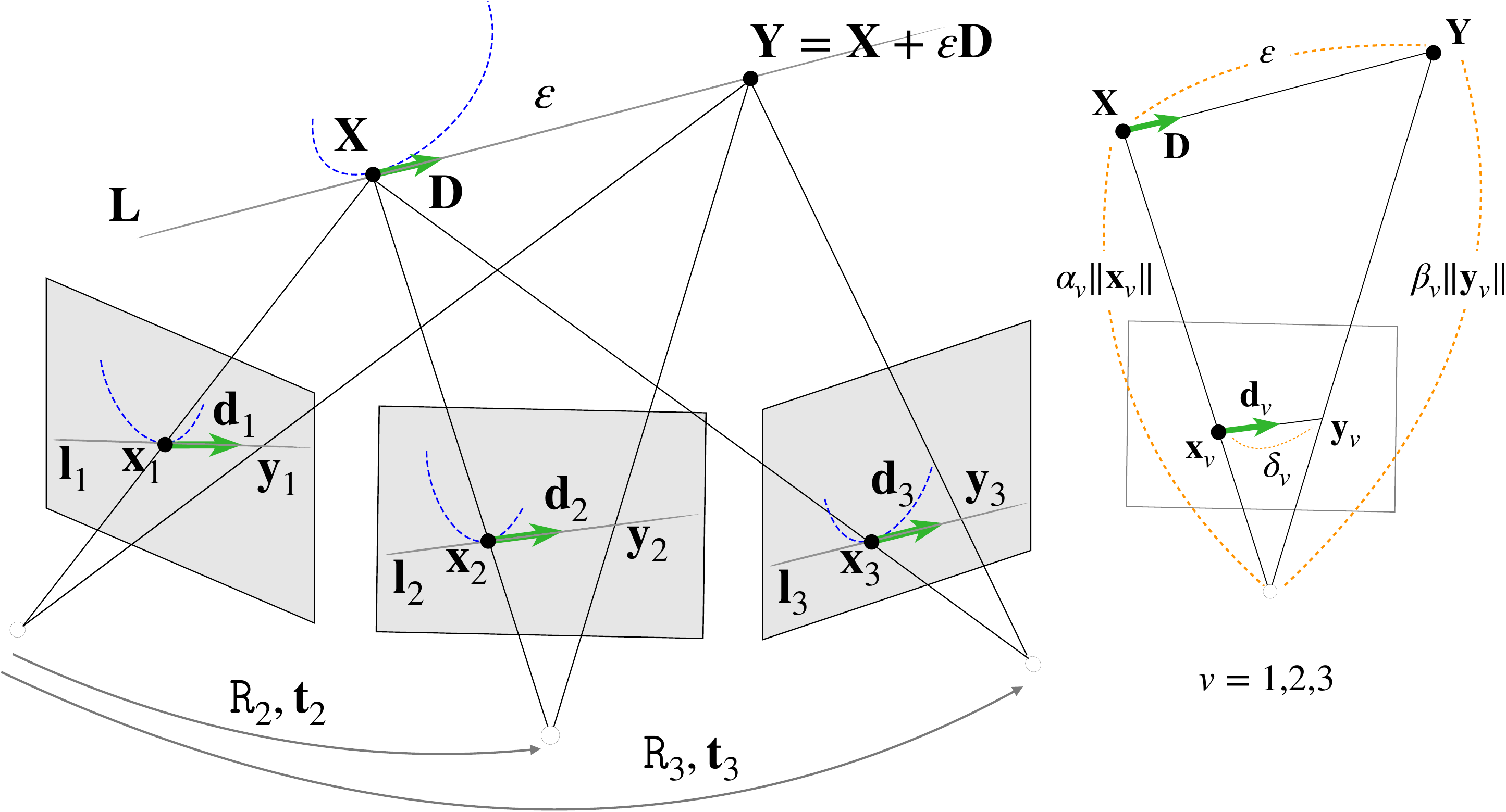}
  \caption{Notation illustrated for a single point with a curve tangent
    vector or feature orientation, \eg, \sift. Multiple 
  features may be explicitly indexed with an additional first subscript.} 
  \label{fig:trifocal:pajdla:notation}
\end{figure} 

Let $\srot_v,\stransl_v$ denote the
rotation and translation transforming coordinates from camera $1$ to camera
$v$ (so that $\srot_1$ is identity and $\stransl_1 = 0$).
Symbols $\X_p$ and $\Y_p$
denote inhomogeneous coordinates of \textsc{3d} points, and $\x_{pv},\y_{pv}$ 
homogeneous coordinates of their respective projections on $\PP^2$ at view $v$, 
with $\alpha_{pv}$,\,$\beta_{pv}$ their respective depths. 
Let $\bl_{pv}$ and
$\mathbf L_{p}$ denote column vectors of homogeneous coordinates of image lines and
underlying \textsc{3d} lines in $(\PP^2)^\vee$ and $(\PP^3)^\vee$, resp.
We use both parametric and homogeneous equations for lines,
the latter obtained by eliminating the line parameter from the former.
Symbol $\dir_{pv}$ represents a line
direction or unit curve tangent vector in homogeneous
coordinates at view $v$ (point at infinity, \ie, third coordinate is zero); and $\D_p$ is the underlying 
\textsc{3d} line direction or space curve tangent in
inhomogeneous world coordinates. Displacements $\varepsilon_p$ 
along $\D_p$ correspond to displacements $\delta_{pv}$ along
$\dir_{pv}$. Let $\boldsymbol \pi_{pv}$ denote the homogeneous coordinates
of the backprojection plane in $(\PP^3)^\vee$ of $\bl_{pv}$.
For simplicity, we use concrete coordinate representations even in
coordinate-independent statements. By default, all coordinates are assumed
real and without the action of intrinsic parameters.

%\todo{perhaps backprojected planes $\pi$ here}

\begin{definition}[Chicago
  trifocal problem]
  Given three corresponding points $\x_{1v}, \x_{2v}, \x_{3v}$ and two lines $\bl_{1v}$, $\bl_{2v}$ in views $v =
  1,2,3$, such that $\bl_{pv}$ meets $\x_{pv}$ for $p = 1,2$ and $v = 1,2,3$,
  compute relative pose $\srot_2,\srot_3,\stransl_2,\stransl_3$.
\end{definition}
\noindent\textbf{Examples of data for Chicago:} 1) Three oriented features (\eg, \sift) corresponding across three
    views, using feature orientations;  2) General curves in three views (\eg, linked subpixel edges), and three
  corresponding curve points (\eg, subpixel edgels), using tangent vectors; 3)
 Trajectories of three moving points observed by three cameras, using
    velocity vectors. While a third orientation triplet is usually available and
    exploited in practice, we show the core pose solution requires only two.

\begin{definition}[Cleveland trifocal problem] 
  Given three points $\x_{1v}, \x_{2v}, \x_{3v}$ in views $v =
  1,2,3$, and given a free line $\bl_{1v}$ in each image, 
  compute $\srot_2,\srot_3,\stransl_2,\stransl_3$.
\end{definition}

\subsection{Essential Equations}\label{sec:basic:eqs}
The essential equations of Chicago (and Cleveland) are obtained by writing constraints per feature
independently, while keeping the pose unknowns in
general form. They are used for analyzing the fundamental properties of the new
problems and as a basis
for further variable elimination and exploring other formulations.
See~\cite{Argawal:Duff:etal:arxiv:jun2022} for a general framework for
navigating different formulations. The final solver that offered the best
performance uses a formulation that further eliminates variables across these
per-feature equations using specific algebraic manipulations connecting features
pairwise, as described further in Section~\ref{sec:minors}. 

\begin{theorem}[Essential trifocal constraints for
points and incident lines, parametric form]\label{th:essential:constraints}
  The constraints on relative pose from points and incident
  lines observed in three views are given by
%\begin{empheq}[left=\empheqlbrace]{align}
  \begin{align}
  \alpha_{v} \x_v &= \srot_v \alpha_{1} \x_1 + \stransl_v,\label{eq:points:simplified:notation}\\
  \eta_v\x_v + \mu_v \dir_v &= \srot_v \left( \eta_1\x_1 + \mu_1 \dir_1
  \right),\label{eq:tangent:final:v3:parametric}
  \end{align}
  for $v = 2,3$ (point indices omitted, $\srot_1 = \mathtt I$ and $\stransl_1 =
  0$). We call~\eqref{eq:points:simplified:notation} the parametric essential
  trifocal point constraints, and~\eqref{eq:tangent:final:v3:parametric} the
  parametric essential trifocal incident line constraint.
  Moreover,~\eqref{eq:points:simplified:notation} imposes
  three constraints per triplet point,
  while~\eqref{eq:tangent:final:v3:parametric} imposes one constraint per incident line triplet:
  \begin{enumerate}
\item \emph{Point epipolar constraints:}
  Solving~\eqref{eq:points:simplified:notation} for $v=2$ and $v=3$.
\item \emph{Point relative scale constraint:} Enforcing depth $\alpha_1$ to be equal in~\eqref{eq:points:simplified:notation}
  for $v=2$ and $v=3$.
\item \emph{Incident line constraint:} Jointly expressed by~\eqref{eq:tangent:final:v3:parametric} for $v=2,3$.
\end{enumerate}
\end{theorem}
\begin{proof}
Eliminate $\X$ from the projections of points
%The equations for point correspondences are 
%\begin{empheq}[left=\empheqlbrace]{align}
%  \alpha_1 \x_1 &=  \X \label{eq:3C:point:cam1}\\
%  \alpha_2 \x_2 &=  \srot_2 \X + \stransl_2 \label{eq:3C:point:cam2}\\
%  \alpha_3 \x_3 &=  \srot_3 \X + \stransl_3. \label{eq:3C:point:cam3}
%\end{empheq}
$\alpha_v \x_v =  \srot_v \X + \stransl_v$, $v=1,2,3$
 to get~\eqref{eq:points:simplified:notation}.
Lines in space through $\X$ are modeled here in parametric form by a displacement parameter
$\epsilon$ and points $\Y = \X + \varepsilon \D$, which are projected
as $\beta_v \y_v =  \srot_v \Y + \stransl_v$, $v=1,2,3$. Eliminate
$\stransl_v$ by subtracting the projection equations of $\X$ and $\Y$,
$\beta_v\y_v - \alpha_v \x_v = \varepsilon \srot_v \D$, and eliminate $\varepsilon \D$ using the equation for $v=1$ and $\y_v = \x_v +
\delta_v \dir_v$,
\begin{equation}\label{eq:tangent:intermediate:v3:parametric}
(\beta_v - \alpha_v)\x_v + \beta_v \delta_v \dir_v = \srot_v \left( (\beta_1 - \alpha_1)\x_1 + \beta_1 \delta_1 \dir_1 \right),
\end{equation}%
for $v = 2,3$. We set $\eta_v
\doteq \beta_v - \alpha_v$ and $\mu_v \doteq \beta_v\delta_v$,
yielding~\eqref{eq:tangent:final:v3:parametric}.

It follows that the trifocal essential point constraints in parametric
form~\eqref{eq:points:simplified:notation} are logically equivalent to the
existence of a triangulation $\X$ from views 1 and 2 \emph{equal} that from views 1
and 3. In parametric form, it simply means that these solutions can be linked by the \emph{same}
depth $\alpha_1$. By construction, these imply the existence of a
triangulation from views 2 and 3, also equal to $\X$,
so ~\eqref{eq:tangent:final:v3:parametric} for views 2 and 3 does not provide an additional constraint.\footnote{Conversely, having
  three pairwise epipolar constraints is \emph{not} equivalent to two pairwise epipolar
constraints and a relative scale constraint~\cite{Ponce-IJCV-2016}.}

The trifocal essential incident line constraints in parametric form are
logically equivalent to the existence of a \textsc{3d} line direction $\D$ that, when rooted at $\X$, projects to
direction $\dir_1$ and $\dir_2$, and that $\D$ also projects to $\dir_3$. 
In the point case the equation from views 1 and 2 provides a constraint,
\ie,~\eqref{eq:points:simplified:notation} for $v=2$ does not always have a solution, while
the incident line equation from views 1 and 2 does not provide a
constraint on pose -- there is always a solution $\mu$ and $\eta$
for~\eqref{eq:tangent:final:v3:parametric} for $v=2$ that parametrizes some
consistent $\D$ irrespective of $\srot$ and the data $\x$ and $\dir$. 
Each triplet of oriented point features provides a single orientation constraint
expressed as two coupled
equations~\eqref{eq:tangent:final:v3:parametric} in $\eta$ and
$\mu$ in addition to pose.
\end{proof}

\begin{corollary}
  The correspondence of points across three views constrain relative rotations and
  translations, while the additional correspondence of an incident line 
  constrains only rotation.
\end{corollary}
\begin{proof}
This is a direct consequence of Theorem~\ref{th:essential:constraints}.
\end{proof}
%Indeed, in projective jargon, incident lines reduce to
%directions corresponding to ideal points or points at infinity, which do
%not constrain translations.
% TODO: we might want to mention that tangents are points at infinity,
% and there are works that show that points of infinity constrain only rotation.
% See http://www.ipb.uni-bonn.de/pdfs/Steffen2010Relative.pdf 
% Avoiding the use of a 3D point in a formulation allows more stability
% specially when the 3D points are far away; Far away points constrain only
% rotations anyways.
Having an incident line thus works like an additional point correspondence
-- in a precise sense like a third of a point -- yet constraining only rotations.
This allows us to construct Chicago as an exactly constrained trifocal problem that 
can be applied, \eg, with conventional \sift\ features.
We can get an expression of these constraints free of auxiliary parameters by
further elimination.

The parametric point epipolar constraints of
Theorem~\ref{th:essential:constraints}, \emph{in particular}, state
that $\x_1$, $\x_v$ and the first camera center $\stransl_v$ are coplanar when written in the coordinates
of camera $v$; this is the classical Essential constraint, readily expressed without parameters via a scalar triple
product trilinear in $\stransl_v$ and the points, the standard expression
that is bilinear in image coordinates. Although we arrived at this constraint
explicitly from first principles through but the simplest
logic, it is a general constraint of two-view
geometry with recent results in trifocal geometry~\cite{Trager:Hebert:Ponce:2019}.  
Algebraically, the classical expression for the Essential constraint ammounts to eliminating depths
$\alpha_v$ from~\eqref{eq:points:simplified:notation} while keeping $\srot_v$ and $\stransl_v$.
However, there are successful arguments for eliminating $\srot_v$ and
$\stransl_v$ \emph{first} in camera pose problems, writing the equations in terms
of depths only
$\alpha$~\cite{Fabbri:Giblin:Kimia:PAMI19,Fabbri:Giblin:Kimia:ECCV12} (\eg,
the classical \textsc{p3p} equations). 
%For instance, $\srot_v$ is readily
%eliminated in~\eqref{eq:tangent:final:v3:parametric} by requiring the length of
%both sides to be equal.
Though not performed here, this further motivates
stating the trifocal essential constraints in parametric form. 
Moreover, the parametric form more readily lends itself to modeling general
curves~\cite{Fabbri:Kimia:IJCV2016} for which trifocal geometry plays a pivotal role.
% Not only that, but the parametric form is closer to first principles and more
% suited for the reasoning done here

% Personal note: points and camera center or translation t play a symmetric role
% in the epipolar constraint: the two corresponding points must be coplanar,
% but written in the same camera coordinate, so that the point vectors x_1, x_v
% _and_ translation vector (which is just the pinhole center written in camera v
% coordinates) must be coplanar, an example of the Carlsoon-Weinshall duality in
% MVG~\cite{Trager:etal:TechReport2018,Trager:Hebert:Ponce:2019}

The trifocal relative scale constraint in 
Theorem~\ref{th:essential:constraints} guarantees
that \textsc{3d} rays converge, which may not be the case if we had used three pairwise epipolar
constraints instead; in fact, this scale constraint is a fundamental and
classical condition of photogrammetry, called the \emph{scale-restraint} equations,
see~\cite{Ponce-IJCV-2016} for general results. Such a constraint may be substituted by an
additional epipolar constraint between views 2 and 3, but it turns out that this is only adequate
for oriented points, \ie, \emph{together} with the incident line constraint, which guarantees
a consistent \textsc{3d} incident line. Without this, having three pairwise epipolar constraints is not enough to
guarantee there is a \textsc{3d} point $\X$ that projects to the observed points,
specially near non-generic configurations~\cite{Ponce-IJCV-2016}, namely
1) if the camera centers are far from 
collinear, when the corresponding rays lie in or near the trifocal plane 2) if
the centers are approximately collinear, when the rays lie near any plane
containing the baseline~\cite{Ponce-IJCV-2016}. In this sense, points with incident
lines are \emph{natural} features in trifocal geometry. 

% INTERESTING
% Note that incident lines are
% classically artificially constructed to derive point constraints on
% trifocal geometry,
%
% This is akin to having an arbitrary point chosen on a line in order to define
% its parametric equation.
%

%In addition to the incident line constraints, this reinforces the that
%fact that Theorem~\ref{th:essential:constraints} provides a minimal set of constraints for
%calibrated trifocal geometry from first principles.  While it is well-known that incident lines or
%tangent directions do not provide a constraint on two-view relative
%pose~\cite{Fabbri:Kimia:IJCV2016}, the equations in the explicit form presented
%here emphasizing trifocal relative pose and derived from first principles (point
%projection alone) have not appeared in the literature, to the best of our
%knowledge. The incident line constraint may be expressed as a single equation that more
%directly represents the constraint by eliminating the auxiliary parameters from
%the coupled equations~\eqref{eq:tangent:final:v3:parametric}. An explicit
%expression is left for companion papers.

% COOL
%\indraftnote{
%The trifocal relative scale costraint defines a trinocular
%line~\cite{Ponce-IJCV-2016}.
%}
%
%\todo{put more direct form for constraints here instead of coupled eqs}
%
%Equivalently, it states that the backprojected planes $\pi_1$ through $\bl_1$
%and $\pi_2$ through $\bl_2$ meet at a common line that must equal the
%intersection of $\pi_1$ and $\pi_3$.  \todo{doing: reintepret incident line
%constraints, normal}

\begin{corollary}[Chicago Essential Equations, Parametric
  Form]\label{th:chicago:essential:eqs}
The Chicago problem is equivalent to finding the solutions of
\begin{align}
  \alpha_{pv} \x_{pv} &= \srot_v \alpha_{p1} \x_{p1} + \stransl_v, \ \ p = 1, 2,
  3 \label{eq:chicago:essential:point}\\
  \eta_{pv}\x_{pv} + \mu_{pv} \dir_{pv} &= \srot_v \left( \eta_{p1}\x_{p1} +
    \mu_{p1} \dir_{p1}\label{eq:chicago:essential:tangent}
\right), \ \ p = 1, 2,
\end{align}
for $v=2,3$, 
which are 30 scalar equations in the relative camera pose $R_2$, $t_2$,
$R_3$, $t_3$, along with 9 unknown depths ($\alpha_pv$) and 12 unknown line
parameters (6 each for $\eta_pv$ and $\mu_pv$).
\end{corollary}
\begin{proof}
  Theorem~\ref{th:essential:constraints} lists all the
  available constraints that arise.
\end{proof}

That actual equations used in our solver amount to an elimination of
the auxiliary parameters in~\eqref{eq:chicago:essential:point}
and~\eqref{eq:chicago:essential:tangent}, leading to vanishing minors, Section~\ref{sec:minors}.
Note that~\eqref{eq:chicago:essential:point} are homogeneous in $\alpha$ and
$\stransl$, so that a multiple of a particular solution are also solutions, \ie,
translations and depths are constrained up to scale, giving 11 constrainable
degrees of freedom. By
Theorem~\ref{th:essential:constraints}, the essential
equations used in Chicago express 3 independent constraints per point, and 1 per
incident line, yielding 11 constraints on 11 degrees of freedom.  Rigorous
computational arguments in Section~\ref{sec:analysis} confirm that these
constraints are also independent across points. In other words, Chicago is a
minimal problem.

One can also see the parametric trifocal essential equations for Chicago as a square system of 30 scalar
equations in the $30$-dimensional space 
$SO(3)  \times SO(3) \times \mathbb{P}^{14} \times \mathbb{P}^5 \times
\mathbb{P}^5$ of unknowns
\begin{align*}
(\srot_2,  \srot_3, [\stransl_2, \stransl_3, \alpha_{11},\hdots, \alpha_{33}], &[\eta_{11},\mu_{11},\eta_{12},\mu_{12},\eta_{13},\mu_{13}], \\ &[\eta_{21},\mu_{21},\eta_{22},\mu_{22},\eta_{23},\mu_{23}]).
\end{align*}
We model the $9$ depths $\alpha_v$ and $\stransl_2,
\stransl_3$ as a single point in $\PP^{14}$, since they are unknown up to a
common scale. Similarly, since only the directions of tangents matter, we regard
these solution components as points in two $\mathbb{P}^5$ factors, one per oriented feature.

There are many ways to proceed with elimination from the essential parametric equations
to obtain alternate formulations, as discussed above. A particular eliminated
formulation based on vanishing minors, which produced the first working solver
for Chicago, and which are used in \minus, is described in Section~\ref{sec:minors}.

%\todo{Ideally, for a complete theory in this paper we should arrive at a 11x11 formulation per the Introduction,
%  where each equation strictly represents a constraint on the degrees of freedom, and not auxiliary
%variables. The problem is that rotation is in a manifold, so that depending on
%the patch on the manifold a different set of coordinates, and hence equations,
%will have to be expressed. Thus, we can only go so far towards a 11x11
%formulation and perhaps Tim's formulation gets there, just not explicitly but
%by blindly selecting minors that make it work.}

\subsection{Equations based on minors used in our solver}\label{sec:minors}

Experiments show that judicious elimination of additional variables from the
basic equations leads to faster and more reliable solvers, with 
tradeoffs, \eg, in the number of variables \emph{vs.}\ nonlinearity and
degeneracy of the resulting representations. This section describes a particular
way to eliminate variables down to a $14\times14$ system that has proven most
successful and general to date.

Futher elimination of certain variables from the basic equations leads to minor-based
constraints, \ie, enforcing the determinants of certain sub-matrices to vanish.
Examples are coplanarity or multilinear constraints, \eg, the essential constraint. In particular, this
eliminates parameters describing coordinates of vectors in
constraints on lines (depths $\alpha$'s) and
planes ($\eta$'s and $\mu$'s). While this approach has long been used
for describing trifocal constraints for
points~\cite{Ponce-IJCV-2016}, in full generality it is novel and has spawned
companion work by our coauthors~\cite{duff2019plmp}.  Additionally,
equations based on minors are multilinear, allowing for
possible numerical improvements, Section~\ref{sec:homotopy-algorithm}.
\begin{figure}
\centering
\includegraphics[width=0.85\linewidth]{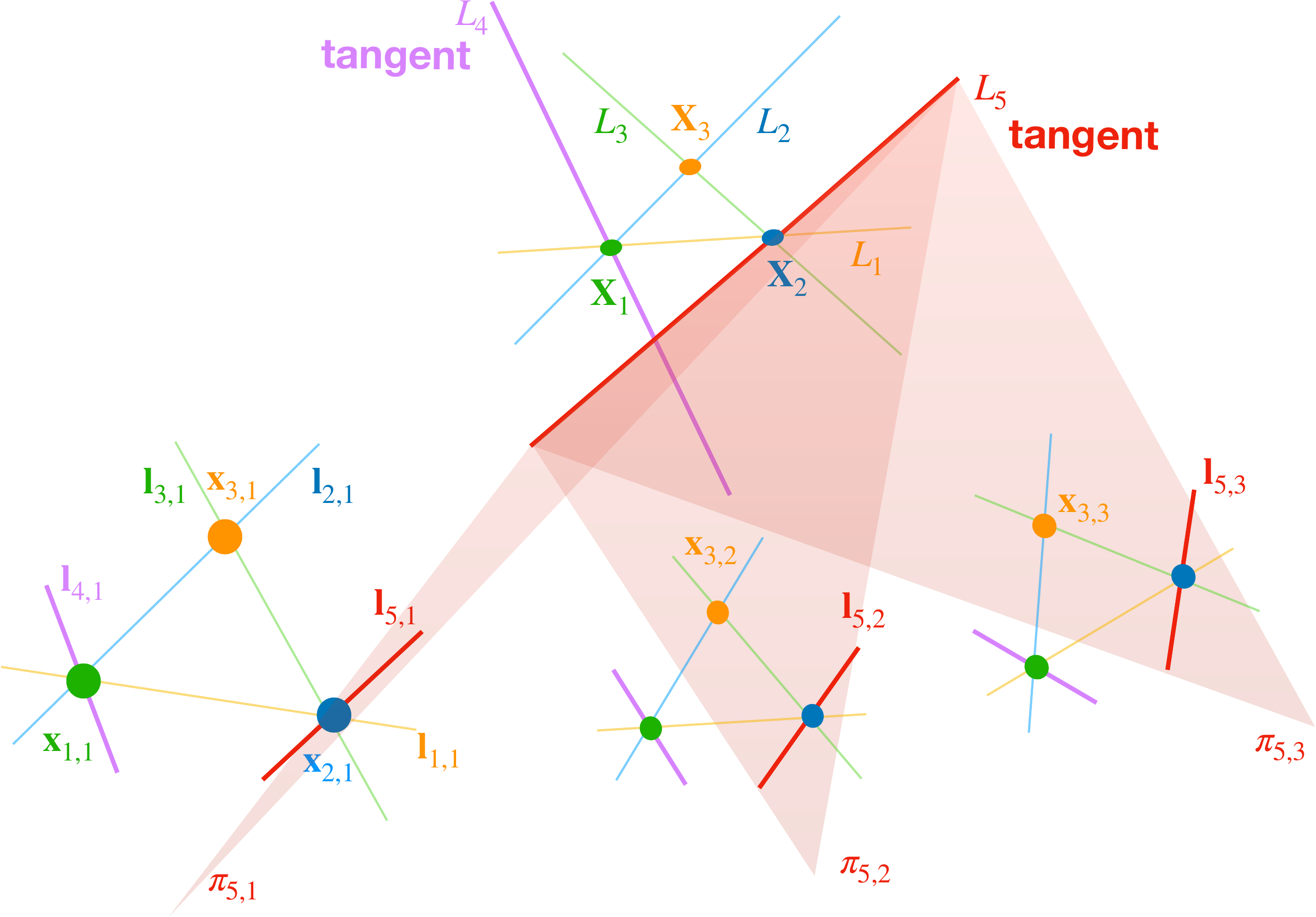}
\caption{Visible line diagrams for Chicago. Cleveland uses the same numbering
for pairwise lines and $\bl_4$ is a free line.}
\label{fig:line-diagrams}
\end{figure}

An instance of Chicago may be described by a configuration of $5$ \emph{visible
lines} in each view, Fig.~\ref{fig:line-diagrams}.
We denote each line by $\bl_{1v} , \ldots , \bl_{5v}$ for $v = 1,2,3$, where
the first three $\bl_{1 v}, \bl_{2 v}, \bl_{3 v}$ pass
through all pairs of points in each view, and the last two
$\bl_{4 v}, \bl_{5 v}$ represent the associated point-tangent pairs.
The minor-based equations split into three sets summarized as:
\begin{itemize}[\IEEEsetlabelwidth{Pairwise lines m}]
  \item[Lines correspond:] $\bpi_{i,1},\bpi_{i,2},\bpi_{i,3}$ meet at a
    \textsc{3d} line $\mathbf L_i$.
\item[Pairwise lines meet:]   $\mathbf L_1, \mathbf L_2,\mathbf L_3$ meet
  pairwise in \textsc{3d}.
\item[Incident tangents:]  $\mathbf L_1, \mathbf L_2, \mathbf L_4$ and  $\mathbf
  L_1, \mathbf L_3, \mathbf L_5$ meet at a point.
\end{itemize}
The latter two are so-called \emph{common point constraints}.\\[1em]
\noindent\textbf{\emph{Line correspondence} constraint.}
These equations express that there must be an underlying \textsc{3d} line $\mathbf L_j$,
$j=1,\dots,5$ associated to the set 
of
backprojection planes $\boldsymbol \pi_{j,v} = [\srot_v | \stransl_v]^\top
\bl_{j,v}$, $v=1,2,3$, which are gathered into a $4\times 3$ matrix
$\LL_j \doteq \left[\boldsymbol\pi_{j,1} \ \ \ \boldsymbol \pi_{j,2} \ \ \ \boldsymbol \pi_{j,3}\right]$.
These planes define a single line if the underlying system of equations has a
1\textsc{d}
solution, leading to the \emph{rank constraint}
\usetagform{emph}
\begin{equation}\label{eq:rank:line-correspondence}
\rank \LL_j \le 2 , \quad j=1 , \ldots , 5.
\end{equation}
\usetagform{default}%
Equivalently, we obtain a polynomial system by setting all $3\times 3$ minors of
each $\LL_j$ to zero.
As explained Section~\ref{sec:solver}, \minus\ employs a
heuristic to select one such minor per
$\LL_j$, fixed for given \hc\ starting 
solutions, yielding \emph{5 final equations} for this constraint.\\[1em]
%\todo{Which one exactly the current code picks?}.\\[1em]
% On the choice of minors by Tim: Thu21jul22
%
%At the time, rows from the 290 x 14 Jacobian were chosen greedily -- starting with no rows from the top, add a row, compute SVD, and keep it if the matrix stays full rank. Note that this depends on the ordering of equations, starting problem-solution pair, etc. So it makes sense to put "simpler equations" first. Putting the line correspondence equations first, each line correspondence gave 2 equations for a total of 10 using this heuristic. Then one equation was chosen from the common point equations, plus the three chart equations necessary to kill the scale in t and both sets of quaternions. 
%
%Since we first worked on this, I've realized that there are better greedy approaches than the one we used, eg. what I call "squaring down" on page 26 here https://smartech.gatech.edu/bitstream/handle/1853/65027/DUFF-DISSERTATION-2021.pdf?sequence=1&isAllowed=y. This is meant as a bit of a joke, since "squaring up" usually refers to multiplying an N x n system with N > n by a random n x N matrix (yet another strategy one could use to get 14 equations here.)
%
%Note that we could have expressed this
%constraint explicitly as coplanarity of plane normals to guide the
%choice of equations, at the cost of losing some generality of the minors technique for
%automatically generating equations for a series of similar
%problems~\cite{Duff:Kohn:Leykin:Pajdla:PLMP:2019}.\\[1em]
%
\noindent\textbf{\emph{Pairwise line intersection} constraint.}
That $\mathbf L_1, \mathbf L_2, \mathbf L_3$ intersect pairwise can be expressed by
\begin{equation}\label{eq:rank:pairwise}
  \rank\,[\mathtt L_i \ \ \mathtt L_j] \leq 3, \qquad i < j \in \{1,2,3\},
\end{equation}
or that all maximal  $4\times 4$ minors vanish.
We use only $\rank\, [\mathtt L_2 \ \mathtt L_3] \leq 3$ corresponding to
$\X_3$, as the
other pairwise intersections will be implicit in the constraint of
\emph{incident tangents}. For \minus\ we pick only one minor equation for this
constraint
%\begin{equation}
%  |\bpi_{1,1}\ \ \ \bpi_{1,3}\ \ \   \bpi_{2,3}\ \ \   \bpi_{3,1}| = 0,
%\end{equation}
% XXX TODO: check this from source code
using the aforementioned heuristics.\\[1em]
\noindent\textbf{\emph{Incident tangents} constraint.}
That tangents intersect at the same point with two other
lines can be expressed by forming matrices
$ \mathtt X_1 \doteq [\mathtt L_1 \ \mathtt L_2\ \mathtt L_4],\ \mathtt X_2
\doteq [\mathtt L_1 \ \mathtt L_3\ \mathtt L_5]$, and requring
\begin{align}\label{eq:rank:tangents}
  \rank\, \mathtt X_j &\leq 3,\ j = 1,3.
\end{align}
All $4\times 4$ minors must vanish, 5 of which are used in \minus.
\emph{The final number of equations consists of 11} fixed, specific vanishing minors.
The total number of minors associated with the rank
constraints~\eqref{eq:rank:line-correspondence},\eqref{eq:rank:pairwise},\eqref{eq:rank:tangents}
far exceeds the number of unknowns used in our formulation of Chicago.  The
number of unknowns, as described in the next section, is $14$, and the total
number of equations implied by these rank constraints is $287 = 5 \binom{4}{3}
+ 2 \binom{9}{4} + \binom{6}{4}$. Nevertheless, these $287$ equations together
with $3$ dehomogenization equations~\eqref{eq:charts} will have $312$
solutions for \emph{almost all} line configurations encoding an instance of
Chicago.  In our \hc\ solvers, we work with a
$14\times 14$ subsystem of these equations which determine a full-rank submatrix of the
$290 \times 14$ Jacobian matrix. 
%These are the equations currently used in
%\minus, listed explicitly in supplemental material.

In this approach, the selection of the actual equations out of a large pool of
possibilities is done through computer-assisted
heuristics, Section~\ref{sec:solver}. 
While these general tools aid in understanding the underlying geometry, this
becomes concealed. Selecting the appropriate subset of minors, \eg,
that ensures the \textsc{3d} rays for matching points always intersect, is a
known problem in the projective case~\cite{Ponce-IJCV-2016}. In that scenario,
a different subset of minors may be used depending on \emph{a priori}
assumptions on camera configuration (\eg, collinear vs non-collinear camera
centers)~\cite{Trager:Hebert:Ponce:2019}. An explicit set of vanishing minors
for point trifocal geometry and the resulting constraints is studied in a
general setting by Trager~\etal~\cite{Trager:Hebert:Ponce:2019}.
A geometric interpretation is that four minors encode constraints that are
trilinear in image coordinates and express that \textsc{3d} rays must meet at a single
point. When \textsc{3d} rays are viewed from four different appropriate image planes,
each vanishing minor may be expressed as requiring three coplanar
projected lines meeting at a point~\cite{Trager:Hebert:Ponce:2019}. 
We verify experimentally that our chosen set of minors 
provides a working solver.

\section{Problem Analysis}\label{sec:analysis}

\noindent A general camera pose problem is defined by a list of labeled features in each image, which are in correspondence. The image coordinates of each feature are given, and we aim to determine the relative poses of the cameras.  
%The labeling tells which features correspond from one image to another. A feature is not necessarily seen in all cameras. 
The concatenated list of all the feature coordinates from all cameras is a point
in the image space $Y$, while the concatenated list of the features' locations
and orientations in the world frame or camera~1 is a point in the world feature
space $W$. The scale of the relative translations is
indeterminate, so relative translations are treated as in projective space.  For
$N$ cameras, the combined poses of cameras $2,\ldots,N$ relative to camera~1 are
points in $SE(3)^{N-1}$.  Let the pose space be $X$, the projectivized version
of $SE(3)^{N-1}$, and so $\dim X=6N-7$. Given the \textsc{3d} features and the camera
poses, we can compute the image coordinates of the features by considering a
viewing map $V\colon W\times X \to Y$.  A camera pose problem is: given $y\in
Y$, find $(w,x)\in W\times X$ such that $V(w,x)=y$.  The projection
$\pi\colon(w,x)\mapsto x$ is the set of relative poses we seek.

\begin{definition}
  A camera pose problem is \emph{minimal} if $V\colon W \times X \to Y$ is invertible and nonsingular at a generic $y\in Y$.
\end{definition}
\noindent A necessary condition for a map to be invertible and nonsingular is
that the dimensions of its domain and range must be equal.  Let us consider
three kinds of features: a point, a point on a line (equivalently a point with
tangent direction), and a free line (a line with no distinguished point on it).
For each feature, say $F$, let $C_F$ be the number of cameras that see it.  The
contributions to $\dim W$ and $\dim Y$ of each kind of feature are in the table
below, where a point with a tangent counts as one point and one tangent.  Thus,
a point feature has several tangents if several lines intersect at it.
\begin{center}
  \small
\begin{tabular}{ccc}
  \textbf{Feature} & $\dim W$ & $\dim Y$ \\ \hline
  Point, $P$ & 3 & $2\cdot C_P$ \\
  Tangent, $T$ & 2 & $1\cdot C_T$ \\
  Free Line, $L$ & 4 & $2\cdot C_L$ \\
\end{tabular}
\end{center}
Summing all the contributions to $\dim Y - \dim W$, we have
\begin{theorem}
Let $\relu{x}\doteq\max(0,x)$.  A necessary condition for a $N$-camera pose
problem to be minimal is
\[
\sum_P\relu{2C_P-3} + \sum_T\relu{C_T-2} + \sum_L \relu{2C_L-4} = 6N-7.
\]
\end{theorem}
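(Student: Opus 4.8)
The plan is to turn the claim into a dimension count for the viewing map $V\colon W\times X\to Y$, exploiting the fact that $W$ and $Y$ are products over the features while $X$ (the pose space) is a shared factor, and invoking the remark already made above that an invertible, nonsingular map has domain and range of equal dimension. Three pieces of bookkeeping are needed. First, normalizing camera~$1$ to the identity leaves $SE(3)^{N-1}$, of dimension $6(N-1)$, and projectivizing the joint translation scale removes one dimension, so $\dim X = 6N-7$. Second, writing $W=\prod_F W_F$, $Y=\prod_F Y_F$ and reading off the table in the statement, $\dim Y-\dim W = \sum_P(2C_P-3)+\sum_T(C_T-2)+\sum_L(2C_L-4)$. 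Third, by construction $V$ is ``block diagonal'' in the features with $X$ as a common parameter: the $Y_F$-component of $V$ depends only on $W_F$ and on $x\in X$, so for each fixed $x$ it restricts to a map $V_{F,x}\colon W_F\to Y_F$.

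The heart of the argument is that minimality forces every summand in this ``raw'' count to be nonnegative, so that the count agrees term by term with the clipped sum $\sum\relu{\cdot}$ of \eqref{eq:CountingFormula}. Suppose instead that some feature $F_0$ has $\dim W_{F_0}>\dim Y_{F_0}$ — equivalently $C_P\le 1$ for a point, $C_T\le 1$ for a tangent, or $C_L\le 1$ for a free line. Fix a generic $y\in Y$ and a preimage $(w,x)$; holding $x$ and every block $W_F$ with $F\ne F_0$ fixed at their values in $w$, the surviving free variable is $w'_{F_0}$ and the surviving constraint is $V_{F_0,x}(w'_{F_0})=y_{F_0}$. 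For each of the three feature types seen by a single camera the restricted map $V_{F_0,x}$ is dominant onto $Y_{F_0}$ — the preimage of a generic image point is the back-projection ray minus the center, that of an image direction is a one-parameter family of space directions, and that of an image line is the two-parameter family of space lines in the back-projection plane — so its fiber has dimension $\dim W_{F_0}-\dim Y_{F_0}>0$. This produces a positive-dimensional family of preimages of $y$ under $V$, contradicting invertibility. Hence each summand is $\ge 0$ and therefore equals its own $\relu{\cdot}$.

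Assembling the pieces: minimality gives $\dim(W\times X)=\dim Y$, i.e.\ $\dim X=\dim Y-\dim W$; the second and third steps rewrite the right-hand side as $\sum_P\relu{2C_P-3}+\sum_T\relu{C_T-2}+\sum_L\relu{2C_L-4}$; and the first step evaluates the left-hand side as $6N-7$, yielding \eqref{eq:CountingFormula}. I expect the step needing the most care is the fiber-dimension argument of the middle paragraph: one must justify that $V$ genuinely decouples across features with the poses as the only shared unknowns, and that each $V_{F_0,x}$ really does have positive-dimensional fibers at generic image data for all three feature types; once this is pinned down, the remainder is elementary counting.
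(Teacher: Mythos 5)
Your argument is correct and follows the same dimension-counting route as the paper: minimality forces $\dim(W\times X)=\dim Y$, the pose space has $\dim X = 6N-7$, and the table supplies the per-feature contributions to $\dim Y - \dim W$. The only thing you add beyond the paper's terse justification is the fiber-dimension argument showing that each raw summand must be nonnegative in a minimal problem (so the clipped sum $\sum\relu{\cdot}$ coincides with $\dim Y-\dim W$) --- a point the paper leaves implicit and which you handle correctly.
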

For trifocal problems where all cameras see all features, \textit{i.e.,}
$C_P=C_T=C_L=3$, a pose problem with 3 feature points and 2 tangents meets the condition.  A pose problem with 3 feature points and 1 free line also meets the condition. Adding any new features to these problems will make them overconstrained, having $\dim Y>\dim W\times X$.

\begin{definition}
The \emph{algebraic degree} of a minimal pose problem is 
the number of solutions $(w,x) \in V^{-1} (y)$ for
generic $y\in Y$.
\end{definition}

Both Gr\"{o}bner bases and \hc\ offer
probability-one methods for computing all solutions for a particular problem
instance specified by $y\in Y$.  Gr\"{o}bner bases also offer an exact method,
when working over $\mathbb{Q}$. However, it is
difficult to say when any particular $y\in Y$ will satisfy the necessary
genericity conditions to have have this many solutions without knowing the
algebraic degree \emph{a priori}. Thus, the following statement has
two components: that both problems are minimal (rigorously proven) and that
their algebraic degrees are as stated (true with probability one).

\begin{theorem}[Computational]
The Chicago trifocal problem is minimal with algebraic degree~312, and the Cleveland problem is minimal with algebraic degree~216.
\end{theorem}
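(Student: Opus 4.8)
\noindent The plan is to establish both assertions by the numerical algebraic geometry methodology outlined after~\eqref{eq:CountingFormula}: minimality reduces to a Jacobian rank test at a single sample, and the algebraic degree is the number of isolated solutions of the associated square polynomial system over a generic complex parameter.

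\emph{Minimality.} First I would build the viewing map $V\colon W\times X\to Y$ explicitly, parametrizing $X$ by unit quaternions for $\srot_2,\srot_3$ together with the projectivized translations $\stransl_2,\stransl_3$, and assembling $V$ from the projection equations $\alpha_v\x_v=\srot_v\alpha_1\x_1+\stransl_v$ and the line incidences~\eqref{eq:tangent:final:v3:epsilon:mu}, \eqref{eq:line-rank-constraint}. By the counting theorem the differential $dV_{(w,x)}$ is square for both problems, so it suffices to pick $(w,x)$ at random with $w$ in general position, form $dV_{(w,x)}$, and compute its singular value decomposition; a smallest singular value bounded well away from $0$ certifies numerically that $V$ is invertible and nonsingular at the generic image datum $y=V(w,x)$. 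To upgrade this to a rigorous statement one recomputes the rank with interval arithmetic, or verifies local invertibility by an interval Newton step; repeating over several random seeds guards against an unlucky sample.

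\emph{Degree.} Since $y_0:=V(w,x)$ comes with the known solution $(w,x)$, I would seed a monodromy computation~\cite{Duff-Monodromy,NAG4M2}: fix a square system $F(\,\cdot\,;y)$ whose solutions over generic $y$ are exactly the fibres of $V$ --- obtained by the vanishing-minors elimination of Section~\ref{subsec:minors}, or directly from the basic equations above --- and track solutions around loops in the parameter space $Y$ to grow the partial solution set $S_{y_0}$. Completeness of $S_{y_0}$ is then certified by the trace test. After discarding spurious solutions (e.g.\ those with a vanishing depth $\alpha$ or $\beta$, or points at infinity introduced by the parametrization), the cardinalities are the claimed $312$ for Chicago and $216$ for Cleveland.

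\emph{Cross-validation and the main obstacle.} I would corroborate the counts by (i) a parameter homotopy carrying $S_{y_0}$ to a second generic $y_1$ and recounting, (ii) an a posteriori certification of the path endpoints (interval Newton / certified tracking) to exclude path jumping, and (iii) checking that the count is invariant across random seeds and across the two alternative formulations. The hard part is not any individual step but the scale and conditioning: reliably tracking several hundred paths through a high-degree system without crossings, and --- more fundamentally --- establishing that monodromy together with the trace test has genuinely exhausted the solution set, since an incomplete set would silently under-report the degree. This is precisely why the statement is flagged as numerical rather than proven; a fully rigorous proof would require either certified monodromy with a certified trace test, or an independent symbolic computation of the degree.
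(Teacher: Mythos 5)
Your proposal is correct and, for the numerical portion, matches the paper's argument almost exactly: the paper likewise establishes minimality by a Jacobian rank/SVD test at a random $(w,x)$ (with interval or exact arithmetic mentioned as the way to make this rigorous) and obtains the degree by computing a full complex solution list for a random instance using numerical algebraic geometry, with monodromy~\cite{Duff-Monodromy} used to seed the solver. The one substantive difference is where the ``definite'' certification is placed. You treat monodromy plus the trace test as the main degree computation and relegate a symbolic check to a closing remark; the paper does the reverse. Its stated proof is a symbolic computation: specialize all parameters to random values (over $\mathbb{Q}$, or modulo a suitable prime $p$ to make the computation feasible), compute a lexicographic Gr\"obner basis of the resulting zero-dimensional system, and read off the degree $D$ as the degree of the last, univariate, basis element --- giving $D=312$ for Chicago and $D=216$ for Cleveland. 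That route buys an exact count that does not depend on path tracking or on the completeness of a monodromy orbit (only on genericity of the specialization and, in the mod-$p$ case, on $p$ not being a bad prime), at the cost of hours-to-days of computation; your route buys speed and a reusable start system for the solver, at the cost of exactly the under-reporting risk you identify. Both are legitimate, and your diagnosis of why the theorem is flagged ``Numerical'' agrees with the paper's own caveat.
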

\begin{proof}
To show that a $N$-camera pose problem is minimal, it is enough to find
$(w,x)\in W\times X$ where the Jacobian of $V(w,x)$ is full rank.  For exact
values of $(w,x) \in W \times X$ in rational arithmetic, we compute the
exact rank of this Jacobian. This proves that the
problem is minimal. 
To compute the algebraic degree of a given problem, we write down a system of
polynomial equations in unknowns $(w,x) \in W \times X$ for a randomly chosen
$y$.
Since the problem is minimal, we expect that the ideal generated by these polynomials is $0$-dimensional.
Gr\"{o}bner bases give standard methods~\cite{Cox-IVA-2015} both for checking
that this ideal is $0$-dimensional and computing its degree.
Finally, to verify that the degree of the ideal is equal to degree of the
minimal problem, we have computed all solutions to the system of polynomials
specified by $y\in Y$ and verified that they correspond to valid points $(w,x) \in
W \times X$. We carried out this procedure with the minors equations
and confirmed the degree using the essential equations and \hc.
\end{proof}

\textbf{Remark:} The previous argument depends on the system of equations chosen to model the problem.
For instance, if~\eqref{eq:chicago:essential:point},\eqref{eq:chicago:essential:tangent} are used, then there exist $312$ solutions corresponding to valid points in $W \times X$, plus a small number of degenerate solutions where certain values of the depths $\alpha$ equal zero.
Additional polynomial equations which exclude these solutions may be generated using the symbolic technique of~\emph{saturation}~\cite[Sec~4.4]{Cox-IVA-2015}.
Such a saturation step is also necessary if rotation matrices are modeled with
the quaternion parametrization in~\eqref{eq:cayley}, since we must rule out
degenerate solutions with $w_i^2 + x_i^2 + y_i^2 + z_i^2=0$.

A companion work by our coauthors~\cite{Duff:Kohn:Leykin:Pajdla:PLMP:2019}
provides \textsc{m}acaulay2 tutorial for the Gr\"{o}ner basis degree proof
and other general techniques presented in this section for analyzing Chicago, Cleveland,
and a number of related minimal problems using the minors approach.
Since Gr\"{o}bner bases can be used to compute the algebraic degrees of both minimal problems, it is natural to hope that they also can be used to design effective minimal solvers.
However, the current leading methods for building minimal solvers (eg.~\cite{kukelova2008automatic,Larsson-Syzygy-CVPR-2017,Larsson-CVPR-2018}) do not scale well for problems of degree $100$ or larger.
This is our main motivation for using optimized \hc.

% xxx journal TODO: include stuff from the rebuttal discussion here, such as digressions on
% why the system is stable, and what exactly was run
%\begin{draf}
%- Complexity?
%- check poster and figs folder for anything useful
\section{Optimized Homotopy Continuation Solver}\label{sec:solver}
\noindent 

Like other minimal problems in vision, the Cleveland and Chicago problems require us to solve a system of polynomial equations.
Crucially, these equations are polynomial in both the input data (points and lines in images) and the unknown quantities to be estimated (cameras and world features.)
It is common to call these systems \emph{parametrized polynomial systems}, as the input data parametrize the space of all instances of a given problem.
In Section~\ref{sec:homotopy-algorithm}, we review basic facts about 
\emph{coefficient parameter homotopy}, a very general framework for solving
parametrized polynomial systems based on \hc\ methods.
The \emph{parameter homotopies} arising in this framework lie at the core of our
\hc\ solvers.
To make this general framework concrete, Section~\ref{sec:minors} describes in
precise detail one possible strategy for formulating the Cleveland and Chicago
problems, in which the depths and displacements are eliminated from the
essential equations of Section~\ref{sec:basic:eqs}.
Although these formulations are used in our best-performing solvers to date, we stress that the exact formulation is not essential to the underlying technique.
Other formulations of the problem will also give rise to parameter homotopies
which can be successfully used within general-purpose software~\cite{BertiniBook,NAG4M2} or
within our optimized \textsc{c++} framework \minus\ described in Section~\ref{sec:cpp-implementation}. 

Acknowledging the promise of further speedups brought by experimenting with
different formulations, we observe that our specific parameter homotopies can
already be used to solve Chicago and Cleveland in a relatively efficient manner,
Section~\ref{sec:experiments}.
We attribute relatively good
run times to two factors. First, the inherent specificity of
parameter homotopies when compared to other \hc\ methods; the number of paths to
track in a parameter homotopy is precisely the algebraic degree of the problem.
Second, we optimize various aspects of \hc, such
as polynomial evaluation and numerical linear algebra, 
Section~\ref{sec:cpp-implementation}, along with more aggressive optimization
opportunities and tradeoffs.

\subsection{Algorithm}
\label{sec:homotopy-algorithm}
\noindent 
%From the previous section, we may define a specific system of polynomials
%$F(\mathcal{R};\mathcal{A})$ in the unknowns $\mathcal{R} = (\srot_2, \srot_3,
%\stransl_2, \stransl_3)$ parametrized by $\mathcal{A} = (\bl_{11},\ldots )$. Many representations for rotations were explored, but our main implementation employs quaternions. 
%A fundamental technique for solving such systems, fully described in~\cite{SWbook}, is \emph{coefficient-parameter homotopy.}   \st{Algorithm 1 summarizes homotopy continuation from a known set of solutions for given parameter values to compute a set of solutions for the desired parameter values. It assumes that solutions for some starting parameters $\mathcal{A}_0$ have already been computed via some offline, \emph{ab initio} phase. For our problems of interest, the number of start solutions is precisely the algebraic degree of the problem.} 
We assume that
$F(\mathcal{R};\mathcal{A})$ is a system which is polynomial in both the
variables $\mathcal{R}$ and the parameters $\mathcal{A}$. One is
interested in efficiently computing the solutions for many instances of the
parameters. To compute all nonsingular complex isolated solutions of
$F(\mathcal{R};\mathcal{A}) = 0$ for any given set of target parameters
$\mathcal{A}^*$, one may use the parameter homotopy 
\begin{equation}\label{eq:parameter-homotopy}
H(\mathcal{R};s) = F(\mathcal{R}; (1-s)\mathcal{A}_0 + s\mathcal{A}^*),
\end{equation} 
for $s \in [0,1)$, Algorithm~\ref{alg:homotopy}.  It is assumed that solutions for some starting
parameters $\mathcal{A}_0$ have already been computed via some offline, \emph{ab
initio} phase, described below, by default hardcoded in \textsc{minus}.  This initial
phase determines representatives of nonsingular isolated solutions, making for faster, more
efficient solves for any other parameter values desired, \eg, within \ransac.

Generically, the homotopy paths are smooth and do not intersect each other.
To ensure this (genericity) condition
for every homotopy path with probability 1,
we may employ the so-called \emph{gamma trick}. 
This consists in choosing a (random) $\gamma \in \mathbb{C}$ so that the homotopy equation becomes
\begin{equation*}
H(\mathcal{R};s) = F(\mathcal{R}; \phi(s)),
\end{equation*}
where $\phi (s)$ parametrizes an arc, depending on $\gamma$, connecting $\mathcal{A}_0$ to $\mathcal{A}^*$ in the parameter space.
More explicitly, we define $\phi(s) = (1-\tau(s))\mathcal{A}_0 + \tau(s)\mathcal{A}^*$, with $\tau(s) = \frac{\gamma s}{1+(\gamma-1)s}$, as in Algorithm~\ref{alg:homotopy}.
In this way, $\phi(s)$ is a generic path in the  complex space 
without singularities, even if the endpoints are real.  
However, even though the circular arc depending on $\gamma$
misses the non-generic points in $\mathbb{C}$ with probability 1, it might
happen that the arc is close to these
non-generic points; this can cause instability, increase the error or decrease
speed in computations.
If we run \minus\ multiple
times with the same data but using different (random) $\gamma$'s, it results in
a dispersion of run times and even occasional failures. 
The slower running times and the occasional failures happen when $\gamma$
lands close to certain rays in $\mathbb C$ which intersect an
appropriately-defined discriminant in the tracking parameter $s$.

For systems which are linear in the parameters $\mathcal{A}$, it is possible to adapt the gamma trick to work with a simpler \emph{linear segment homotopy}, due to the following calculation:
\begin{align}
    H(\mathcal{R};s) &= F(\mathcal{R}; (1-\tau(s))\mathcal{A}_0 +
    \tau(s)\mathcal{A}^*) \notag \\
    &= (1-\tau(s))F(\mathcal{R};\mathcal{A}_0) +
    \tau(s)F(\mathcal{R};\mathcal{A}^*) \label{eq:linear:segment:h}\\
    &= \frac{1}{1+(\gamma-1)s}\left[(1- s)F(\mathcal{R};\mathcal{A}_0) + \gamma
    sF(\mathcal{R};\mathcal{A}^*)\right], \notag
\end{align}
where the coefficient $\frac{1}{1+(\gamma-1)s}$ is never zero for real $s\in [0,1)$ and can thus be ignored when solving $H(\mathcal{R};s) = 0$.
This variant of the gamma trick may be preferable to the general version, since it results in cheaper evaluation of the homotopy and its derivatives, and may also lead to better numerical stability. 

Minor-based constraints are \emph{multilinear} in the coordinates of each
line $\bl$ suggesting that a simple variant of the aforementioned ``linear"
gamma trick will work for related formulations.
This will indeed work for Cleveland, where we may treat each coordinate of each line as an independent parameter.
However, for Chicago there is an additional subtlety due to the fact that the
associated configuration of lines is not general and must satisfy
\[
\rank \begin{bmatrix}
\bl_{1 v} & \bl_{2 v} & \bl_{4 v}
\end{bmatrix} \le 2, \quad
\rank \begin{bmatrix}
\bl_{1 v} & \bl_{3 v} & \bl_{5 v}
\end{bmatrix} \le 2.
\]
For Chicago, treating each coordinate of each line as an independent parameter will not give a valid parameter homotopy; even if $\mathcal{A}$ and $\mathcal{A}^*$ encode valid configurations of lines, points on a circular arc or linear segment connecting them will not.
We thus represent the lines encoding tangents
\[
\bl_{4 v} = a_{1 v} \bl_{1 v} + a_{2 v} \bl_{2 v}, \quad
\bl_{5 v} = b_{1 v} \bl_{1 v} + b_{2 v} \bl_{3 v},
\]
with $2$ independent parameters as a pencil of lines.

A full accounting of the variables and parameters used for
Chicago in \minus\ is as follows:\\[1em]
\noindent{\bf $14$ variables:} Each translation vector
has three unknown components, and the entries of matrices $\srot_2$ and $\srot_3$
are written as rational homogeneous functions in four unknowns (homogenized
Cayley):
\begin{equation}\label{eq:cayley}
\srot_v = 
\begin{bmatrix}
w_v & -z_v & y_v\\
z_v & w_v & -x_v\\
-y_v & x_v & w_v
\end{bmatrix}
\begin{bmatrix}
w_v & z_v & -y_v\\
-z_v & w_v & x_v\\
y_v & -x_v & w_v
\end{bmatrix}^{-1}.
\end{equation}\\
\noindent{\textbf{$56$ parameters:}}
$27=3\times 3 \times 3$ parameters represent three
      independent lines  $\bl_{1 v}, \bl_{2 v}, \bl_{3 v}$ in each view;
$12 = 3\times 2 \times 2$ parameters of the form $a_{i v}$ $b_{i v}$ 
represent two dependendent lines $\bl_{4 v}, \bl_{5 v}$ in each view;
      The remaining $17=56 - 39$ parameters consist of $\mathbf v_1,\mathbf v_2
      \in \CC^5$ and $\mathbf v_3 \in \CC^7$ which are random coefficients of 3
      inhomogeneous linear equations 
      \begin{align}
      (\mathbf r_1\ 1)\ \mathbf v_1  = 0, \ \ \  \ 
      (\mathbf r_2\  1)\ \mathbf v_2  = 0, \ \ \ \  \label{eq:charts}
      (\stransl_2^\top \, \stransl_3^\top \ 1)\ \mathbf v_3 = 0
      \end{align}
that determine affine charts on homogeneous coordinates given by $\mathbf r_1 = (w_2,x_2,y_2,z_2)$, 
$\mathbf r_2 = (w_3,x_3,y_3,z_3)$, and $(\stransl_2^\top \ \stransl_3^\top)$.
% XXX todo: These random v's are sampled from S^n not C^n in MINUS

In summary, Chicago may be formulated as a system of $290$ equations in $14$ variables and $56$ parameters.
A similar accounting lets us formulate Cleveland as a system of $64$ equations in $14$ variables and $53$ parameters.
As previously remarked, we may select a square subsystem $F$ to define the homotopy in~\eqref{eq:parameter-homotopy}, provided that the Jacobian $\frac{d \, F}{d \, \mathcal{R}} (\mathcal{R}_0 ; \mathcal{A}_0)$ has full rank for every starting solution $\mathcal{R}_0$.
We note that the $276$ excess equations need not be algebraic consequences of the $14$ that are selected.
Nevertheless, the fact that each initial solution $\mathcal{R}_0$ satisfies all
$290$ equations implies that we do not need to enforce these excess equations
explicitly -- see, \eg, the discussion in~\cite[SM Section 16]{hruby2021learning}, or the discussion of ``side conditions" in~\cite[Section 7.4]{SWbook}.

For Chicago, a precomputed set of $312$ starting solutions to the $290 \times
14$ system for starting parameters $\mathcal{A}_0$ may be numerically continued
to $312$ solutions for target parameters $\mathcal{A}$ via the parameter
homotopy~\eqref{eq:parameter-homotopy}, where $F$ is a suitable $14 \times 14$ square subsystem.
To obtain the starting solutions, we first compute a single, random
problem-solution pair $(\mathcal{R}_0, \mathcal{A}_0)$, first computing
$\mathcal{R}_0$ by fabricating a random scene and cameras, then 
$\mathcal{A}_0$ by projecting features in each image.
From this initial problem-solution pair, we may then generate a complete set of $312$ solutions by parameter continuation along random monodromy loops in the space of parameters.
Such monodromy-based heuristics are standard in numerical algebraic geometry.
A complete description is beyond the scope of this paper, see
\eg,~\cite{Bates:Hauenstein:Sommese:Wampler:SIAM2008} or~\cite{Duff:Hill:Jensen:Lee:Leykin:Sommars:Monodromy:IMA2018}, where the latter work describes the implementation we used.

For the minors-based formulation of Chicago, an \emph{ad-hoc}
variant of the gamma trick may be be used with the linear segment
homotopy~\eqref{eq:linear:segment:h}.
The variant is used in the implementation of \minus, and is based on the following idea: pick $\gamma_1, \gamma_2, \ldots , $ at random from the complex unit circle, and consider the parameter values $\mathcal{A}^{\gamma_1, \gamma_2, \ldots }$ obtained by the following replacements
\begin{equation}
  \begin{split}
\bl_{1 v} &\to \gamma_1 \, \bl_{1 v}\\
\bl_{2 v} &\to \gamma_2 \, \bl_{2 v}
  \end{split}
  \ \ \ \ \ \ \ \ \ \ 
  \begin{split}
a_{1 v} &\to \overline{\gamma_1} \, a_{1 v}\\
a_{2 v} &\to \overline{\gamma_2} \, a_{2 v}\\
  \end{split}
    \ \ \ \ \ \ \ldots
\end{equation}
These replacements are designed so that systems parametrized by $\mathcal{A}$ and $\mathcal{A}^{\gamma_1, \gamma_2, \ldots }$ have the same solution sets.
Thus, for generic starting and target parameters $\mathcal{A}_0$ and
$\mathcal{A}^*$, real or complex, we may numerically continue the solutions of
$F(\mathcal{R}; \mathcal{A}_0) = 0$ to those of $F(\mathcal{R} ; \mathcal{A}^*)
= 0$ using the linear segment connecting $\mathcal{A}_0^{\gamma_1, \gamma_2, \ldots }$ and $(\mathcal{A}^*)^{\gamma_1, \gamma_2, \ldots }$ in the space of parameters.

We conclude this section with Algorithm~\ref{alg:homotopy}, which contains a
high-level description of our \hc\ solver in pseudocode.

\begin{algorithm}[h]
  \small
\label{alg:homotopy}
	\SetKwInput{Input}{input}\SetKwInOut{Output}{output}
	\SetKwFunction{Union}{Union}\SetKwFunction{FindCompress}{FindCompress}
	\DontPrintSemicolon
	
	\Input{Square polynomial system $F(\mathcal{R};\mathcal{A})$, where 
    $\mathcal{R} = (\srot_2, \srot_3, \stransl_2, \stransl_3)$,
	and $\mathcal{A}$ parametrizes the data; Start parameters $\mathcal{A}_0$; 
	start solutions $\mathcal{R}_0$ where $F(\mathcal{R}_0;\mathcal{A}_0)=0$; Target parameters $\mathcal{A}^*$; Random $\gamma\in\mathbb{C}$}
  \Output{Set of target solutions $\mathcal{R}^*$ where $F(\mathcal{R}^*;\mathcal{A}^*)=0$}
	%\hrulefill\\
	%\linegoal
	%\algrule
	\vspace{1em}
  Setup homotopy $H(\mathcal{R};s) = F(\mathcal{R}; (1-s) \mathcal{A}_0^{\gamma_1, \gamma_2, \ldots }+ s (\mathcal{A}^*)^{\gamma_1, \gamma_2, \ldots })$.
\vspace{1em}\\
  \For{each start solution}{
  $s \longleftarrow 0$\;
	  \While{$s < 1$}{
	  Select step size $\Delta s \in (0,1-s]$. \\
      \textbf{Predict:} Runge-Kutta Step from $s$ to $s+\Delta s$ such that
      $dH/ds = 0$.\\
      \textbf{Correct:} Newton step st.\ $H(\mathcal{R};s+\Delta s) = 0$.\\
      $s \longleftarrow s + \Delta s$
	  }
	}
	\Return Computed solutions $\mathcal{R}^*$ where $H(\mathcal{R}^*,1) = 0$.
	\caption{Homotopy continuation solution tracker}
	\label{alg}
\end{algorithm}

\subsection{Implementation}
\label{sec:cpp-implementation}
\noindent We devised an optimized open source package called MINUS -- MInimal problem
NUmerical Solver, available at \url{github.com/rfabbri/minus}. %~.
This is an \hc\ framework specialized for minimal problems, templated in
\textsc{c++} enabling efficient specialization for different problems,
formulations, and precisions. The most reliable and high-quality solver to date
uses a $14\times14$ minors formulation in double precision (64-bit).
The most important optimization is exploiting fixed-length \textsc{c}-style arrays to
optimize memory layout for size and locality. 
We also hardcoded evaluators and used Eigen~\cite{eigenweb}'s LU decomposition
with partial pivoting for linear algebra solves, which proved accurate as
long as double precision is used. The most important compile flag is
\texttt{--ffast-math}; despite aggressive
floating point optimizations, this only affected output within $10^{-10}$ error.

%The \minus\ source code evolved out of a series of optimizations on top of
%generic \hc\ \textsc{c++} code from \macaulay's \textsc{nag} package~\cite{leykin:NAG}.
%The basic C code for the evaluators were obtained using
%from a symbolic representation of the homotopy and its partial derivatives in
%\macaulay, hand-optimizing
%for memory access and removing parameter aliasing. 
% Macaulay2 in smallcaps

%TODO: fail -> succes rate throughout
As shown in Section~\ref{sec:experiments}, \minus\ runs on
average at hundreds of miliseconds and up to $100\times$ faster than
general-purpose \hc. It can run at a few miliseconds at the cost
of reduced success rate in finding the solution, due to more aggressive optimization parameters. 
Such reduced success rate might be mitigated within \ransac, if adequately
assessed. For instance, we successfully devised a ``lossy'' \hc\ parameter to constrain the number of
predictor iterations per solution path, which have
yielded an effective speedup at negligible loss in success rates, Section~\ref{sec:experiments}. 

%This is
%because paths leading to actual solutions tend to be more limited. The safe
%default is $500$ iterations per path.

The second most important algorithm parameter to vary is the maximum number of
correction steps; 4 is the current safe default. Increasing it to
5--7 cuts the runtime down to $\SI{280}{\ms}$.  Another is corrector
tolerance, which affects how many correction iterations are performed:
increasing it $10^4\times$ brings the runtime down to less than $\SI{200}{\ms}$.  The
error rate for these extreme cases can be as high as 50\%, although testing
reprojection error to larger practical levels of $\SI{1}{\pixel}$ precision may bring this
figure up.
%More experiments are required in order to assess the tradeoffs of
%aggressive settings.

%H Future routes for optimizations ----------------------------------------------
%\noindent{\bf Possible optimizations.}
%The conditioning of the linearized homotopies (Jacobian matrices) as
%one varies the formulation should be assessed to search for runtime and stability.

%% 
%% Success rate can also can be increased by performing tests on the input points
%% and lines to pre-filter near-collinear configurations, which make the system
%% close to underconstrained; these may be blindly designed by machine-learning a
%% configuration-speed hash as in~\cite{pollefeys:configuration} and skipping 
%% configurations leading to slow speeds.

%A classic optimized numerical code that bears striking similarities with \minus,
%in contrast with symbolic algorithms, is \sqrtalg, square root (and inverse
%square root). %In fact, \minus\ can be thought of computing a $\sqrt[312]{\ \cdot\ }$ for Chicago.
Like \minus, widespread fast numerical algorithms to compute simple functions such as \sqrtalg\ solve
polynomial equations iteratively, and the key lies in the
starting point~\cite{Parilla:etal:sqrt:IoT:2022}.
% (which can be seen as a crude and fast approximation to the solution)
%The actual start solution that requires the least number of iterations for any target
%input evolved from decades of experimentation by the real-time 3D graphics community and
%became known as ''the magic number''.% The \hc\ technology in
%\minus\ benefits from similar fine-grain tuning of start solutions.
The start system in \minus\ is by default precomputed from random parameters; it could instead be sampled from
our synthetic data, and the closest camera could be selected
matching a similar configuration of correspondences.
See also companion work by our
coauthors~\cite{Hruby:Duff:Leykin:Pajdla:CVPR2022}.
Varying the problem formulations also has potential for speedup.
Further eliminating variables to, say $6\times 6$, could
bring improvements since linear solves could be explicitly inverted. 
A GPU implementation is explored in companion work by our coauthors~\cite{Chien:Fan:Tsigaridas:Kimia:etal:CVPR2022}.
%Further opportunities for
%speedups are discussed in Section~\ref{sec:experiments}.

% DONE UNTIL HERE -------------------------------------------------------

% Originated from Macaulay2 NAG 
%
%
% Other compile flags
%
% --ariel?
% fraction of compute effort in linear solves vs evaluators

% Success x  Speed Tradeoff

% TO Experiments

%% Future; incremental numerical  methods

% TODO perhaps a simple plot of when it is worth reducing a nonlinear system of
% equations to a number n x linear solves, to see how many iterations we can
% afford for our system to get it running very fast. This would give an idea of
% how much to run.

% Evaluator

\section{Experiments} \label{sec:experiments}
\noindent Experiments are conducted first for synthetic data for a 
controlled study, followed by challenging real data. We present results for the
more challenging Chicago problem, since the exact same core solver is used for
Cleveland.

\noindent \textbf{Synthetic data experiments:} The synthetic data from~\cite{Fabbri:Kimia:IJCV2016,Fabbri:Giblin:Kimia:ECCV12} consists of
3D curves in a $4\times 4\times \SI{4}{\cubic\cm}$ volume projected to 100 cameras
%and sampled to $500\times 600$ at sub-pixel precision with an average of one
%sample per pixel from analytical functions,
(Fig.~\ref{fig:synth:data:sample}), and sampled to get 5117 points endowed with orientations (tangents of curves) that are projections of the same 3D analytic points and
tangents, and then degraded with noise and outliers. Camera centers are randomly
sampled on an average sphere around
the scene along normally distributed \emph{radii} of mean $\SI{1}{\m}$ and $\sigma =
\SI{10}{\mm}$. Rotations are constructed via normally distributed look-at directions with mean
along the sphere radius looking to the object, and $\sigma = \SI{0.01}{\radian}$
such that the scene does not leave the viewport, followed by uniformly
distributed roll. This sampling is filtered such that no two cameras are within
$\ang{15}$ of each other. Each camera encompasses a $500\times \SI{500}{\pixel}$
viewport, where the entire dataset is visible at sub-pixel precision with no more
than one sample per pixel.

Our first experiment studies the numerical stability of the \minus\ solver.
%When veridical and accurate correspondences are available, can the system reliably and accurately determine trifocal camera pose? Since ground-truth correspondence is available, 
% .We randomly generate 1000 triplets of true point and line correspondences (which are constructed from tangents to the curves passing through the points). Then, from each sample, we select three random point correspondences and two random line correspondences through two of the points. \textbf{Tomas: this is not correct! tangents are forced on the point when the point is selected.}
The dataset provides veridical point correspondences, which inherit an orientation
from the tangent to the analytic curve. For each sample set, three triplets of
point correspondences are randomly selected with two endowed with the
orientation of the tangent to the curve. Only real 
solutions that generate positive depth are retained.  The
unused tangent of the third triplet is used to verify the solution as it
provides an unused equation. For each of the remaining solutions only one pose
is determined.

The error in pose estimation is compared with ground-truth as the
angular error between the normalized translation vectors and between the quaternions. The 
process of generating the input to pose computation is repeated $10^3$ times and
averaged. This experiment demonstrates that: ({\em i}) pose estimation errors are
negligible, Fig.~\ref{fig:numerical-stability}(a); ({\em ii}) the number of
actual solutions is small: 35 real solutions on average, pruned down to 7
on average by enforcing positive depth, and even further to about 3-4 physically realizable solutions on average employing the unused tangent of the third
point as verification, Fig.~\ref{fig:numerical-stability}(b); note that these
extra solutions can then be detected by \ransac;
%These can then be handed over to \ransac\ for verification.
({\em iii}) the solver fails in about 1\% of cases, which,
while not a problem for \ransac, can be eliminated by running the solver for that solution path
with higher accuracy or more parameters at a higher computational cost.%
\begin{figure}
   \centering
   \begin{tabular}{cc}
   \includegraphics[width=0.6 \linewidth]{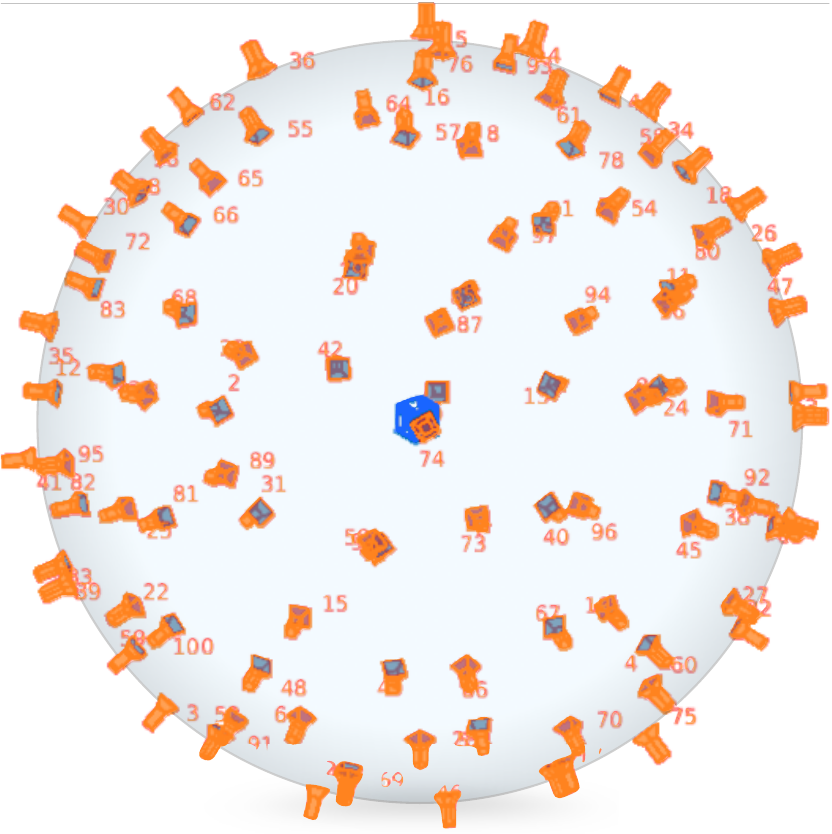}&
   \includegraphics[height=4.8 truecm]{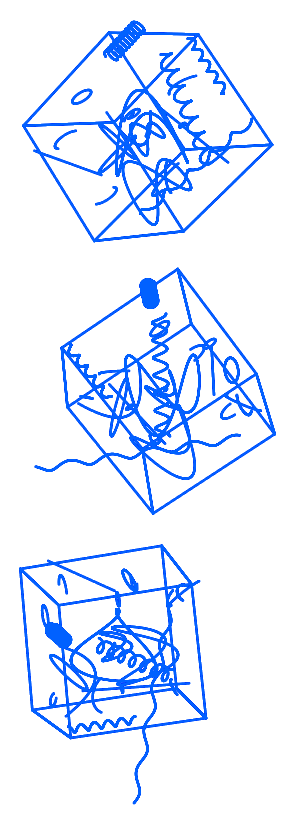}
   \end{tabular}
   \caption{Sample views of our synthetic dataset. Real datasets have also been used in
     our experiments. (3D curves are from~\cite{Fabbri:Giblin:Kimia:ECCV12,Fabbri:Kimia:IJCV2016}).
 }
\label{fig:synth:data:sample}
\end{figure}%
% \paragraph{Numerical stability}
% To check the numerical stability of the minimal solver, we randomly pick 3 pairs of point-tangent correspondences. And use three points and two tangents to build a test case without any noise. We run 1000 random cases to check the numerical stability of the proposed solver. The average number of real solutions is 38. We showed errors of computed parameters with respect to the ground truth in Figure~\ref{fig:numerical-stability}. The errors are computed as follows: for translation we compute the Euclidian distance between ground truth and estimated translations; for rotation we use normalized quaternion to represent rotations so that the angle between quaternions is used as error measurement. Figure~\ref{fig:numerical-stability} shows that the numerical stability of our solver.
\begin{figure}
  \centering
%  \begin{tabular}{cc}
  \includegraphics[width=0.95\linewidth]{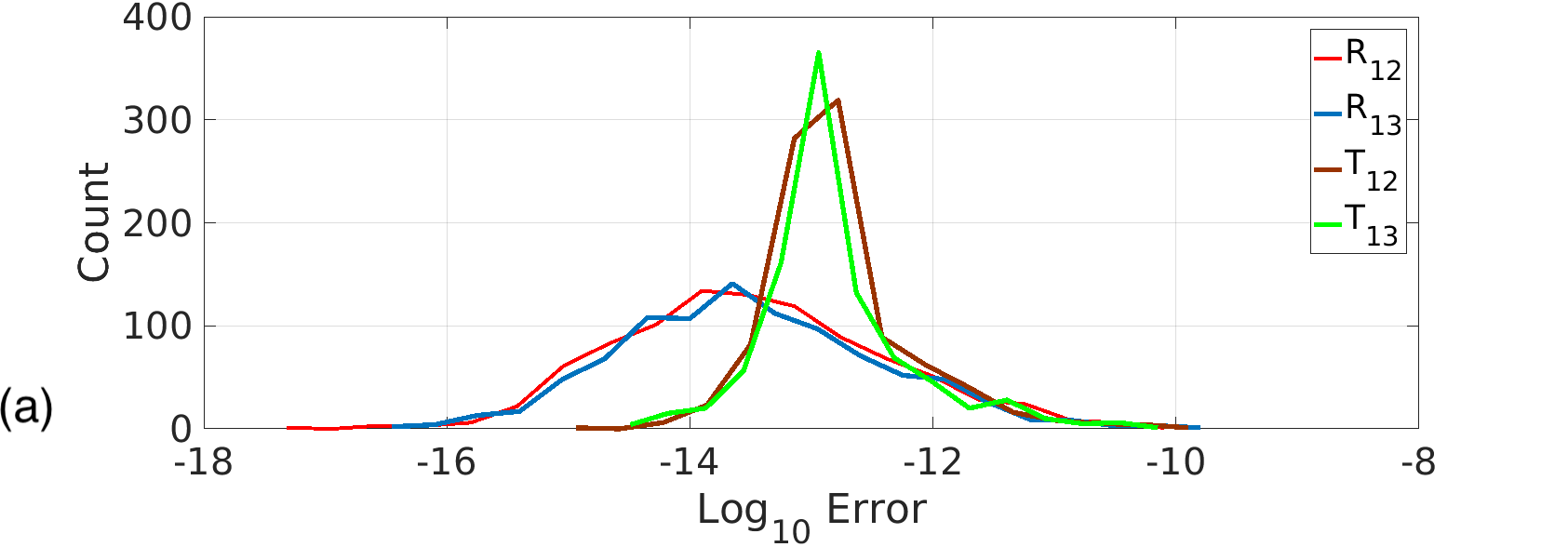}\\
  \includegraphics[width=0.95\linewidth]{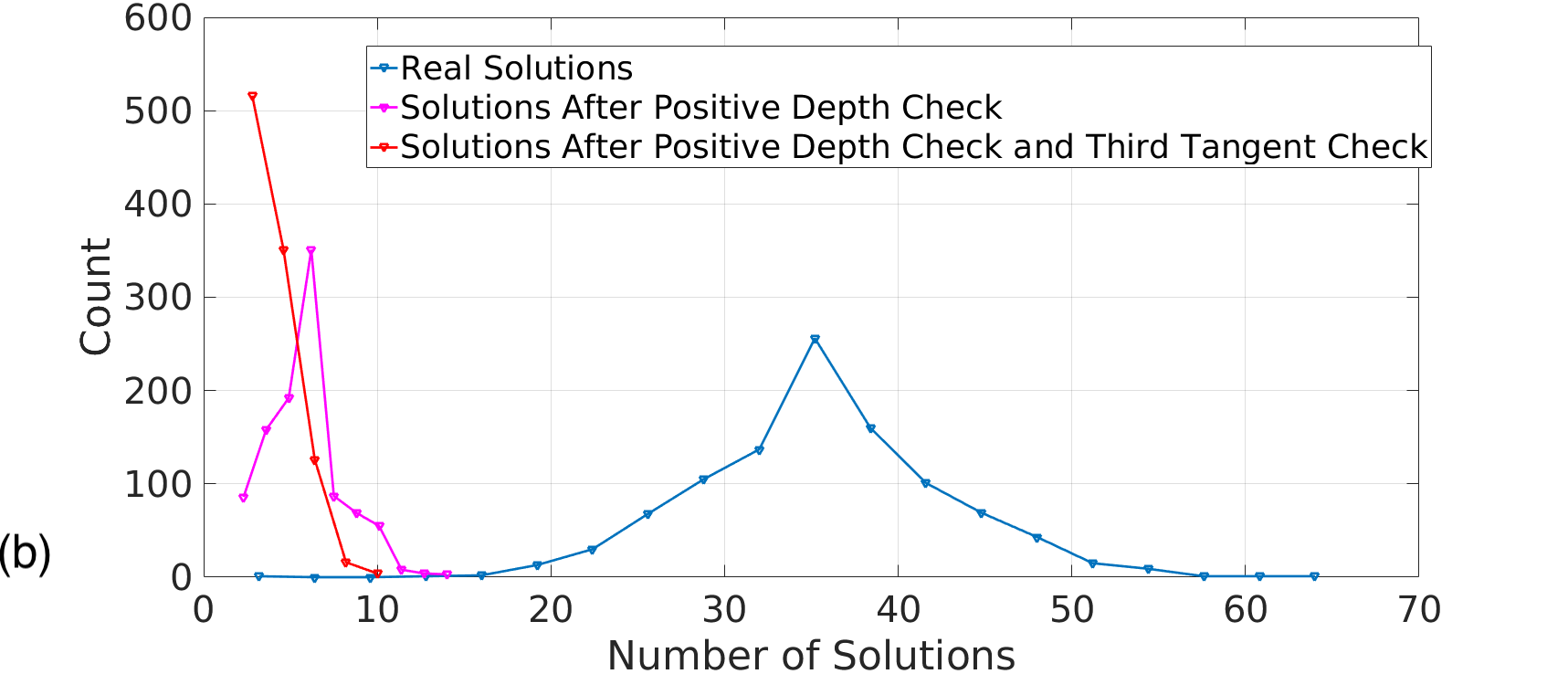}
%  \end{tabular}
  \caption{(a)  Errors of computed pose are small showing that the solver is
  numerically stable. (b) The distributions of the numbers of solutions.}
  \label{fig:numerical-stability}
\end{figure}%
%\paragraph{Feasible solutions and runtime}
% As shown in the previous sections, for one test case, the minimal solver will return averagely 52 real solutions. Not all of them are correct, in that the geometric constraint is not considered in this minimal solver. In practice, the real solutions will be filtered using the positive depth constraint. The way to do that is to triangulate each correspondence and check if the depth is positive or not. The points behind the camera will be deleted. After this procedure, $90\%$ of the solutions will not be considered as feasible solutions. The multiple feasible solutions, in practice, will be filtered using \ransac\ scheme. 
%
% Each step of minimal solve takes averagely 30s in Macaulay2 called from Matlab, on a standard 2.5GHz computer. The \ransac\ scheme and the filtering of the solution are implemented in Matlab and each step of filtering non-feasible solution takes 50 ms in Matlab. A C++ parallel computing implementation of homotopy continuation will drop the solution time to 1.9 second on a standard 2.5GHz computer.

The second experiment shows that we can reliably and accurately determine
camera pose with correct but noisy correspondences. Using the same dataset and
a subset of the selection of three triplets of points and tangents -- 200 in
total -- zero-mean Gaussian noise at various levels was added both to the feature locations
and to the orientation of the tangents, reflecting expected feature localization
and orientation localization error. The noise levels on points
and tangents reflect those found in curve extraction methods~\cite{Kimia:Li:Guo:PAMI18}. A \ransac\ scheme
determines the feature set that generates the highest number of inliers. 
Experiments indicate that the translation and rotation errors are
reasonable. Fig.~\ref{fig:NoiseAnalysis} (top) shows how localization error
affects pose under a fixed orientation perturbation of $\SI{0.1}{rad}$;
Fig.~\ref{fig:NoiseAnalysis} (bottom) shows how the extent of orientation error
affects pose under a fixed localization error of $\SI{0.5}{\pixel}$ (pixels).
\begin{figure}
\centering
\includegraphics[width=0.505\linewidth]{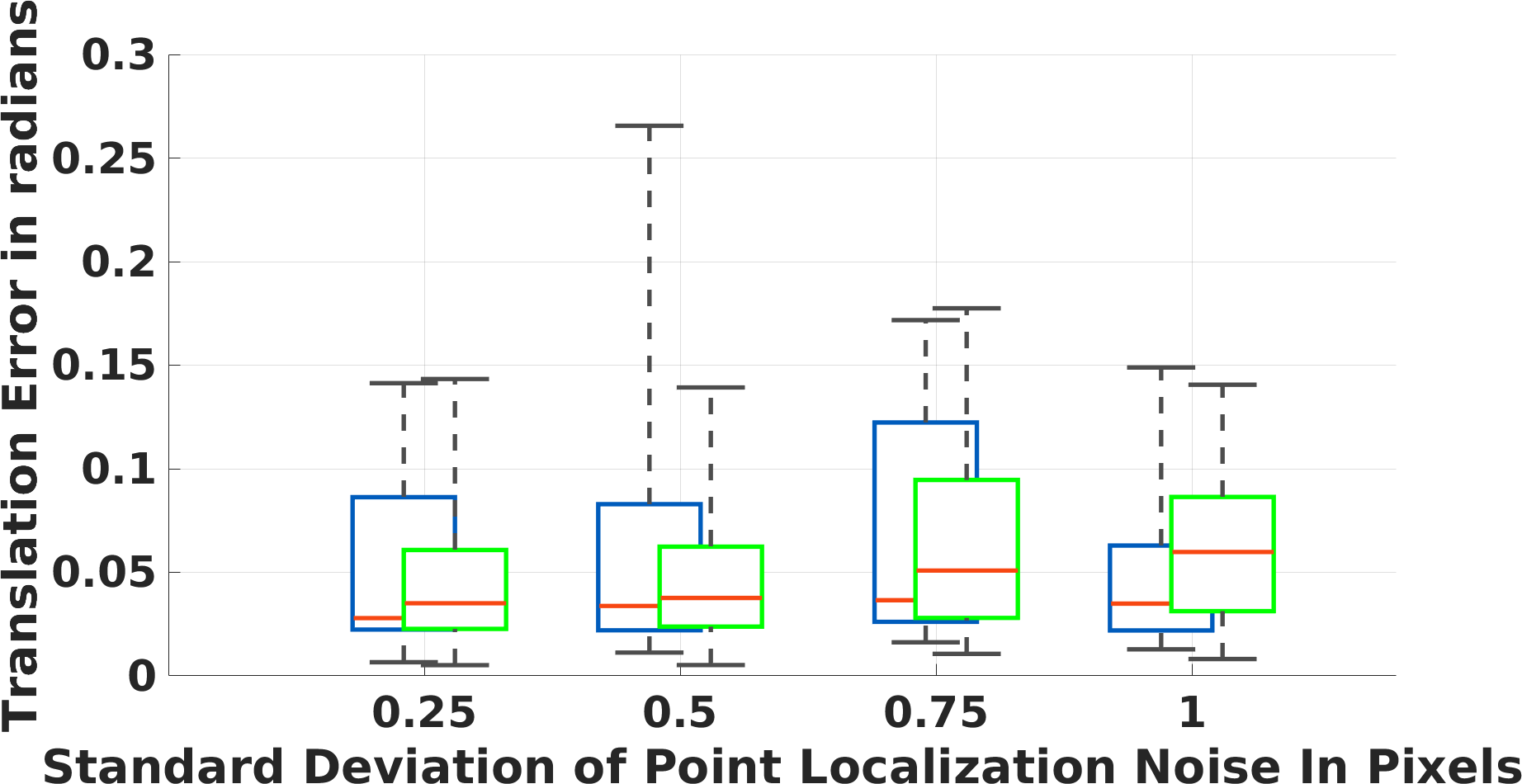}%
\includegraphics[width=0.505\linewidth]{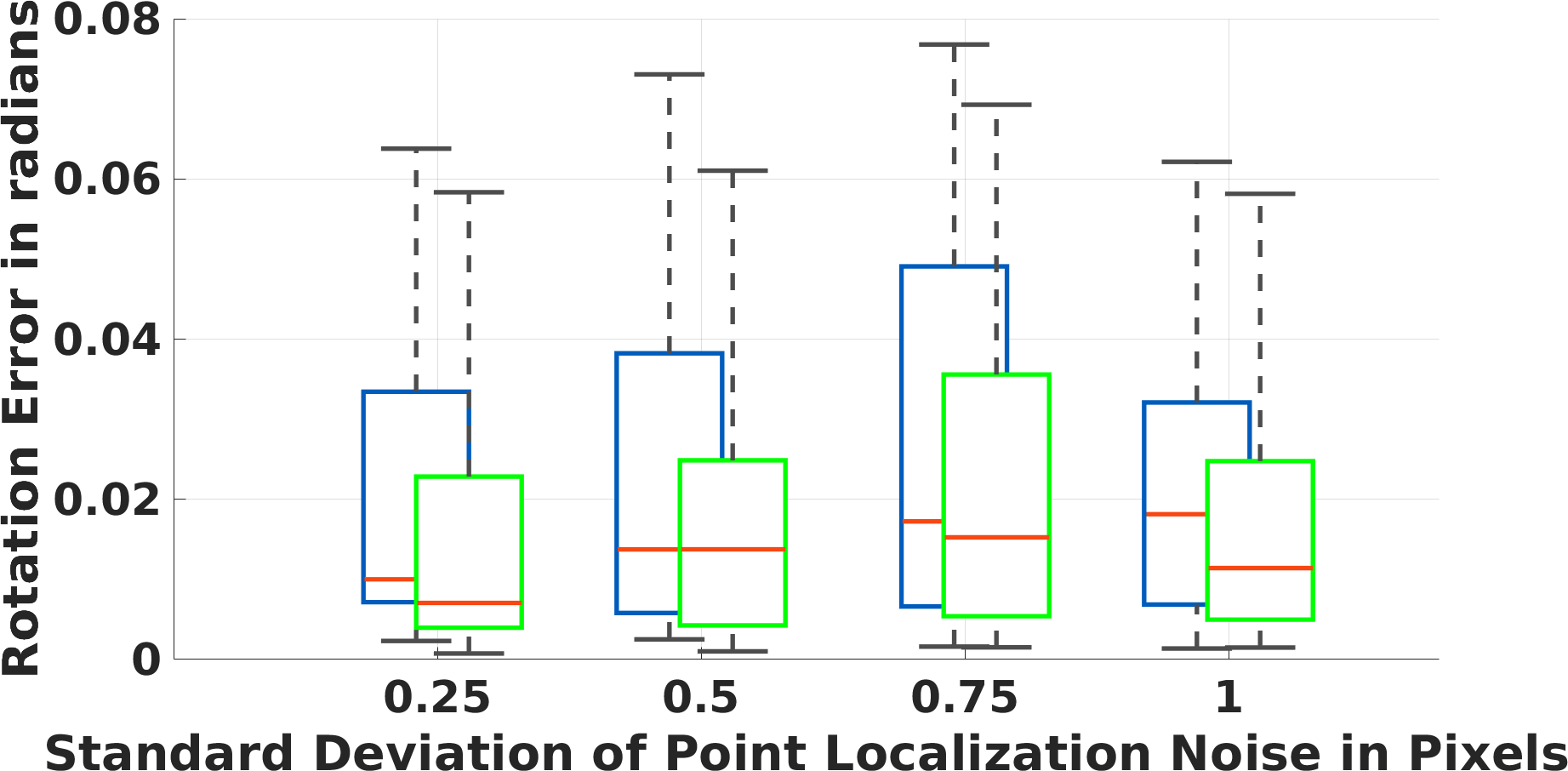}
\includegraphics[width=0.505\linewidth]{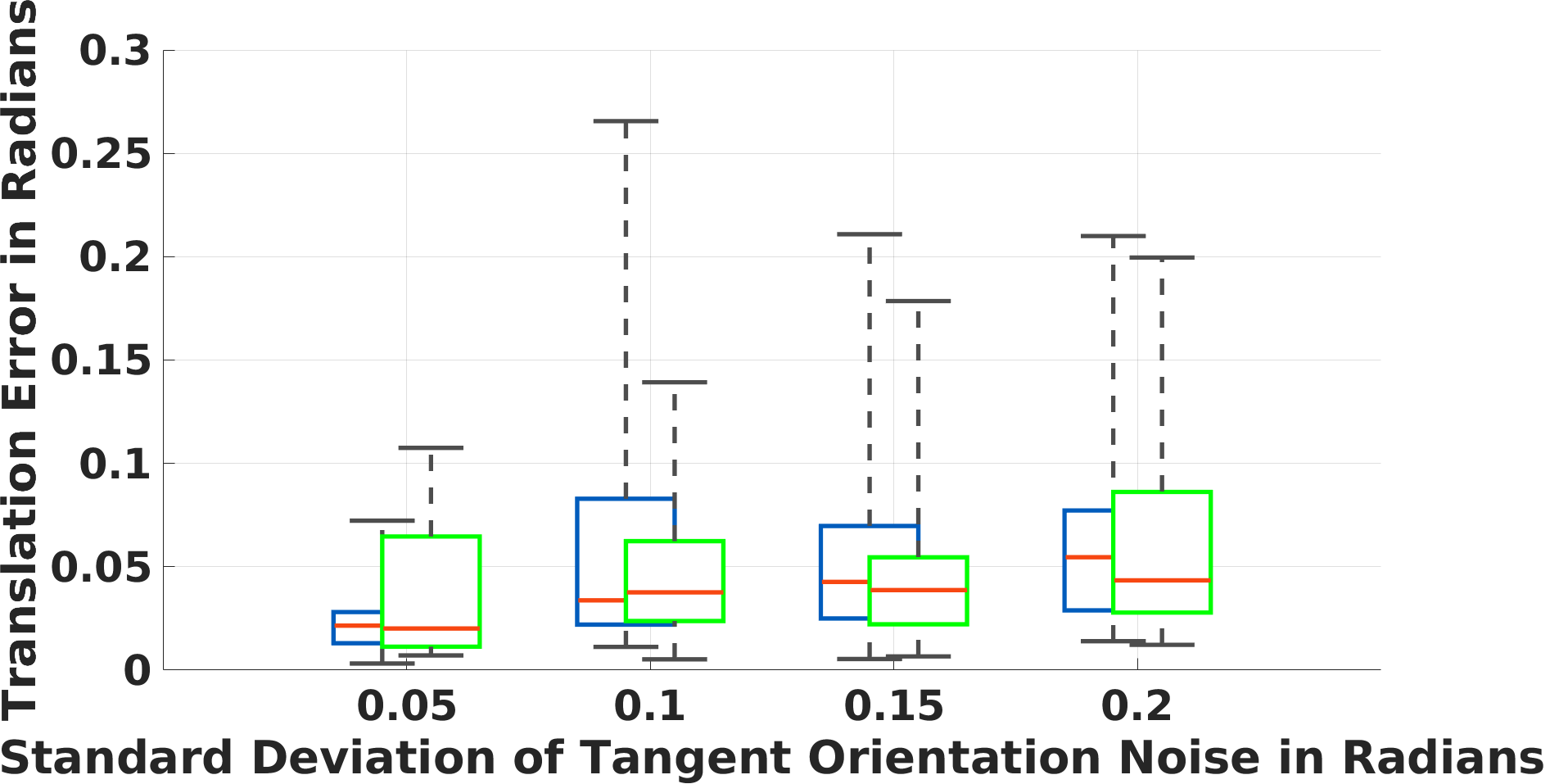}%
\includegraphics[width=0.505\linewidth]{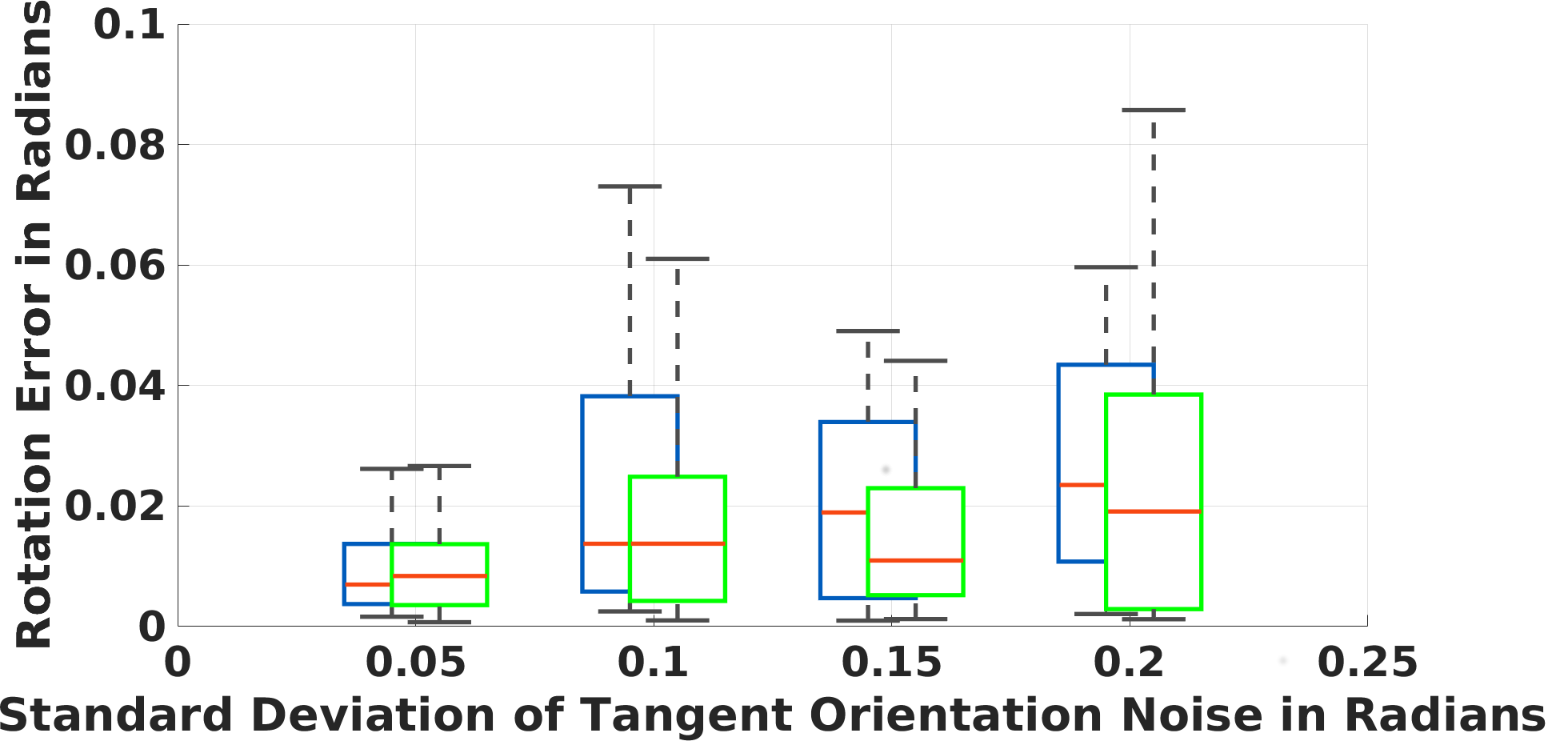}
\includegraphics[width=0.4605 \linewidth]{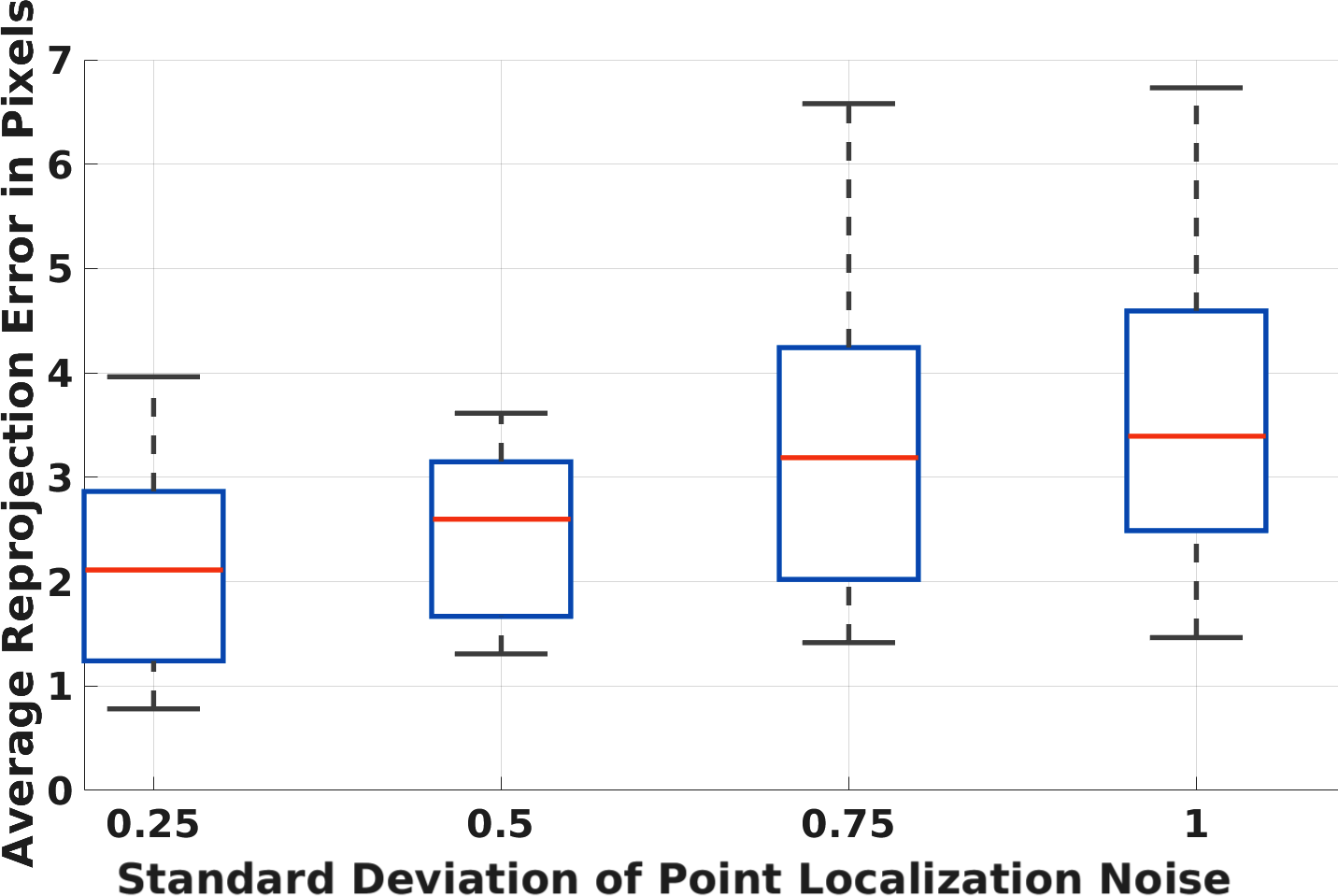}%
\includegraphics[width=0.53 \linewidth]{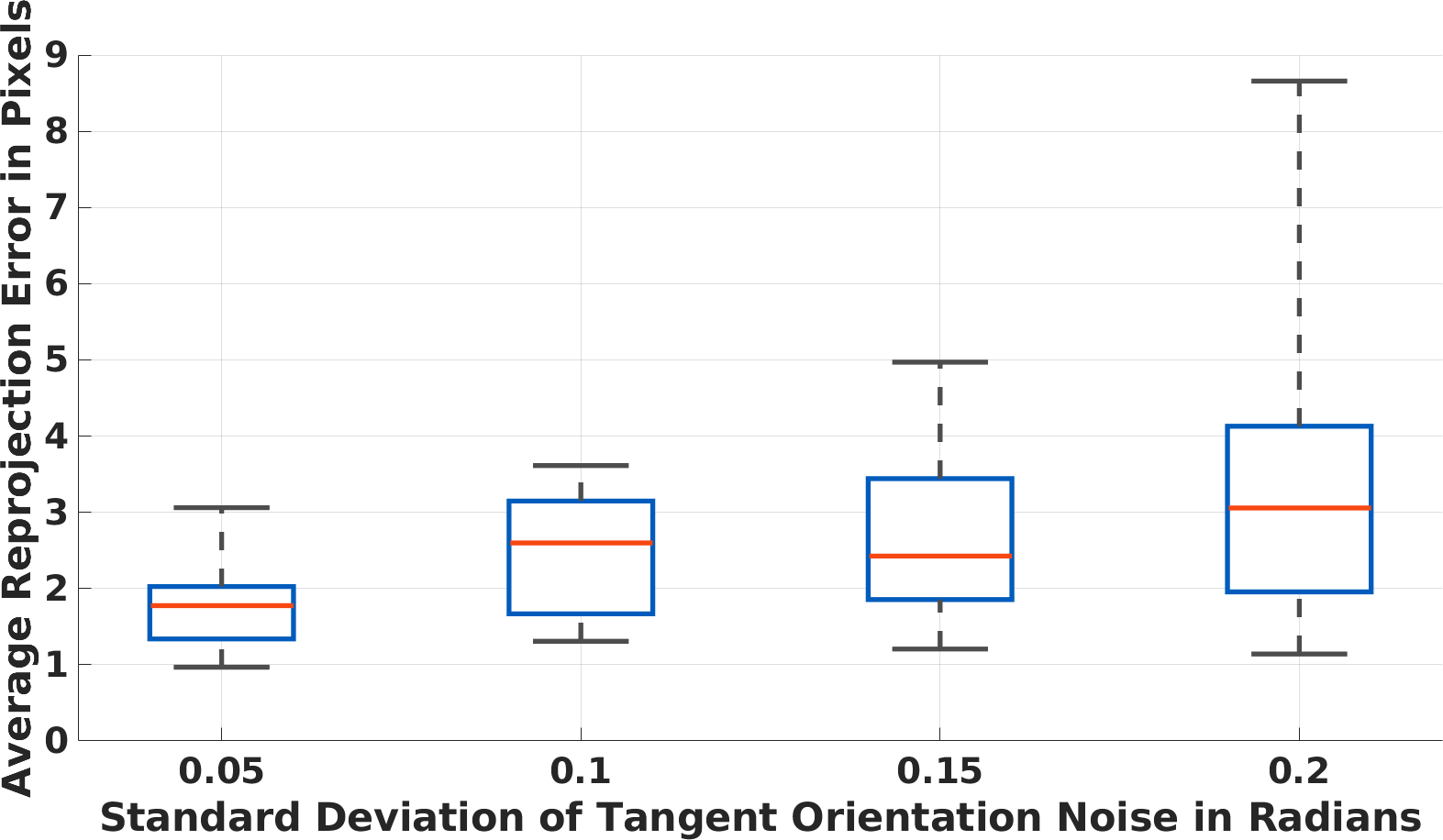}
 \caption{Pose error between views 1 and 2 (blue) and 1 and 3 (green)
   \emph{vs.}\ feature localization (top) and orientation noise (middle), and
   point reprojection error \emph{vs.}\ localization and orientation noise (bottom).
.} \label{fig:NoiseAnalysis}
\end{figure}
The more meaningful reprojection error, {\em i.e.}, the distance of a point from
the location determined by the other two points, is shown in
Fig.~\ref{fig:NoiseAnalysis}(bottom), averaged over 100 triplets.

The third experiment shows the system can consistently
estimate trifocal pose in the presence of outliers. With a feature localization
error of $\SI{0.25}{\pixel}$ and orientation error of 
$\SI{0.1}{\radian}$, 200 triplets of features were generated, with a fraction
having random location and orientation. The
ratio of outliers is varied over 10\%, 25\% and 40\%, with the experiment repeated 100 times each.
The resulting reprojection error is small and extremely stable,
with median $\SI{2}{\pixel}$ and maximum $\SI{3.6}{\pixel}$ for all outlier ratios.

\noindent{\bf Computational efficiency:} Each solve using our software \minus\
with conservative parameters takes $\SI{440}{\ms}$ ($\SI{660}{\ms}$ in the
worst case), compared to over $1$ minute on average for the best prototypes using general
purpose \hc\ software~\cite{BertiniBook,NAG4M2}, on an Intel core i7-7920HQ with
processor, \textsc{gcc} 5, and four threads. More aggressive but potentially unsafe optimizations towards
microseconds are feasible, but require assessing failure rate.
\begin{figure}[t]
    \centering
   \includegraphics[width=\linewidth,height=0.5\linewidth]{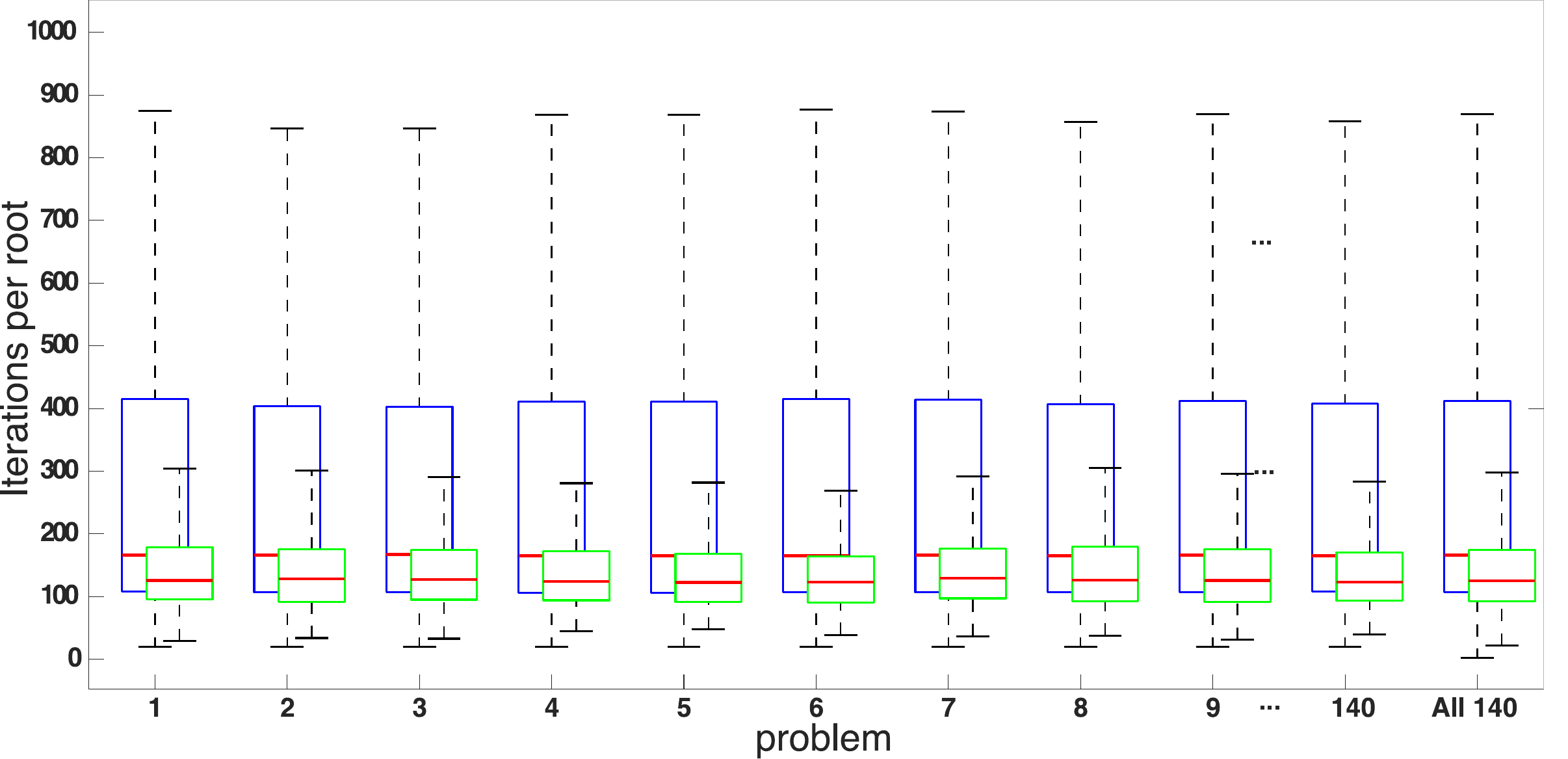}
    \caption{
      Time ($1$ iteration $\approx 1\,\mu\text{s}$)
      spent in root paths leading to ground-truth (green) \emph{vs.} undesired
      roots (blue), is stable across configurations.
      %thus the variance comes from the geometry of the solution variety under
      %randomization of gamma trick and affine charts.\todo{translate,buff}
    }
    \label{fig:niter}
\end{figure}
\begin{figure}
    \centering
    % final obtained from keynote in trifocal.key
    \includegraphics[width=\linewidth]{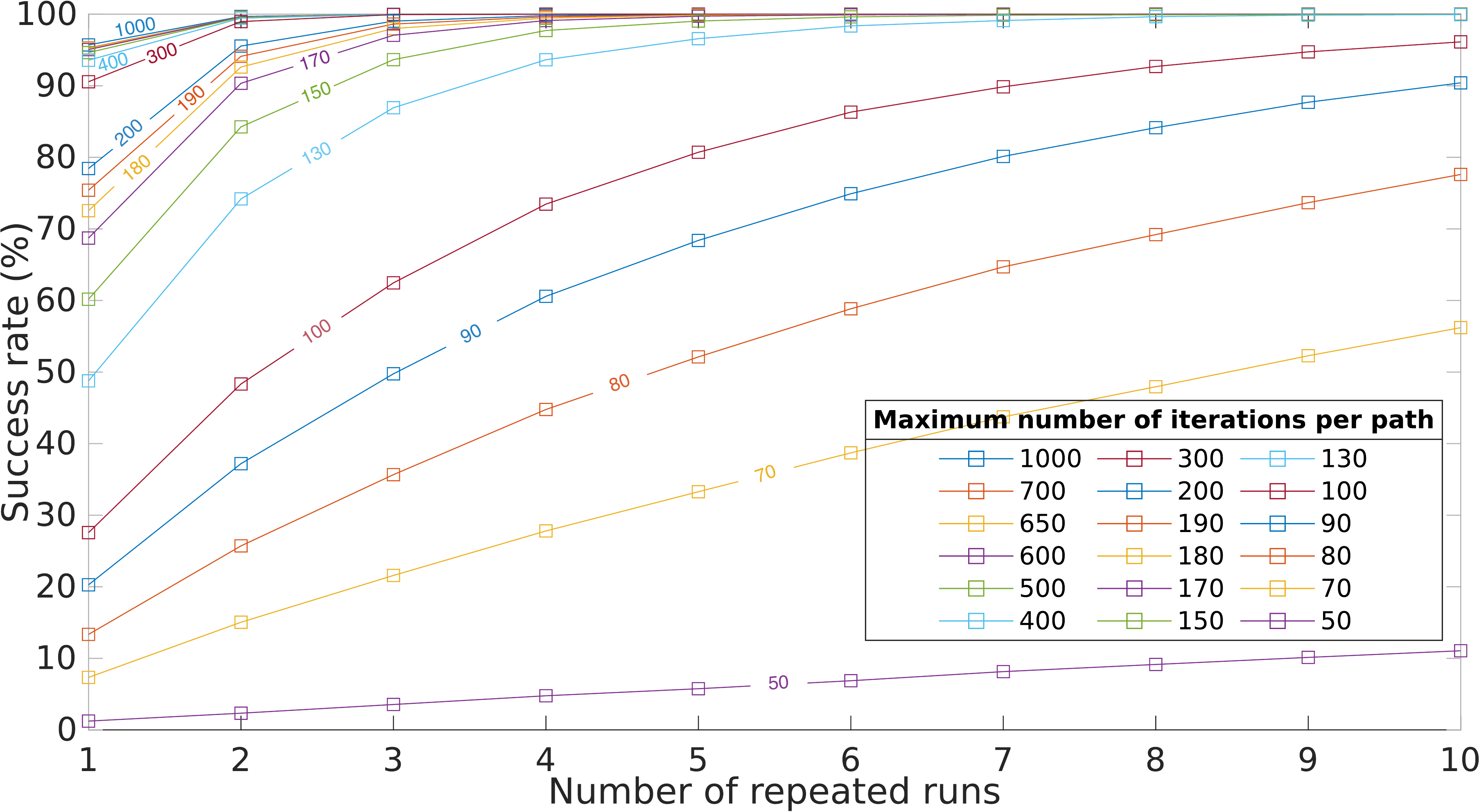}
    \caption{Tradeoff of success rate \textit{vs.} number of iterations per root.}
    \label{fig:nrep}
\end{figure}
%\begin{figure}
%    \centering
%    \includegraphics[width=0.67\linewidth]{figs/synthetic/Outlier/Outlier.png}
%    \caption{Reprojection error on ground truth with different ratio of outliers.}
%    \label{fig:OutlierResult}
%\end{figure}

To assess putting a cap $N_{max}$ on the number of predictor iterations per root,
we first observe that after $10^4$ random solves on synthetic data, the maximum number of
iterations for paths leading to ground-truth was close to $10^3$,
\emph{vs.}\ about $254\times 10^3$ for the wasted paths.
Given that the solve is $\approx 1-4$~$\mu\text{s}$ per iteration, this leads to
concrete routes to optimization. Fig.~\ref{fig:niter} shows that 
the time for roots leading to ground truth \emph{vs.} undesired
paths differ but remain strikingly stable across 140 different random input configurations.
For each configuration out of 140, \minus\ was run 500 times with different randomizations
$\gamma$'s and affine patch parameters.  The minimum number of iterations for
all configurations was 18.

Setting $N_{max} < 10^3$ costs a decrease in
success rate. However, we can regain success rate by
re-runing $N_{rep}$ times with different randomizations. Fig.~\ref{fig:nrep}
shows that running once with $N_{max} = 500$ yields a success of $97\%$,
which is the current default for \minus, providing the average figure of $401ms$.
Running twice with $N_{max} = 200$ yields a similar success rate.
For each $(N_{max},N_{rep})$ operating point, a success is counted if \minus\
found the solution in \emph{any} $N_{rep}$ runs; the final success rate
is averaged by performing this procedure $7000$ times.
If all points have tangents, \eg, 3 \sift\ features, 
as soon as a root reached an \hc\ stop condition we test for positive depth
and stop upon compliance with the third tangent to produce a
hypothesis for \ransac, cutting down average execution time further
with a modest decrease in success rate. The run time remains on the order of
$\SI{100}{\ms}$.\\[1em]
%
%Even with a
%severe cap, the root always appears to
%be found with sufficiently large $N_{rep}$. This shows that tens of \hc\
%iterations are actually sufficient to solve trifocal pose  --- microsecond-scale
%performance is in theory achievable even with generic start solutions; the key
%lies in finding a good initial path direction, perhaps through training of
%algorithm random distributions and by backtracking long paths to other
%randomization directions while keeping maximum depth.
%
%
%No gráfico de acerto/sucesso tds os problemas estão juntos, ou seja, são 500*138
%testes/rodadas. Esses testes foram divididos em 50*138 grupos/experimentos de n
%testes (no máximo 10). No eixo horixontal está o número de testes considerados
%em cd caso, de modo que haja sempre a mesma quantidade de experimentos.
%Considerou-se sucesso encontrar a solução em pelo menos 1  dos n testes. Cd
%curva corresponde aos testes para diferentes nmax de iterações em cd raiz.
%
%
\noindent\textbf{Real data experiments:} 
Much like the standard pipeline, \sift\ features are first extracted from all
images. Pairwise features are found by rank-ordering measured similarities and
making sure each feature’s match in another image is not ambiguous and is above
accepted similarity. Pairs of features from the first and second views are then
grouped with the pairs of features from the second and third views into
triplets. A cycle consistency check enforces that the triplets must also support
a pair from the first and third views. Three feature triplets are then selected
using \ransac\ and the relative pose of the three cameras is determined from two
\sift\ orientations and a third point without orientation. 

Fig.~\ref{fig:realData} shows that camera pose is reliabily and accurately found using triplets of images from 
the \textsc{epfl} dense multi-view stereo image dataset~\cite{Strecha:etal:CVPR08}.
Our quantitative estimates on 150 random triplets from this dataset give pose
errors of $\SI{1.5e-3}{\radian}$ in
translation and $\SI{3.24e-4}{\radian}$ in rotation. The average
reprojection error is $\SI{0.31}{\pixel}$. These are comparable to or better than the interest point-based
trifocal relative pose estimation methods reported in~\cite{Julia:Monasse:SIVT2017}.
Our conclusion for this dataset, whose purpose is to validate the solver, is that our method is at least as good and often
better than the traditional ones. Note that we do not advocate replacing the
traditional method for this dataset. We simply state that our method works just
as well, of course at a higher cost.

The \textsc{epfl} dataset is feature-rich, typically yielding on the order of
$10^3$ triplet features per image triplet. As such it does not portray some
of the typical problems faced in challenging situations when there are few
features available. The Amsterdam Teahouse
Dataset~\cite{Usumezbas:Fabbri:Kimia:ECCV16}, which also has ground-truth
relative pose data, depicts scenes with fewer features. Fig.~\ref{fig:fail}
(top) shows a triplet of images from this dataset where there is a sufficient
set of features (the soup can) to support a bifocal relative pose estimation
followed by a \textsc{p3p} registration to a third view (using
\colmap~\cite{schoenberger2016sfm}). However, when the number of features is
reduced, as in Fig.~\ref{fig:fail} (bottom) where the soup can is occluded,
\colmap\ fails to find the relative pose between pairs of these images. In contrast, our approach, which relies on three and not five features, is able to recover the camera pose for this scene. 

We also created another featureless dataset similar to the one
in~\cite{nurutdinova2015towards} but with the calibration board manually
removed. This scene lacks point features, which is extremely challenging for
traditional structure from motion. We built 20 triplets of images within this
dataset. Within these 20 triplets, camera poses of only 5 triplets can be
generated with \colmap, but with our method, 10 out of 20 camera poses can be
estimated. We reached a 100\% improvement over the standard pipeline on image
triplets. The sample successful cases are shown in Figs.~\ref{fig:teaser} and~\ref{fig:cups}.

\begin{figure}
  \includegraphics[height=0.19\linewidth]{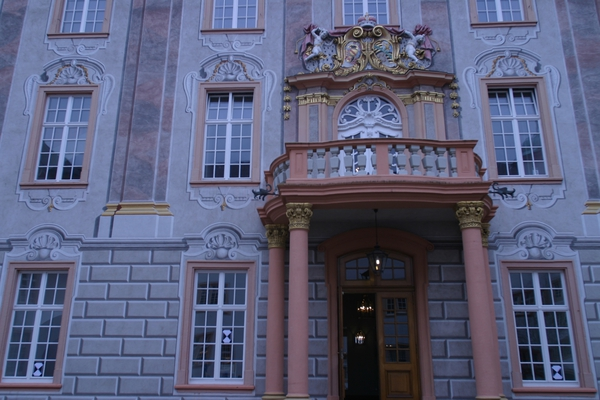}%
  \includegraphics[height=0.19\linewidth]{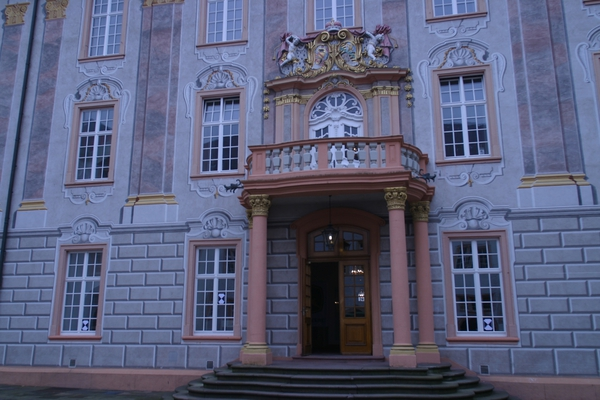}%
  \includegraphics[height=0.19\linewidth]{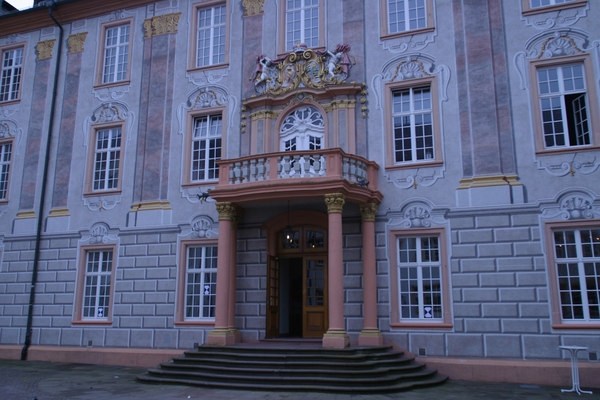}%
  \includegraphics[height=0.195\linewidth]{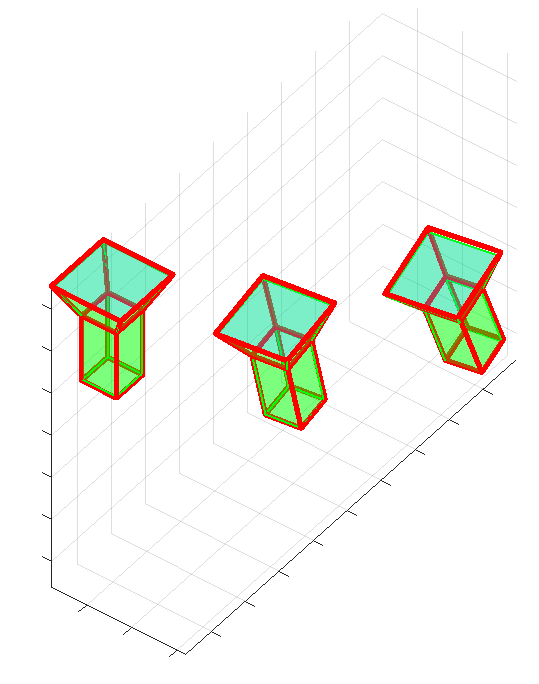}\\
  \includegraphics[height=0.19\linewidth]{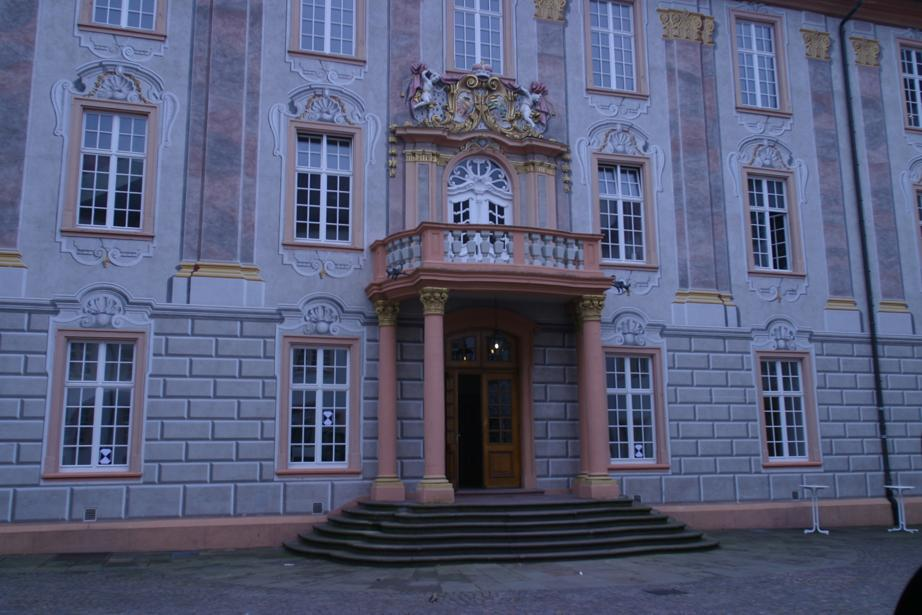}%
  \includegraphics[height=0.19\linewidth]{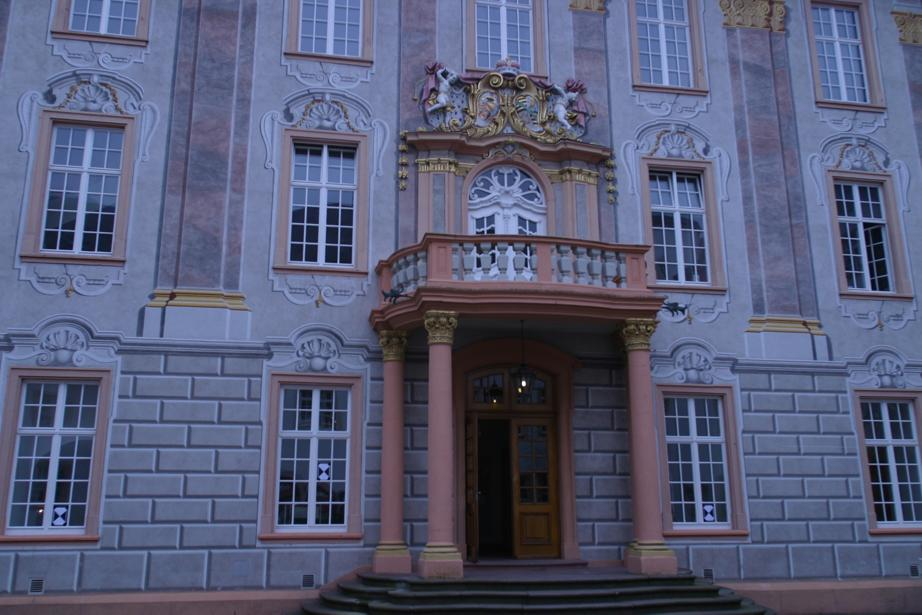}%
  \includegraphics[height=0.19\linewidth]{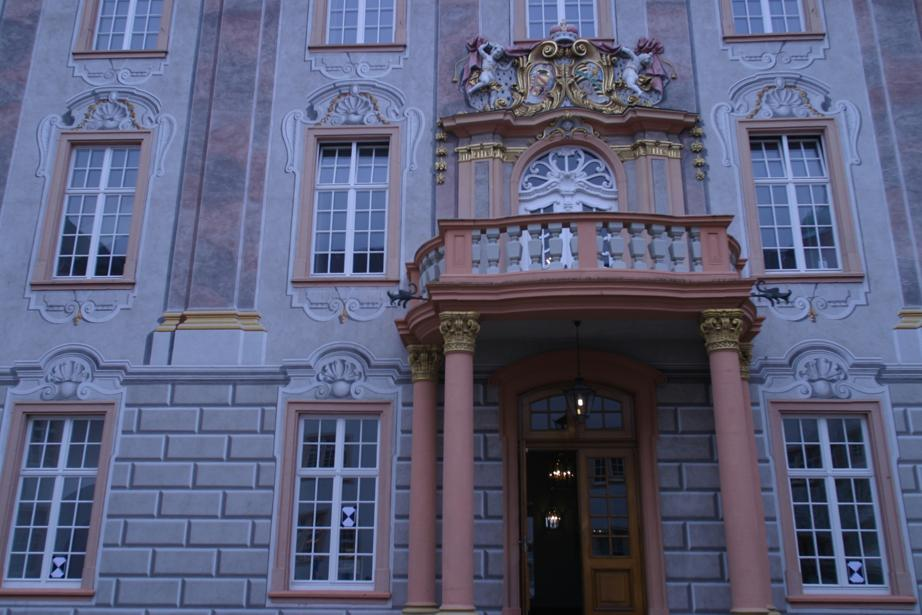}%
  \includegraphics[height=0.19\linewidth]{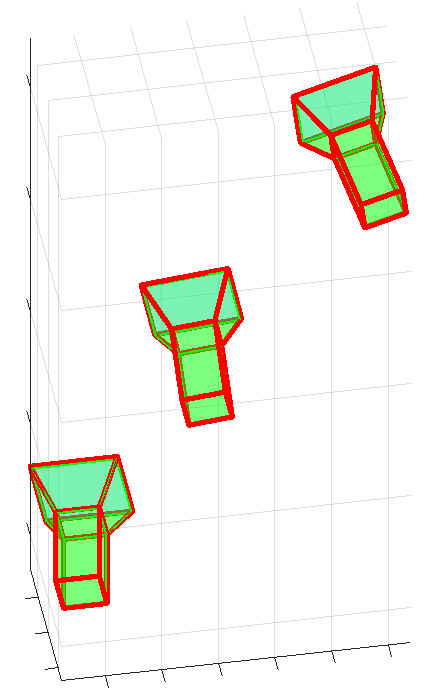}\\
   \includegraphics[height=0.19\linewidth]{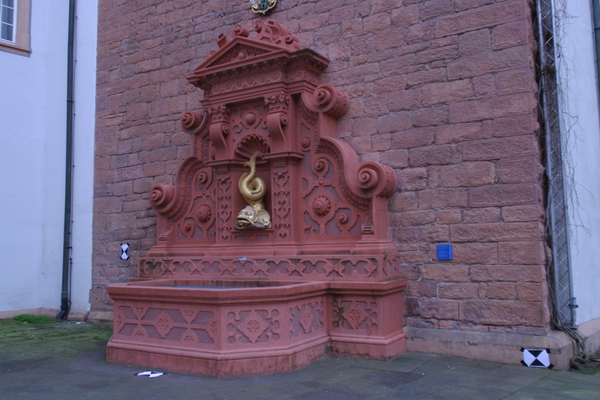}%
  \includegraphics[height=0.19\linewidth]{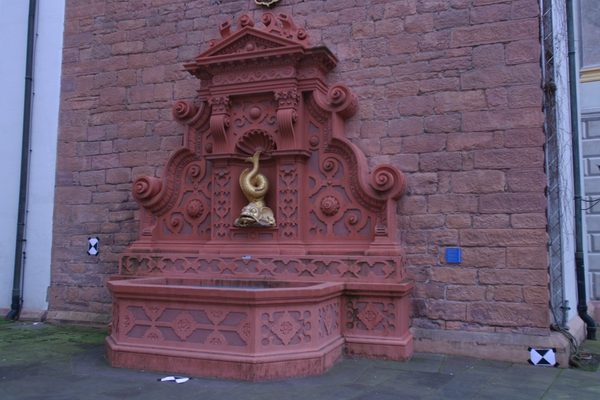}%
  \includegraphics[height=0.19\linewidth]{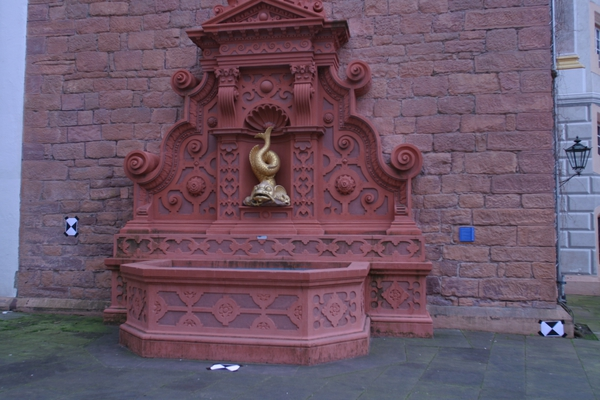}%
  \includegraphics[height=0.14\linewidth]{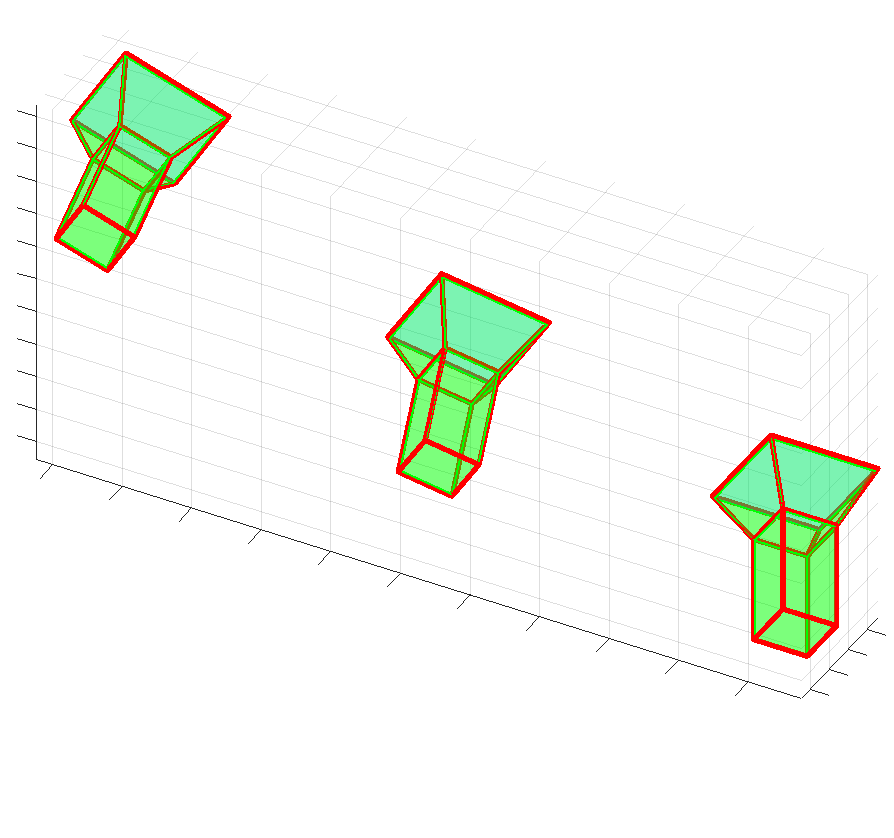}\\
  \includegraphics[height=0.19\linewidth]{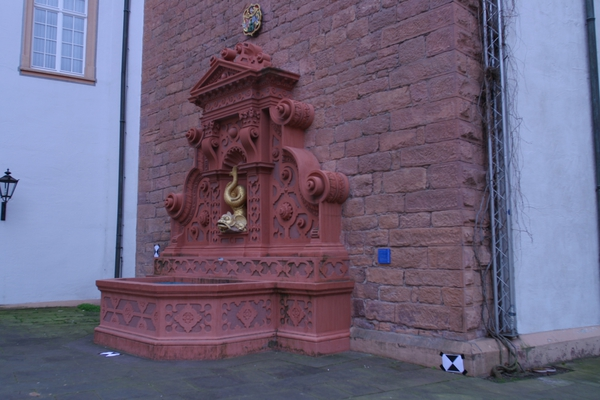}%
  \includegraphics[height=0.19\linewidth]{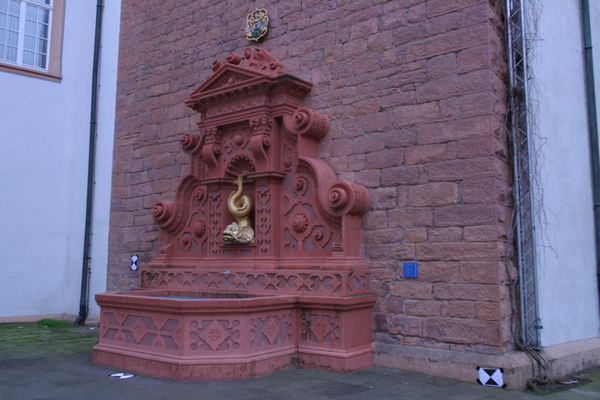}%
  \includegraphics[height=0.19\linewidth]{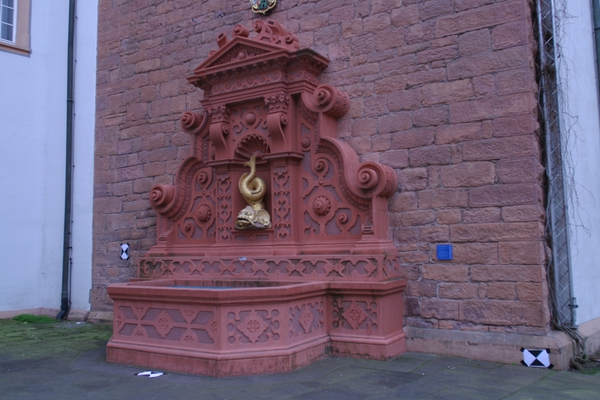}%
  \includegraphics[height=0.19\linewidth]{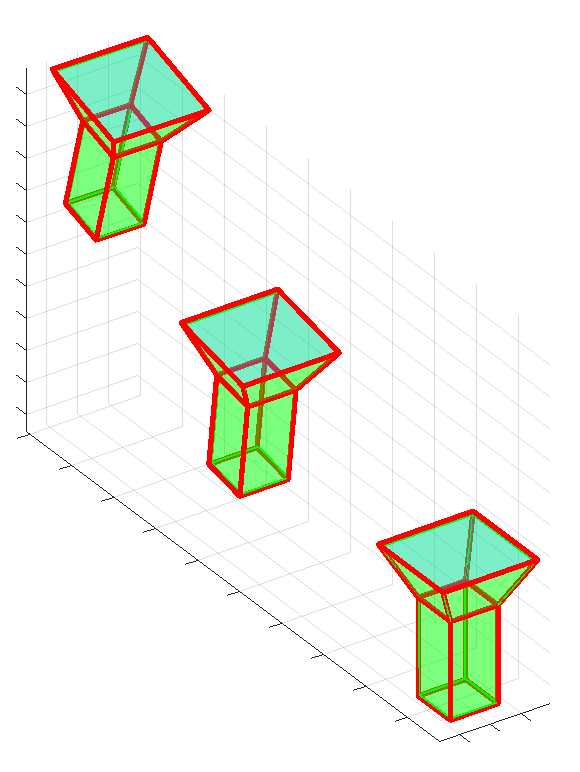}\\
   \includegraphics[height=0.19\linewidth]{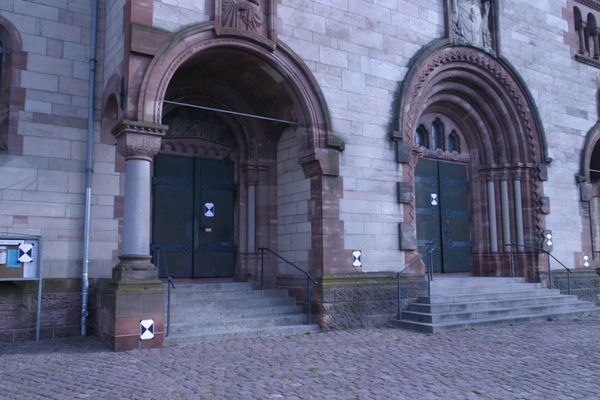}%
  \includegraphics[height=0.19\linewidth]{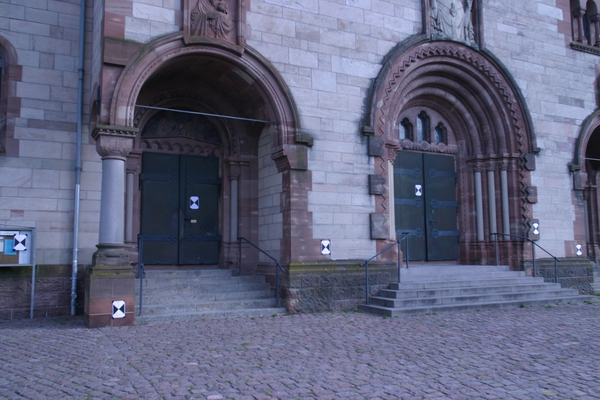}%
  \includegraphics[height=0.19\linewidth]{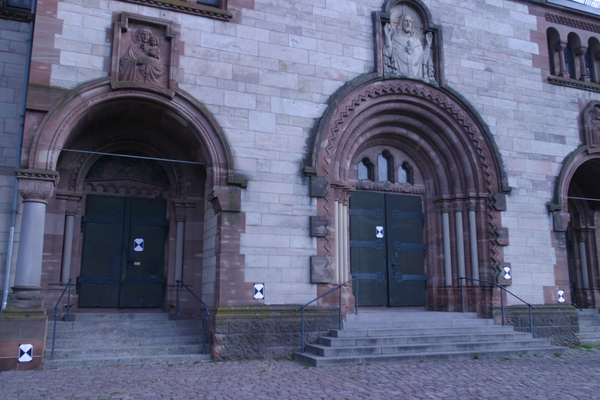}%
  \includegraphics[height=0.145\linewidth]{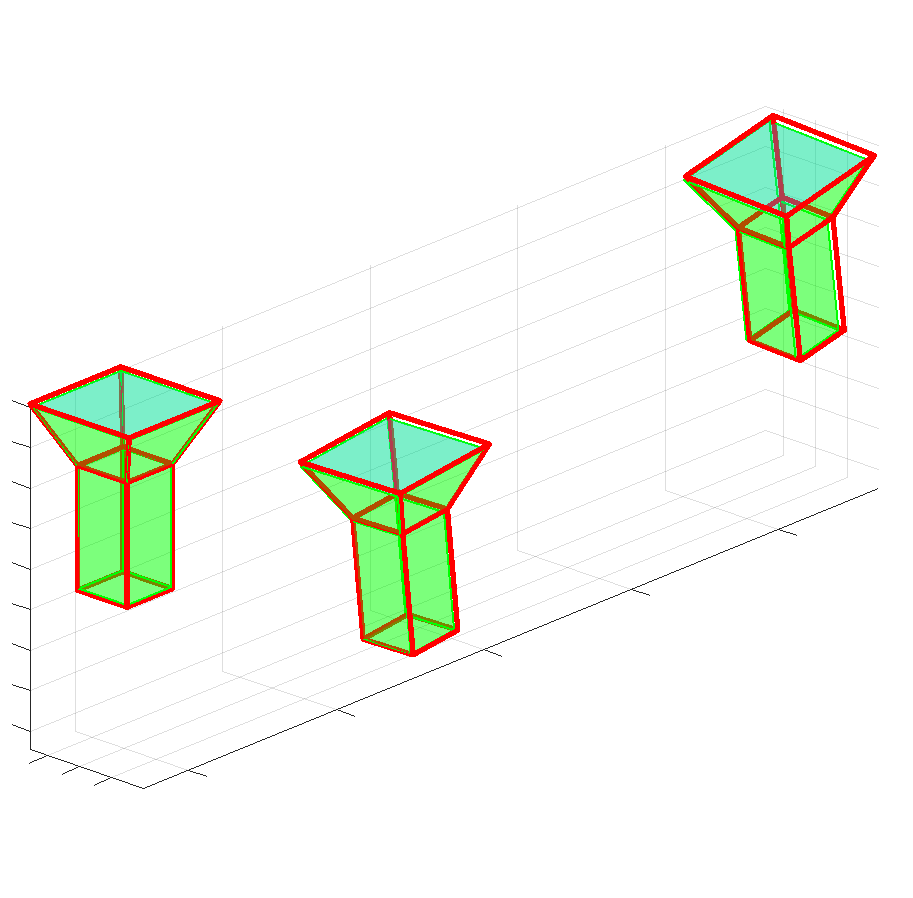}\\
  \includegraphics[height=0.19\linewidth]{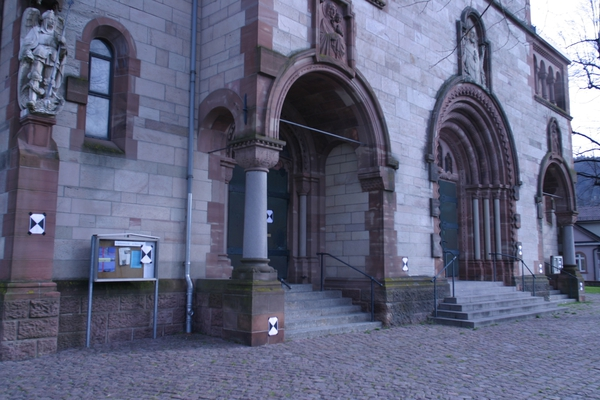}%
  \includegraphics[height=0.19\linewidth]{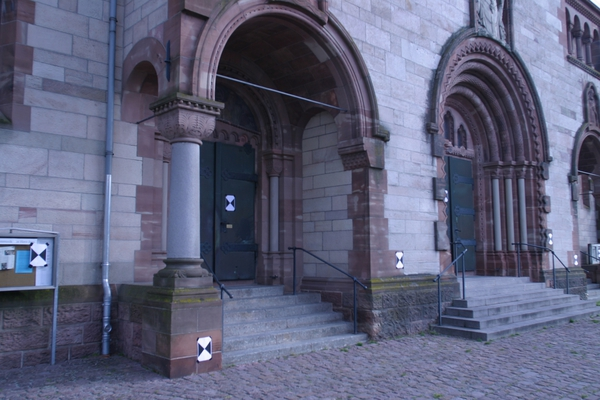}%
  \includegraphics[height=0.19\linewidth]{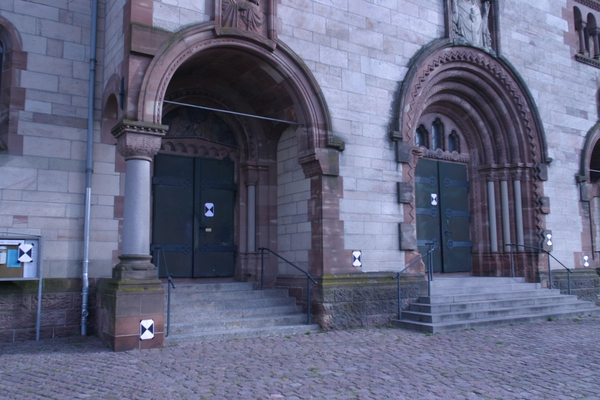}%
  \includegraphics[height=0.158\linewidth]{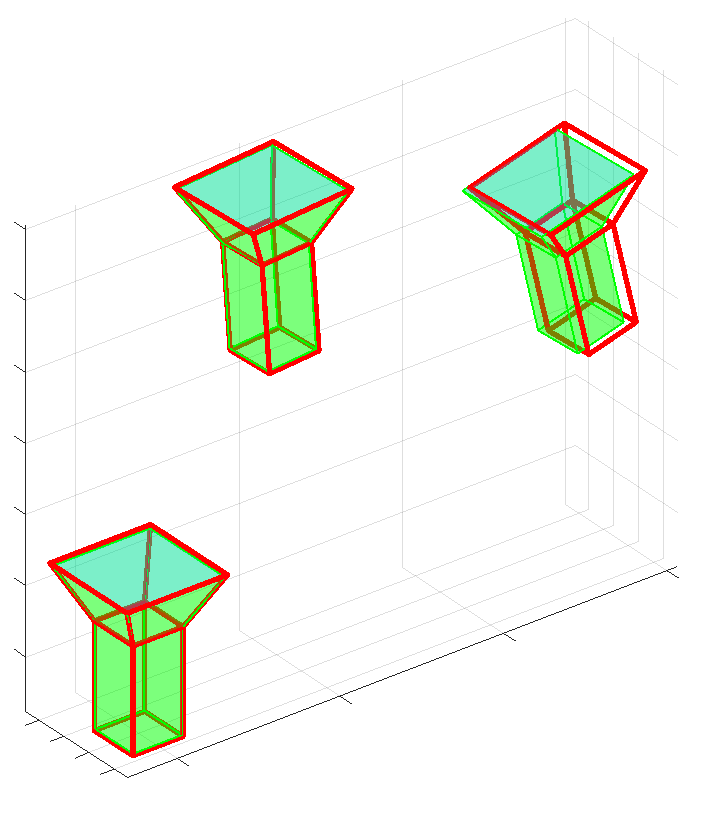}
  \caption{Trifocal relative pose results for \textsc{epfl} dataset. Each row
  shows images with ground truth (green) and estimated poses (red
outline).}
  \label{fig:realData}
\end{figure}
%The error in camera pose with respect to the ground-truth pose is measured and shown in Figure XX. The reprojection error is also shown in Figure XX.
\begin{figure}
   \includegraphics[height=0.19\linewidth]{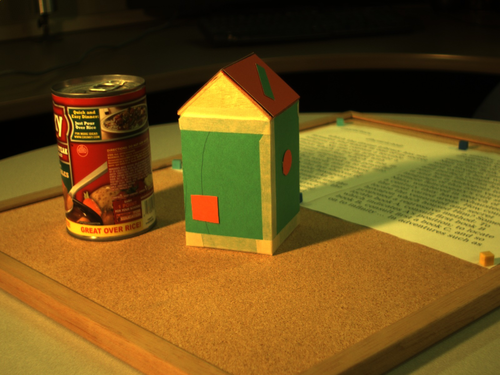}%
  \includegraphics[height=0.19\linewidth]{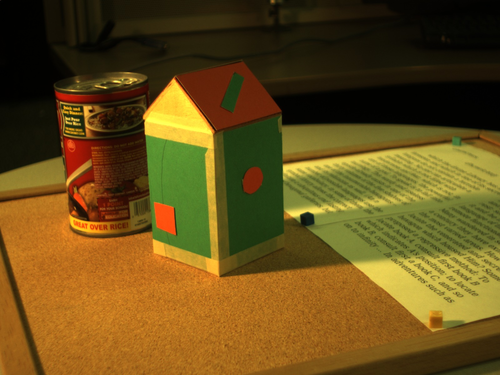}%
  \includegraphics[height=0.19\linewidth]{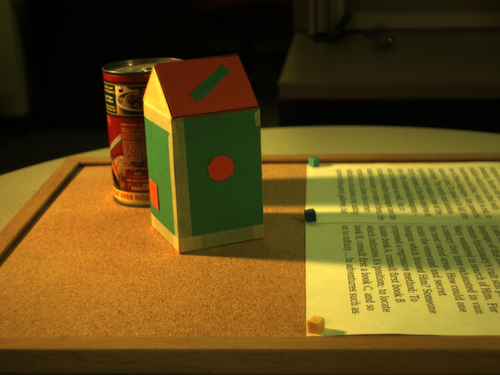}%
  \includegraphics[height=0.14\linewidth]{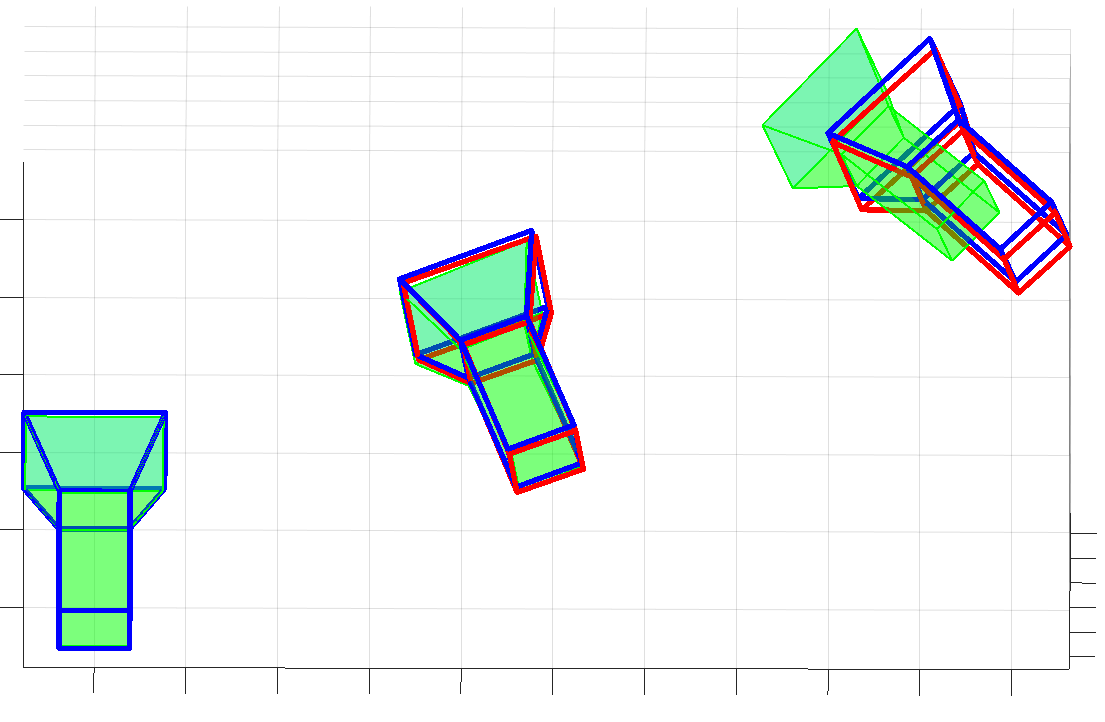}\\
   \includegraphics[height=0.19\linewidth]{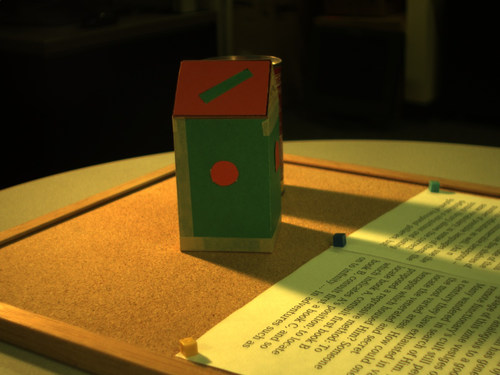}%
  \includegraphics[height=0.19\linewidth]{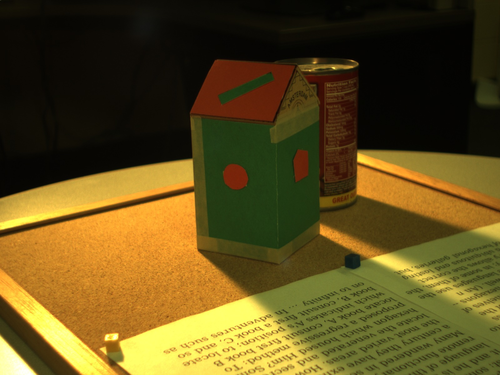}% 
  \includegraphics[height=0.19\linewidth]{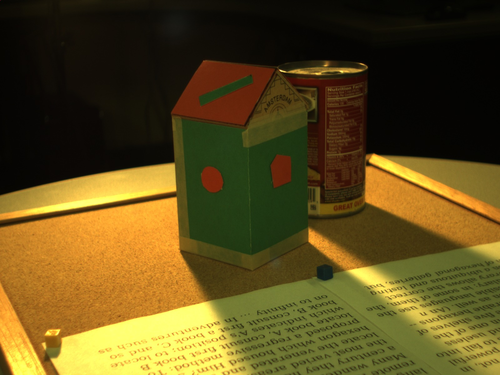}%
  \includegraphics[height=0.14\linewidth]{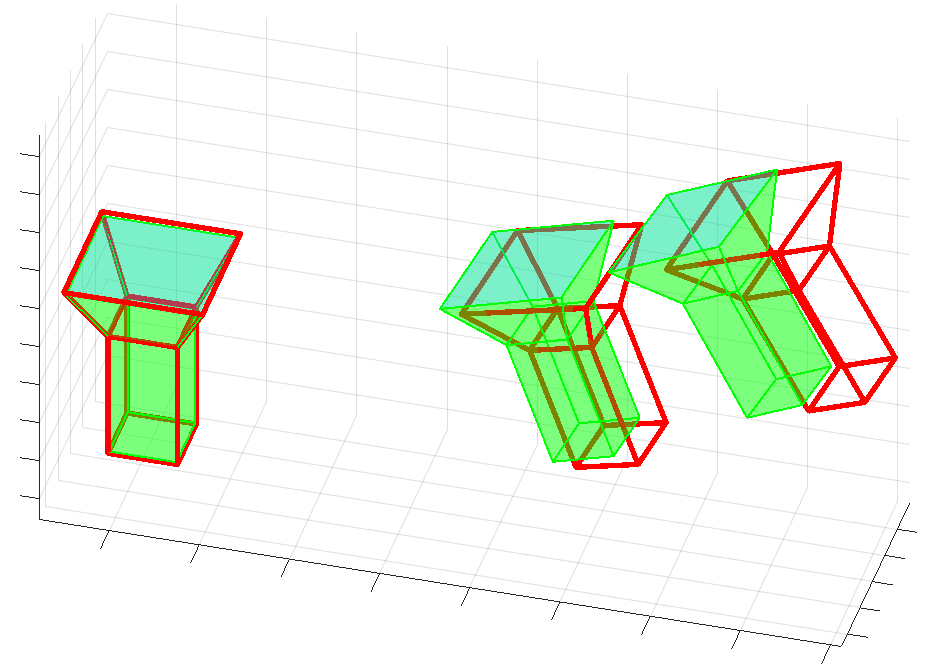}
    \caption{Trifocal relative pose results for 
    Amsterdam Teahouse: a triplet of images that
    \colmap\ is able to tackle (top) and where
    it fails (bottom). Results: \colmap\ (blue outline), ours (red), and ground
truth (green).} \label{fig:fail}
\end{figure}

\begin{figure}[t]
\includegraphics[height=0.21\linewidth]{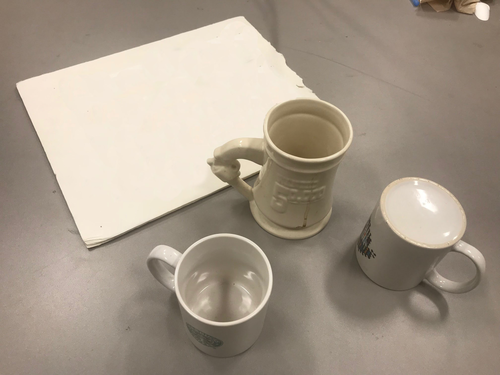}%
\includegraphics[height=0.21\linewidth]{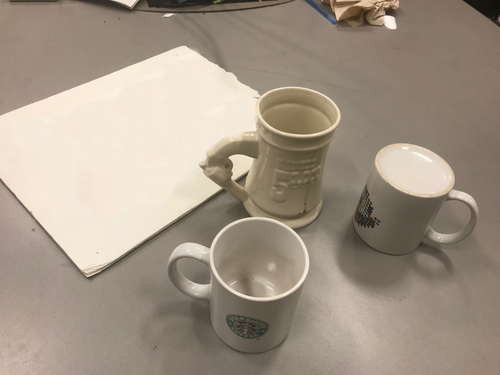}%
\includegraphics[height=0.21\linewidth]{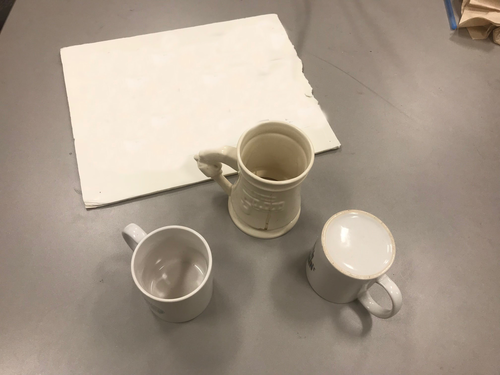}%
\includegraphics[height=0.21\linewidth]{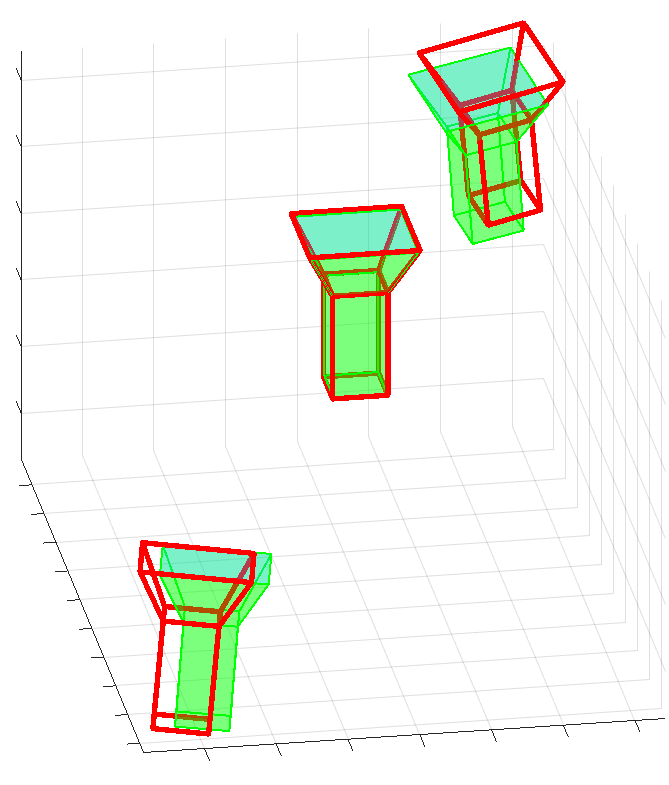}\\
\includegraphics[height=0.21\linewidth]{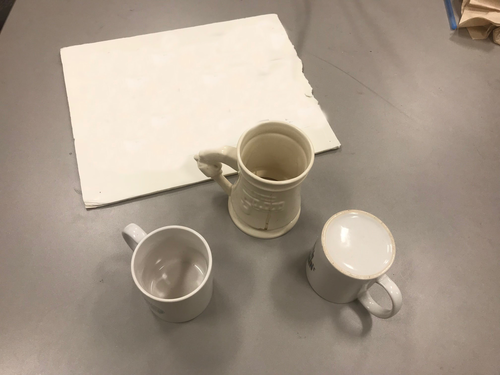}%
\includegraphics[height=0.21\linewidth]{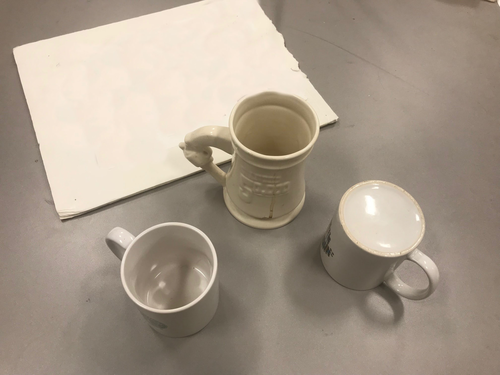}%
\includegraphics[height=0.21\linewidth]{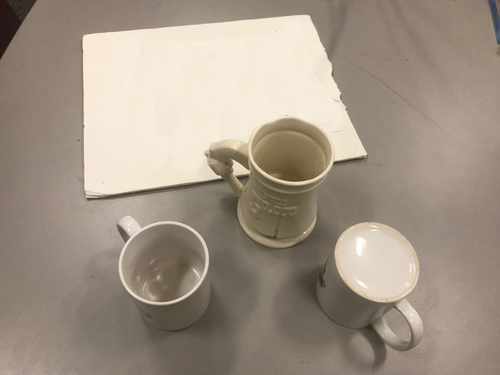}%
\includegraphics[height=0.21\linewidth]{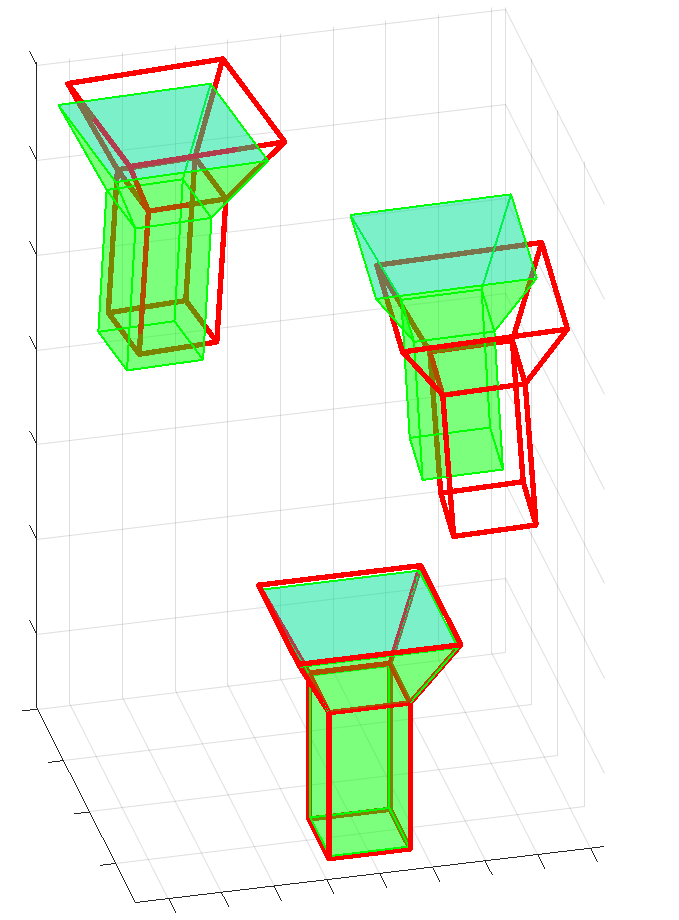}\\
\includegraphics[height=0.21\linewidth]{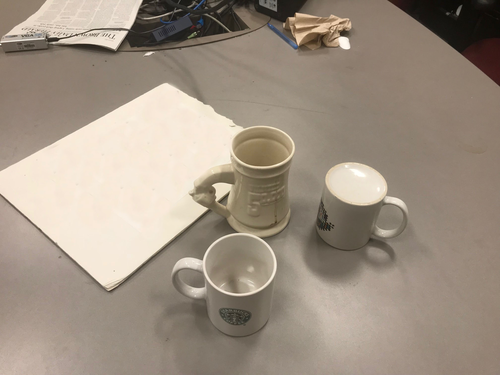}%
\includegraphics[height=0.21\linewidth]{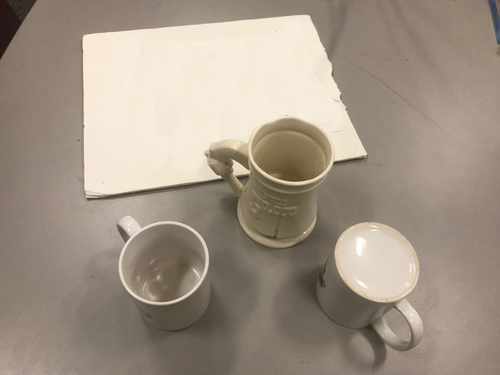}%
\includegraphics[height=0.21\linewidth]{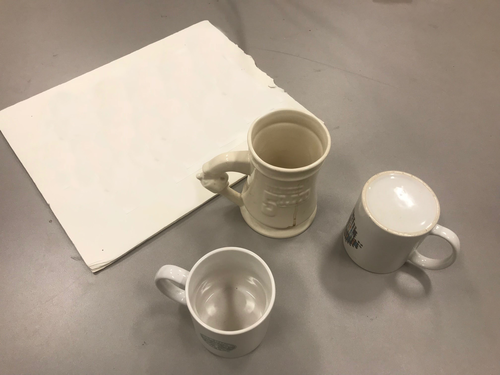}%
\includegraphics[height=0.21\linewidth]{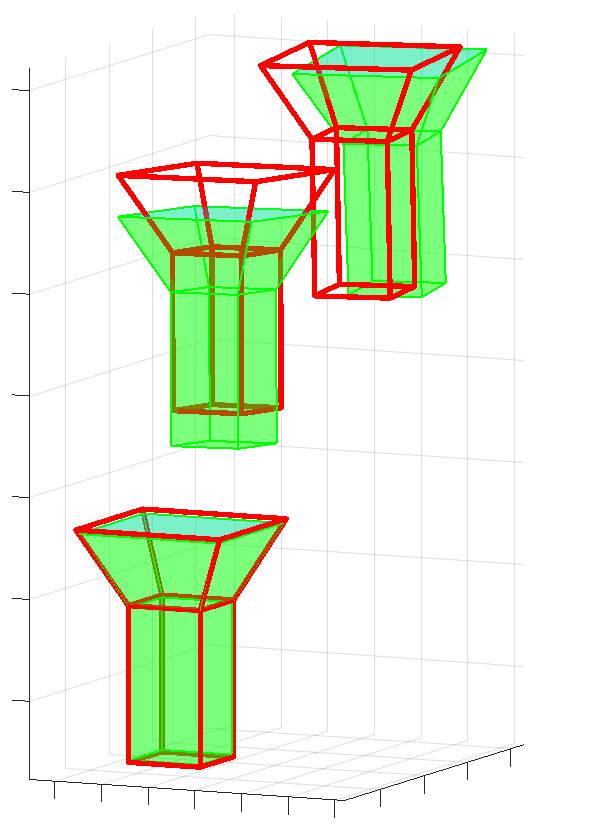}
   \caption{Trifocal relative pose results for a dataset comprising three mugs,
     which is challenging for traditional \sfm. Each shows images 
   with ground truth (green) and estimated poses (red outline).}
  \label{fig:cups}
\end{figure}

A quantitative comparison with other trifocal methods reported
in~\cite{Julia:Monasse:SIVT2017} on datasets Fountain P-11 and Herz-Jesu-P8 is
shown in Table~\ref{table:trifocalCompare} for the Chicago problem,
illustrating that our method is comparable to or better than other trifocal methods. 
\begin{table}[h]
    \centering
    \begin{tabular}{ccc}
      \textbf{Methods} & $R$ error (deg) & $T$ error (deg) \\
        \hline
        TFT-L & 0.292 & 0.638\\
        TFT-R & 0.257 & 0.534\\
        TFT-N & 0.337 & 0.548\\
        TFT-FP & 0.283 & 0.618\\
        TFT-PH & 0.269 & 0.537\\
        \textbf{MINUS (Ours)} &\textbf{ 0.137 }& \textbf{0.673}\\
    \end{tabular}
    \caption{Pose error of our method \emph{vs.}\ other trifocal methods. Our method has better rotation error and comparable translation error.}
    \label{table:trifocalCompare}
\end{table}

\section{Conclusion} 
\noindent We presented a new calibrated trifocal minimal problem, an analysis
demonstrating its number of solutions, and a practical solver by specializing
computation techniques from numerical algebraic geometry. We showed our approach
generalizes to characterize and solve a similar difficult minimal problem with mixed
points and lines in three views. Both problems are representative of a myriad
of similar minimal problems in multiple
views analyzed with the techniques initiated with the present work~\cite{duff2019plmp,Duff:Kohn:Leykin:Pajdla:PLMP:2019,Duff:Korotynskiy:Pajdla:Regan:Arxiv2021,Fabbri:NSF:Highlight2020,Fabbri:Kimia:NSF:Grant}.
The increased ability to solve trifocal problems with points and lines is key to future work on
broader problems appearing when observing general \textsc{3d} curves, \eg, in scenes
without enough point features, using differential geometry~\cite{Fabbri:PHD:2010,Fabbri:Kimia:IJCV2016}. As a first step,
our trifocal solvers have been partially integrated into the \sfm\ 
pipeline \textsc{o}pen\textsc{mvg}~\cite{Moulon:etal:OpenMVG:IWRRPR2016} for use with \sift\
orientation, and we are working to integrate and verify their robustness
advantages also with \colmap.
Our ``100 lines of
custom-made solution tracking code'' 
have also already been employed to build
practical, fast solvers~\cite{Hruby:Duff:Leykin:Pajdla:CVPR2022} for other minimal problems which have not been efficiently
solved with Gr\"obner bases~\cite{Larsson-CVPR-2018}.

\appendices
\section{Cleveland Minimal Problem: Formulations}

\noindent\textbf{Essential Parametric Equations.}
Explicitly derived from first principles (\ie, point projection), the free line correspondence
equations are analogous to~\eqref{eq:tangent:final:v3:parametric} but with a
translation term since the base points $\x_v$ in principle do not correspond. This enables
direct comparison to the incident line case and generalizes well to
arbitrary smooth curves~\cite{Fabbri:Kimia:IJCV2016}. We provide a condensed derivation. Suppose each given image line is 
parametrized as $\y_v = \x_v +\delta_v\dir_v$ for some $\x_v$ that need not
correspond. Using elemental point projection we require $\exists\,\X_0,\D$
such that, $\forall \epsilon$, $\exists\,\delta_v$ such that $\Y = \X_0 + \epsilon \D$ projects to
corresponding $\y_v$ in $\bl_v$. We thus
have~\eqref{eq:points:simplified:notation} but for $\y_v$ and $\beta_v$.
Using the parametric form of $\bl_1$,
\begin{equation}\label{eq:free:intermediate:line:arametric}
  \beta_v\x_v + \beta_v\delta_v\dir_v = \srot_v(\beta_1 \x_1 + \delta_1\dir_1) +
  \stransl_v,
\end{equation}
which reduces to~\eqref{eq:tangent:intermediate:v3:parametric} when the $\x_v$ match.
Together with~\eqref{eq:points:simplified:notation} it forms the essential
parametric equations for Cleveland.\\[1em]
\noindent\textbf{Equations based on Minors.}
We may describe the Cleveland problem with equations based on minors in an
analogous way to Chicago. 
We are given lines $\bl_{1v},\ldots , \bl_{4v}$ for $v\in \{ 1,2,3 \}$, where
$\bl_{1v},\bl_{2v},\bl_{3v}$ are pairwise lines and $\bl_{4v}$ is
the free line. We
enforce line correspondences for matrices $\LL_1,\ldots , \LL_4$ 
and common point constraints by requiring that the
$4\times 4$ minors of $[\LL_1\mid \LL_2]$, $[\LL_1\mid \LL_3]$, and $[\LL_2\mid \LL_3]$ all vanish, 
leading to $64 = 4*\binom{4}{3} + 3  \binom{6}{4}$ equations total in $14$ unknowns.
 
\section*{Acknowledgments}
The authors would like to thank Juliana Santos Barcellos Chagas Ventura
for helping with performance experiments, Figs.~\ref{fig:niter} and~\ref{fig:nrep}.

% Interesting refs:
% 
%\nocite{
% Line-based SfM: Bartoli:Sturm:CVIU2005,Salaun:etal:3DV2017
% Point+line: Salaun:etal:ECCV2016
% Zhao:Kneip:etal:PAMI2019,Camposeco:Pollefeys:etal:ECCV2016}
\ifCLASSOPTIONcaptionsoff
  \newpage
\fi
\Urlmuskip=0mu plus 1mu
{\small
% Generated by IEEEtran.bst, version: 1.14 (2015/08/26)

%\bibliographystyle{./IEEEtran}
%Include ieee-config below to control how refs appear
%\bibliography{multiview,rf-multiview,Pajdla,bib-fabbri/fabbri-multiview,bib-fabbri/kimia-multiview,bib-fabbri/fabbri-video,bib-fabbri/pajdla}
}

% biography section
%% 
% If you have an EPS/PDF photo (graphicx package needed) extra braces are
% needed around the contents of the optional argument to biography to prevent
% the LaTeX parser from getting confused when it sees the complicated
% \includegraphics command within an optional argument. (You could create
% your own custom macro containing the \includegraphics command to make things
% simpler here.)
%\begin{IEEEbiography}[{\includegraphics[width=1in,height=1.25in,clip,keepaspectratio]{mshell}}]{Michael Shell}
% or if you just want to reserve a space for a photo:

\begin{IEEEbiography}[{\includegraphics[width=1in,height=1.25in,clip,keepaspectratio]{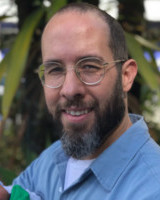}}]{Ricardo Fabbri}
is a tenured professor of 
computational modeling at the Rio de Janeiro State University, 
Brazil. He holds a Ph.D.\ from Brown University and has worked at
Google in offline indexing from images.
He has recently organized the Algebraic Vision Research
Cluster at ICERM, on multiple view geometry and differential geometry. He is 
currently interested in mathematics and diffusion maps
for the photogrammetric modeling of the water surface.
\end{IEEEbiography}
\begin{IEEEbiography}[{\includegraphics[width=1in,height=1.25in,clip,keepaspectratio]{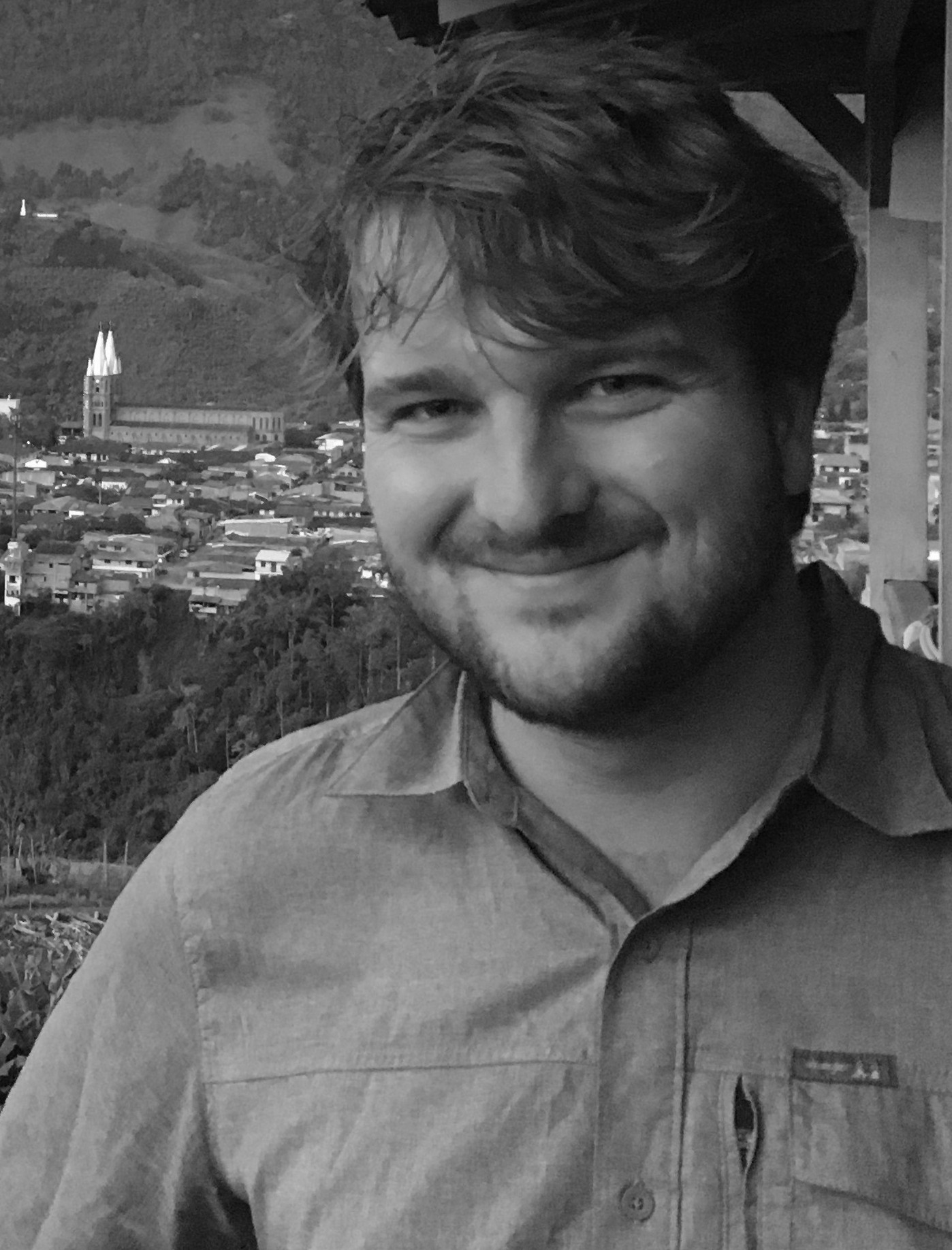}}]{Timothy~Duff} is an NSF Mathematical Sciences Postdoctoral Research Fellow and Postdoctoral Scholar at the University of Washington.
He previously completed his PhD at Georgia Tech in Algorithms, Combinatorics, and Optimization, supervised by Anton Leykin.
He is a member of SIAM and the AMS.
 \end{IEEEbiography}
\begin{IEEEbiography}[{\includegraphics[width=1in,height=1.25in,clip,keepaspectratio]{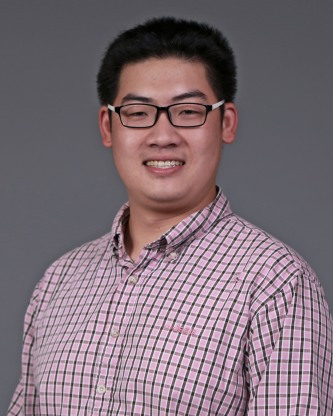}}]{Hongyi~Fan}
received the M.Sc. in computer engineering from Brown University in 2016, where he is
currently pursuing the Ph.D. degree. His research
interests include computer vision, \textsc{3d} reconstruction, minimal problems and their application. 
\end{IEEEbiography}
\begin{IEEEbiography}[{\includegraphics[width=1in,height=1.25in,clip,keepaspectratio]{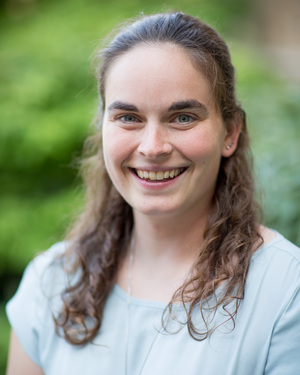}}]{Margaret~H.~Regan}
received her PhD degree in applied mathematics from the University of Notre Dame, USA, in 2020.  She is currently an Elliott Assistant Research Professor with the Department of Mathematics at Duke University.  Her research interests include numerical algebraic geometry, commutative and homological algebra, and their applications to various fields within science and engineering. She is a member of SIAM and the AMS.
\end{IEEEbiography}
\begin{IEEEbiography}[{\includegraphics[width=1in,height=1.25in,clip,keepaspectratio]{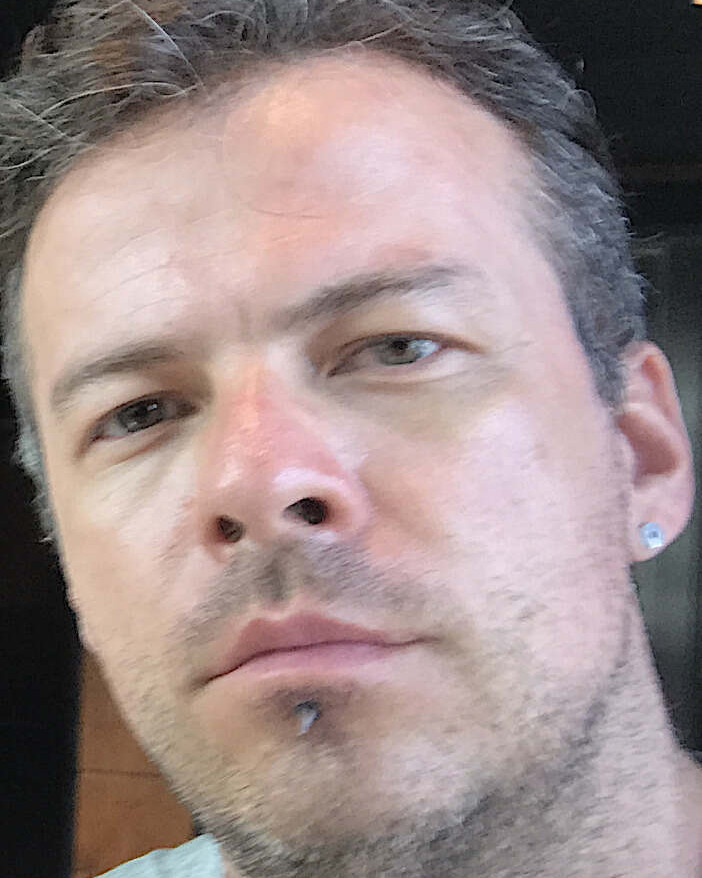}}]{David~da~C.~de~Pinho}
  is a Professor at Fluminense Federal Institute, having received his Ph.D. in
  Oil Exploration Engineering from UENF - Brazil (2020), a MSc. in Computational
  Modeling and \textsc{3d} Computer Vision from the Polytechnic Institute at UERJ -
  Brazil, and an undergraduate degree in Mathematics. 
\end{IEEEbiography}
\begin{IEEEbiography}[{\includegraphics[width=1in,height=1.25in,clip,keepaspectratio]{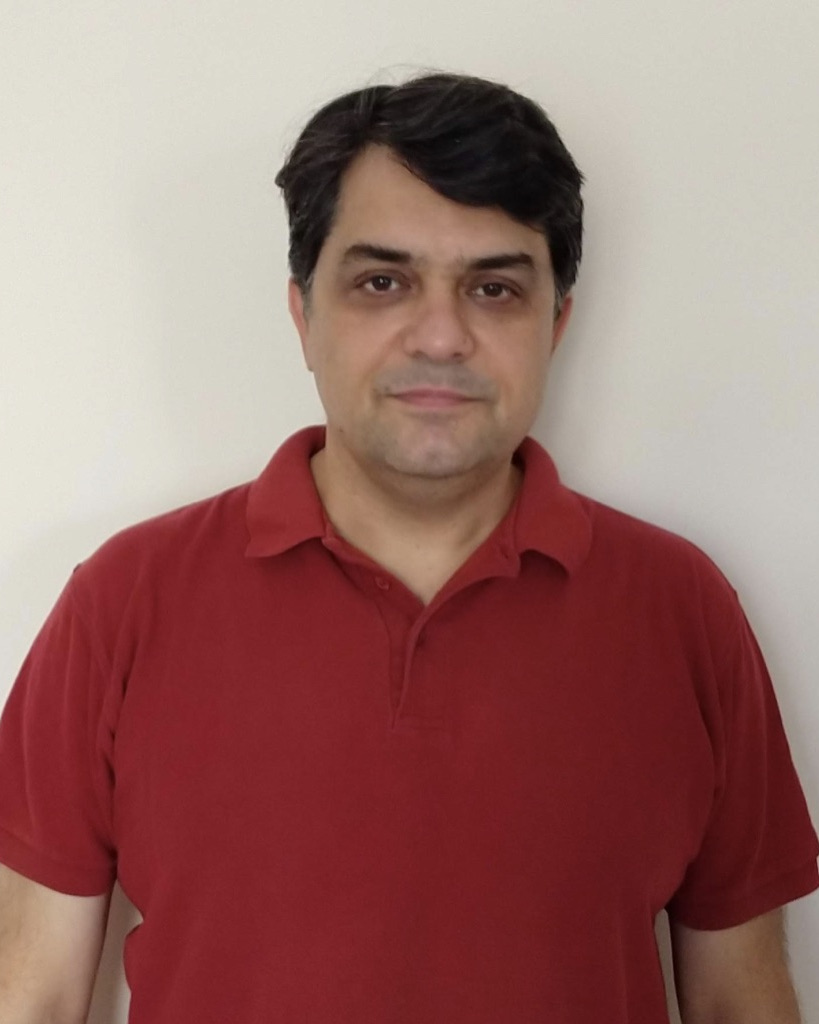}}]{Elias~Tsigaridas}
received his PhD degree from the National Kapodistrian University of Athens, Greece, in 2006. Since 2012 is a permanent senior scientist at Inria Paris; since 2020 he also holds a part-time teaching position at \'Ecole Polytechnique. He research interests lie at the intersection of computational algebra and geometry
and their application in science and engineering. 
\end{IEEEbiography}
\begin{IEEEbiography}[{\includegraphics[width=1in,height=1.25in,clip,keepaspectratio]{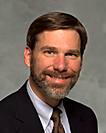}}]{Charles~W.~Wampler}
(Ph.D., Stanford University, Mechanical Engineering) is a Sr.
Technical Fellow at the General Motors R\&D Center, Warren, Michigan, USA, where he has been employed since 1985, currently as a member of the Battery Cell Systems Research Lab.  He is a Fellow of IEEE, ASME, and SIAM and is a member of the National Academy of Engineering. 
\end{IEEEbiography}
\begin{IEEEbiography}[{\includegraphics[width=1in,height=1.25in,clip,keepaspectratio]{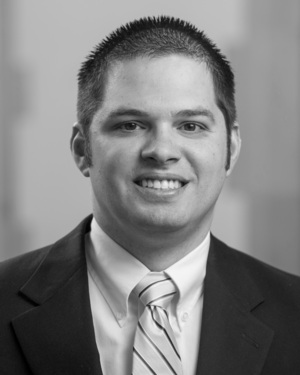}}]{Jonathan D. Hauenstein}
(Ph.D. University of Notre Dame, Mathematics).  He held positions at the Fields Institute,
Texas A\&M University, Mittag-Leffler Institute, North Carolina State
University, ICERM and the Simons Institute before returning to the University of
Notre Dame as a professor in 2014. His research includes numerical algebraic geometry and its applications involving a variety
of fields in science and~engineering.
\end{IEEEbiography}
\begin{IEEEbiography}[{\includegraphics[width=1in,height=1.25in,clip,keepaspectratio]{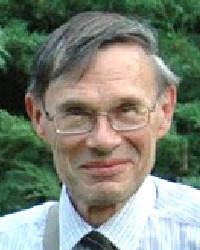}}]{Peter J.\ Giblin}
is Professor of Mathematics Emeritus at the University of Liverpool, UK.
His current research interests are in applications of singularity theory to problems of differential
geometry. He also works with local high schools in supervising students in advanced
project work in mathematics. In the queen’s birthday honours 2018 he was awarded
an OBE for services to mathematics.
\end{IEEEbiography}

% insert where needed to balance the two columns on the last page with
% biographies
%\newpage

\begin{IEEEbiography}[{\includegraphics[width=1in,height=1.25in,clip,keepaspectratio]{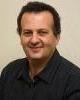}}]{Benjamin B.\ Kimia}
is Professor of Engineering at Brown Unviersity and
holds a Ph.D.\ in Electrical and
Computer Engineering from McGill University. His research includes
computer vision and medical imaging inspired by neurophysiology and psychophysics.
His expertise includes the representation of shape in \textsc{2d}, \textsc{3d} and multiview 
reconstruction, applied to large image databases,
archaeology, assistance for the blind, odometry, and image-guided treatments. 
\end{IEEEbiography}
\begin{IEEEbiography}[{\includegraphics[width=1in,height=1.25in,clip,keepaspectratio]{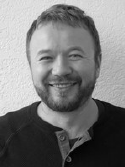}}]{Anton~Leykin}
  received his PhD from the University of Minnesota in Twin Cities. He works in
  nonlinear algebra with a view towards algorithms and applications. A large
  part of his recent work concerns homotopy continuation methods, which includes
  both theory and implementation in Macaulay2 computer algebra system. He is a
  member of the ACM, AMS, and SIAM. Google Scholar: \url{scholar.google.com/citations?user=ztNR6w8AAAAJ}
 \end{IEEEbiography}
\begin{IEEEbiography}[{\includegraphics[width=1in,height=1.25in,clip,keepaspectratio]{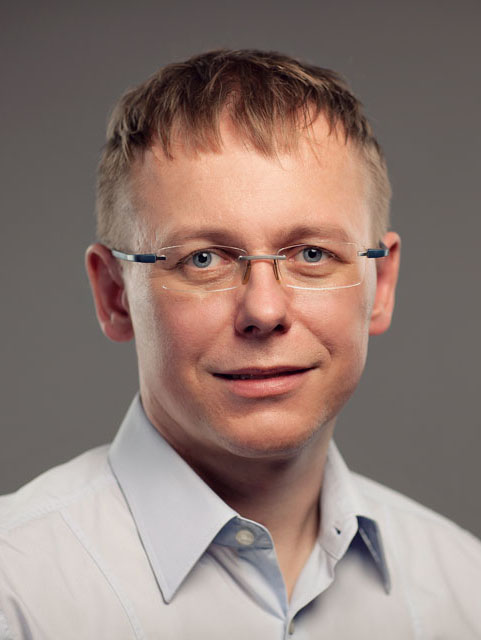}}]{Tomas~Pajdla}
(Member, IEEE) received the M.Sc.\ and Ph.D.\ from the Czech Technical University, in Prague. He works in geometry and algebra of computer vision and robotics, with emphasis on non-classical cameras, 3D reconstruction, and industrial vision. He contributed to the epipolar geometry of panoramic cameras, non-central camera models, generalized epipolar geometries, and to developing solvers for minimal problems in structure from motion. \end{IEEEbiography}

% You can push biographies down or up by placing
% a \vfill before or after them. The appropriate
% use of \vfill depends on what kind of text is
% is flush with the other column.
%\enlargethispage{-5in}

\end{document}